\newtheorem{definition}{Definition}
\newtheorem{proposition}{Proposition}
\newtheorem*{informalthm}{Informal Theorem}
\def\eqref#1{equation~\ref{#1}}
\def\1{\bm{1}}
\def\va{{\bm{a}}}
\def\vb{{\bm{b}}}
\DeclareMathAlphabet{\mathsfit}{\encodingdefault}{\sfdefault}{m}{sl}
\SetMathAlphabet{\mathsfit}{bold}{\encodingdefault}{\sfdefault}{bx}{n}
\def\gL{{\mathcal{L}}}
\def\sC{{\mathbb{C}}}
\def\sR{{\mathbb{R}}}
\def\sZ{{\mathbb{Z}}}
\newcommand{\E}{\mathbb{E}}
\newcommand{\R}{\mathbb{R}}
\DeclareMathOperator*{\argmax}{arg\,max}
\DeclareMathOperator*{\argmin}{arg\,min}
\newcommand{\trace}{\mathrm{tr}}
\newcommand{\cha}{\mathrm{char}}
\newcommand{\balpha}{\boldsymbol \alpha}
\newcommand{\bbeta}{\boldsymbol \beta}
\newcommand{\bgamma}{\boldsymbol \gamma}
\newcommand{\CO}[1]{\textcolor{cyan}{(\textbf{Costin:} #1)}}
\newcommand{\sk}[1]{\textcolor{magenta}{(\textbf{sk:} #1)}}
\title{Feature emergence via margin maximization: case studies in algebraic tasks}
\author{Depen Morwani, Benjamin L. Edelman\footnotemark[1], Costin-Andrei Oncescu\footnotemark[1], \\ Rosie Zhao\thanks{These authors contributed equally to this work.}, Sham Kakade \\
Harvard University \\
\normalsize{\texttt{\{dmorwani,bedelman,concescu,rosiezhao\}@g.harvard.edu, }} \\
\normalsize{\texttt{sham@seas.harvard.edu}} \\
}
\date{}
\begin{document}

\maketitle

\begin{abstract}
Understanding the internal representations learned by neural networks is a cornerstone challenge in the science of machine learning. While there have been significant recent strides in some cases towards understanding \emph{how} neural networks implement specific target functions, this paper explores a complementary question -- \emph{why} do networks arrive at particular computational strategies? 
Our inquiry focuses on the algebraic learning tasks of modular addition, sparse parities, and finite group operations. Our primary theoretical findings analytically characterize the features learned by stylized neural networks for these algebraic tasks. Notably, our main technique demonstrates how the principle of margin maximization alone can be used to fully specify the features learned by the network. 
Specifically, we prove that the trained networks utilize Fourier features to perform modular addition and employ features corresponding to irreducible group-theoretic representations to perform compositions in general groups, aligning closely with the empirical observations of \cite{nanda2023progress} and \cite{chughtai2023toy}.
More generally, we hope our techniques can help to foster a deeper understanding of why neural networks adopt specific computational strategies.
\end{abstract}

\newtheorem{theorem}{Theorem}
\newtheorem*{theorem*}{Theorem}
\newtheorem{lemma}[theorem]{Lemma}
\newtheorem*{lemma*}{Lemma}
\newtheorem*{remark*}{Remark}
\section{Introduction} \label{sec:intro}

Opening the black box of neural networks has the potential to enable safer and more reliable deployments, justifications for model outputs, and clarity on how model behavior will be affected by changes in the input distribution. The research area of mechanistic interpretability \citep{olah2020zoom, elhage2021mathematical, olsson2022context, elhage2022toy} aims to dissect individual trained neural networks in order to shed light on internal representations, identifying and interpreting sub-circuits that contribute to the networks’ functional behavior. Mechanistic interpretability analyses typically leave open the question of \emph{why} the observed representations arise as a result of training.

Meanwhile, the theoretical literature on inductive biases in neural networks \citep{soudry2018implicit, shalev2014understanding, vardi2023implicit} aims to derive general principles governing which solutions will be preferred by trained neural networks—in particular, in the presence of underspecification, where there are many distinct ways a network with a given architecture could perform well on the training data. Most work on inductive bias in deep learning is motivated by the question of understanding why networks  generalize from their training data to unobserved test data. It can be non-obvious how to apply the results from this literature to understand what solution will be found when a particular architecture is trained on a particular type of dataset.

In this work, we show that the empirical findings of \cite{nanda2023progress} and \cite{chughtai2023toy}, about the representations found by networks trained to perform finite group operations, can be analytically explained by the inductive bias of the regularized optimization trajectory towards \emph{margin maximization}. Informally, the network maximizes the margin if it attains a given confidence level on all the points in the dataset, with the smallest total parameter norm possible.
Perhaps surprisingly, the margin maximization property alone --- typically used for the study of generalization --- is sufficient to \emph{comprehensively and precisely} characterize the richly structured features that are actually learned by neural networks in these settings. Let's begin by reviewing the case of learning modular addition with neural networks, first studied in~\cite{power2022grokking} in their study of ``grokking''.



\begin{figure}[t]
\centering
\begin{subfigure}{.45\textwidth}
  \centering
  \includegraphics[width=\textwidth]{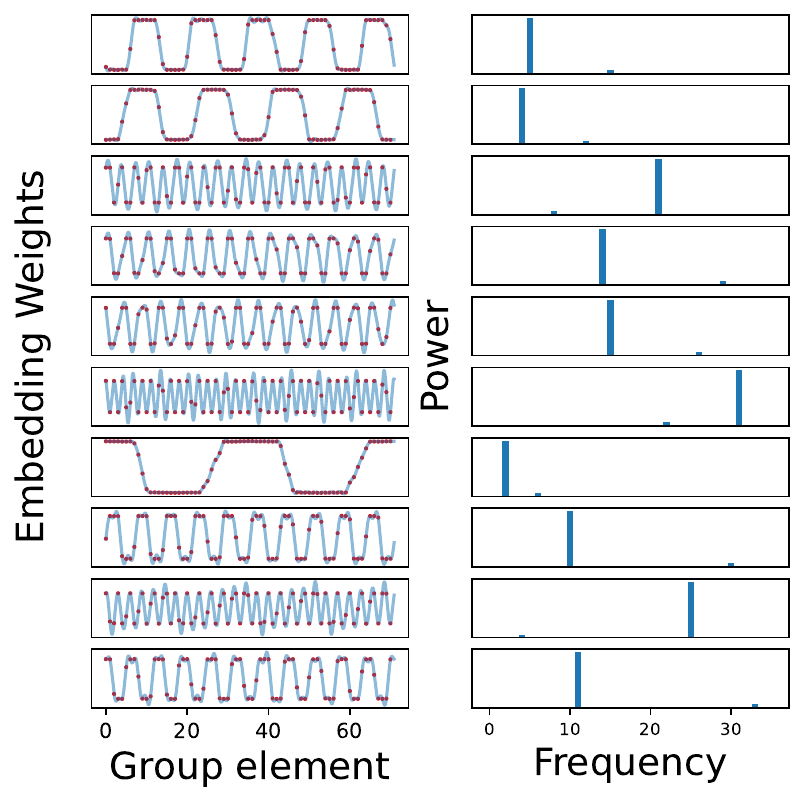}
   \caption{ReLU activation}
\end{subfigure}
\begin{subfigure}{.45\textwidth}
  \centering
  \includegraphics[width=\textwidth]{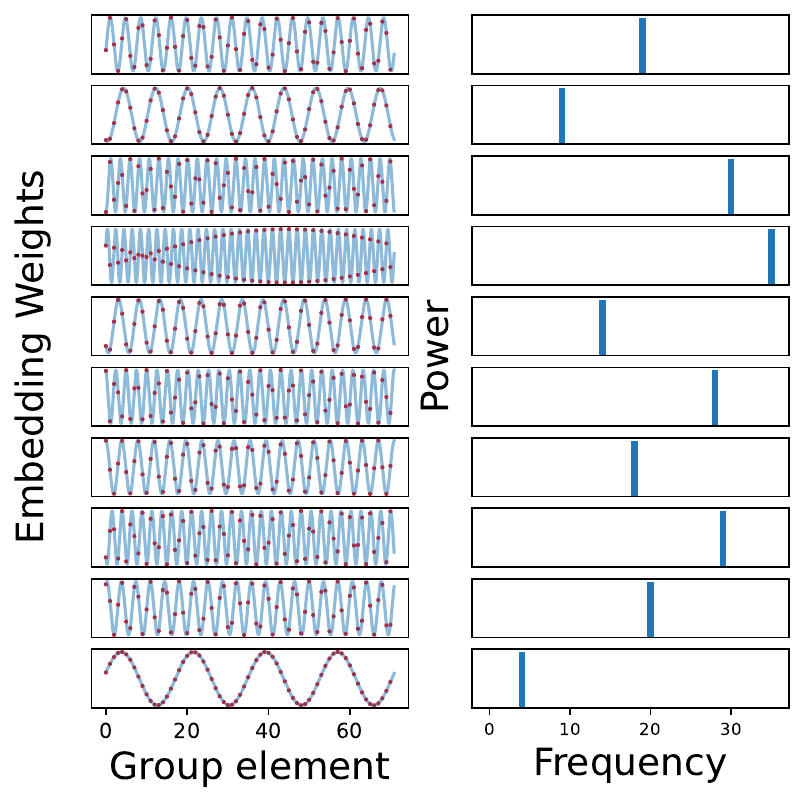}
    \caption{Quadratic activation}
\end{subfigure}
\begin{subfigure}{.5\textwidth}
  \centering
  \includegraphics[width=\textwidth]{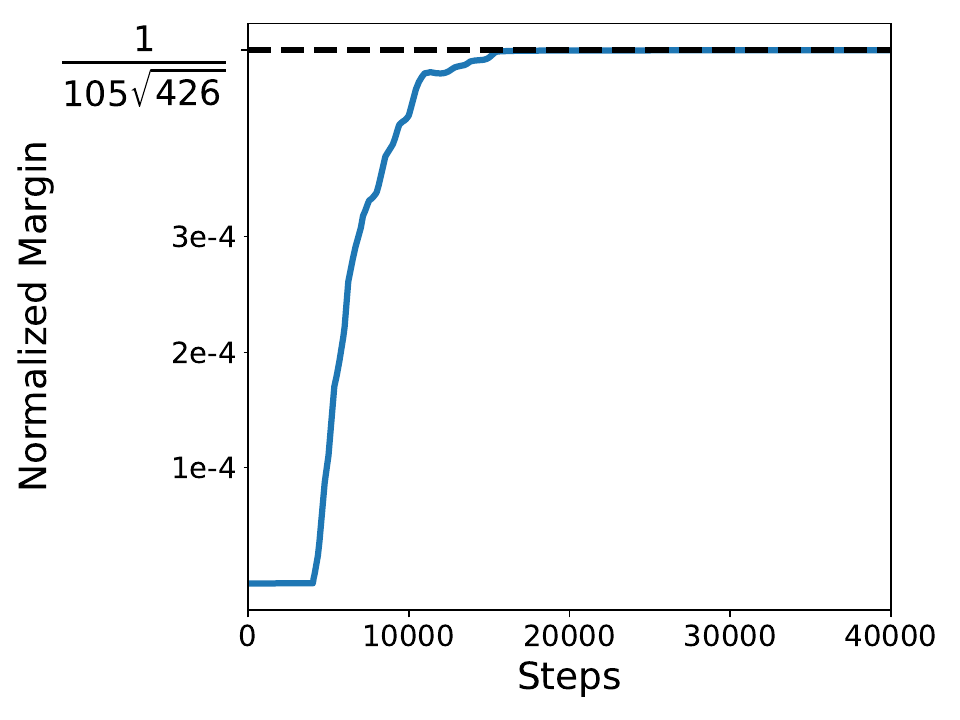}
    \caption{Normalized $L_{2,3}$ Margin}
\end{subfigure}         
\caption{\label{fig:fourier}(a) Final trained embeddings and their Fourier power spectrum for a 1-hidden layer ReLU network trained on a mod-71 addition dataset with $L_2$ regularization. Each row corresponds to an arbitrary neuron from the trained network. The red dots represent the actual value of the weights, while the light blue interpolation is obtained by finding the function over the reals with the same Fourier spectrum as the weight vector. (b) Similar plot for 1-hidden layer quadratic activation, trained with $L_{2,3}$ regularization (Section \ref{sec:prelim}) (c) For the quadratic activation, the network asymptotically reaches the maximum $L_{2,3}$ margin predicted by our analysis.}
\label{fig:intro}
\end{figure}


\paragraph{Nanda et al.'s striking observations.}
\citet{nanda2023progress} investigated the problem of how neural networks learn  modular addition (using a 1-layer transformer); they consider the problem of computing $a+b$ mod $p$, where $p$ is a prime number. The findings were unexpected and intriguing: SGD not only reliably solves this problem (as originally seen in ~\cite{power2022grokking}) but also consistently learns to execute a particular algorithm, as illustrated by the learned embedding weights in Figure~\ref{fig:fourier}. This geometric algorithm simplifies the task to composing integer rotations around a circle \footnote{The algorithm identified by \cite{nanda2023progress} can be seen as a real-valued implementation of the following procedure: Choose a fixed $k$. Embed $a \mapsto e^{2\pi i k a}$, $b \mapsto e^{2 \pi i k b}$, representing rotations by $ka$ and $kb$. Multiply these (i.e. compose the rotations) to obtain $e^{2 \pi i k(a+b)}$. Then, for each $c \in \sZ_p$, multiply by $e^{-2\pi i k c}$ and take the real part to obtain the logit for $c$. Moreover, averaging the result over neurons with different frequencies $k$ results in destructive interference when $c \neq a + b$, accentuating the correct answer.}. 

The algorithm above fundamentally relies on the following identity: for any $a, b \in \sZ_p$ and $k \in \sZ_p \setminus \{0\}$, 
\[
(a+b) \textrm{ mod } p = \argmax_{c\in  \sZ_p} \left\{\cos\left(\frac{2\pi k(a+b-c)}{p}\right)\right\}.
\]
This identity also leads to other natural algorithms (still relying on sinusoidal features) that are generally implemented by neural networks, as shown in \cite{zhong2023clock}. 

These findings prompt the question: why does the network consistently prefer such Fourier-based circuits, amidst other potential circuits capable of performing the same modular addition function?
\paragraph{Our Contributions.}
\begin{itemize}
    \item We formulate general techniques for analytically characterizing the maximum margin solutions for tasks exhibiting symmetry.
    \item For sufficiently wide one-hidden layer MLPs with quadratic activations, we use these techniques to characterize the structure of the weights of max-margin solutions for certain algebraic tasks including modular addition, sparse parities and general group operations.
    \item  We empirically validate that neural networks trained using gradient descent with small regularization approach the maximum margin solution (Theorem \ref{thm:reg}), and the weights of trained networks match those predicted by our theory (Figure \ref{fig:fourier}). 
\end{itemize}



Our theorem for modular addition shows that Fourier features are indeed the global maximum margin solution:

\begin{informalthm}[Modular addition]    Consider a single hidden layer neural network of width $m$ with $x^2$ activations trained on the modular addition task (modulo $p$). For $m \geq 4(p-1)$, any maximum margin solution for the full population dataset satisfies the following:
    \begin{itemize}
        \item For every neuron, there exists a frequency such that the Fourier spectra of the input and output weight vectors are supported only on that frequency.
        \item There exists at least one neuron of each frequency in the network.
    \end{itemize}
\end{informalthm}

Note that even with this activation function, there are solutions that fit all the data points, but where the weights do not exhibit any sparsity in Fourier space---see Appendix~\ref{sec:dumb_cons} for an example construction. Such solutions, however, have lower margin and thus are not reached by training.

In the case of $k$-sparse parity learning with an $x^k$-activation network, we show margin maximization implies that the weights assigned to all relevant bits are of the same magnitude, and the sign pattern of the weights satisfies a certain condition.

For learning on the symmetric group (or other groups with real representations), we use the machinery of representation theory \citep{kosmann2010groups} to show that learned features correspond to the irreducible representations of the group, as observed by \cite{chughtai2023toy}.

Two closely related works to ours are  \cite{gromov2023grokking} and \cite{bronstein2022inductive}. \cite{gromov2023grokking} provides an analytic construction of various two-layer quadratic networks that can solve the modular addition task. The construction used in the proof of Theorem \ref{thm:cyclic} is a special case of the given scheme. \cite{bronstein2022inductive} shows that all max margin solutions of a one-hidden-layer ReLU network (with fixed top weights) trained on read-once DNFs have neurons which align with clauses. However, their proof technique for characterizing max margin solutions is very different. For more details, refer to Appendix \ref{sec:related}.

\textbf{Paper organization:} In section~\ref{sec:intro}, we delineate our contributions and discuss a few related works. In section~\ref{sec:prelim}, we state preliminary definitions. In section~\ref{sec:approach}, we sketch our theoretical methodology, and state general lemmas which will be applied in all three case studies. In sections~\ref{sec:cyclic},~\ref{sec:parity},~and~\ref{sec:group}, we use the above lemmas to characterize the max margin features for the modular addition, sparse parity and group operation tasks respectively. We discuss and conclude the paper in section~\ref{sec:discussion}. Further related work, full proofs, hyperparameter choices, and additional experimental results can be found in the Appendix.

\section{Preliminaries} \label{sec:prelim}
\begin{figure}[t]
\centering
  \includegraphics[width=\textwidth]{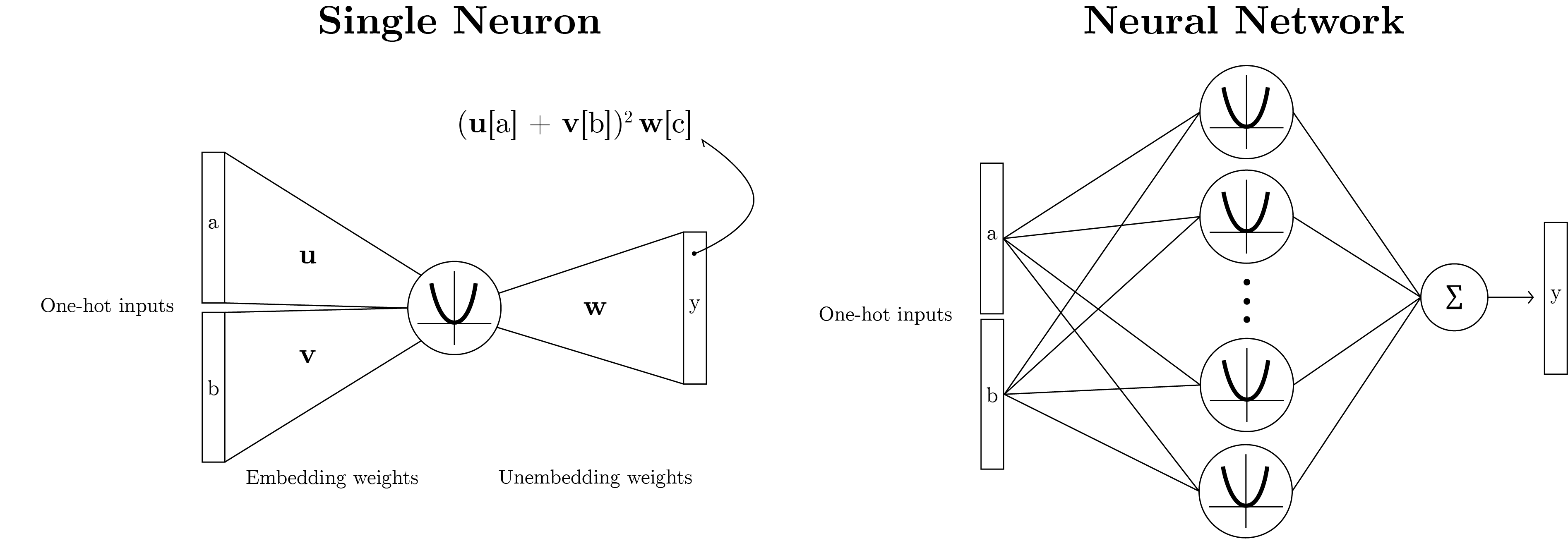}
\caption{An illustration of an individual neuron $\phi(\{u, v, w\}, a, b)$ (left) and the resulting one hidden layer neural network $f(\theta, a, b)$ (right) with quadratic activations.}
\label{fig:nn_diagram}
\end{figure}

In this work, we will consider one-hidden layer neural networks with homogeneous polynomial activations, such as $x^2$, and no biases. The network output for a given input $x$ will be represented as $f(\theta, x)$, where $\theta \in \Theta$ represents the parameters of the neural network. The homogeneity constant of the network is defined as a constant $\nu$ such that for any scaling factor $\lambda > 0$, $f(\lambda \theta, x) = \lambda^\nu f(\theta, x)$ for all inputs $x$.

In the case of $1$-hidden layer networks, $f$ can be further decomposed as: \\$f(\theta, x)=\sum_{i=1}^m{\phi(\omega_i, x)}$, where $\theta = \{\omega_1,\ldots,\omega_m\}$, $\phi$ represents an individual neuron within the network, and $\omega_i \in \Omega$ denotes the weights from the input to the $i$\textsuperscript{th} neuron and from the neuron to the output. $\theta = \{\omega_1,\ldots,\omega_m\}$ is said to have directional support on $\Omega' \subseteq \Omega$ if for all $i \in \{1,\ldots,m\}$, either $\omega_i = 0$ or $\lambda_i \omega_i \in \Omega'$ for some $\lambda_i > 0$.
In this work, we will be primarily concerned with networks that have homogeneous neurons, i.e, $\phi(\lambda \omega_i, x) = \lambda^\nu \phi(\omega_i, x)$ for any scaling constant $\lambda > 0$.

For Sections \ref{sec:cyclic} and \ref{sec:group} corresponding to cyclic and general finite groups respectively, we will consider neural networks with quadratic activations (Figure~\ref{fig:nn_diagram}). A single neuron will be represented as $\phi(\{u,v,w\}, x^{(1)}, x^{(2)}) = (u^\top x^{(1)} + v^\top x^{(2)})^2 w$, where $u,v,w \in \mathbb{R}^d$ are the weights associated with a neuron and $x^{(1)}, x^{(2)} \in \mathbb{R}^d$ are the inputs provided to the network (note that $\phi(\{u,v,w\}, x^{(1)}, x^{(2)}) \in \mathbb{R}^d$). For these tasks, we set $d = |G|$, where $G$ refers to either the cyclic group or a general group. We will also consider the inputs $x^{(1)}$ and $x^{(2)}$ to be one-hot vectors, representing the group elements being provided as inputs. Thus, for given input elements $(a,b)$, a single neuron can be simplified as $\phi(\{u,v,w\}, a, b) = (u_a + v_b)^2 w$, where $u_a$ and $v_b$ represent the $a^{th}$ and $b^{th}$ component of $u$ and $v$ respectively. Overall, the network will be given by
\[f(\theta, a, b) = \sum_{i=1}^m \phi(\{u_i, v_i, w_i\}, a, b),\]
with $\theta = \{ u_i, v_i, w_i \}_{i=1}^m$ (note that $f(\theta, a, b) \in \mathbb{R}^d$) .

For Section \ref{sec:parity}, we will consider the $(n,k)$-sparse parity problem, where the parity is computed on $k$ bits out of $n$. For this task, we will consider a neural network with the activation function $x^k$. A single neuron within the neural network will be represented as $\phi(\{u,w\}, x) = (u^\top x)^k w$, where $u \in \mathbb{R}^n$, $w \in \mathbb{R}^2$ are the weights associated with a neuron and $x \in \mathbb{R}^n$ is the input provided to the network. The overall network will represented as
\[f(\theta, x) = \sum_{i=1}^m \phi(\{u_i, w_i\}, x),\]
where $\theta = \{u_i, w_i\}_{i=1}^m$.

For any vector $v$ and $k \geq 1$, $\| v \|_k$ represents $\left(\sum |v_i|^k\right)^{1/k}$.
For a given neural network with parameters $\theta = \{ \omega_i \}_{i=1}^m$, the $L_{a,b}$ norm of $\theta$ is given by $\| \theta \|_{a,b} = \left(\sum_{i=1}^m \| \omega_i \|_a^b\right)^{1/b}$. Here $\{ \omega_i \}$ represents the concatenated vector of parameters corresponding to a single neuron.

\newcommand{\lnorm}[2]{\Vert #1 \Vert_{#2}}
\newcommand{\ltwonorm}[1]{\lnorm{#1}{2}}
\renewcommand{\P}{\mathop{\mathbb{P}}}
\newcommand{\integers}{\mathbb{Z}}
\newcommand{\fhat}{\widehat{f}}
\newcommand{\curlyg}{\mathcal{G}}
\newcommand{\curlym}{\mathcal{M}}
\newcommand{\normal}{\mathcal{N}}
\newcommand{\curlyf}{\mathcal{F}}
\newcommand{\curlyx}{\mathcal{X}}
\newcommand{\curlyy}{\mathcal{Y}}
\newcommand{\curlyh}{\mathcal{H}}
\newcommand{\curlyp}{\mathcal{P}}
\newcommand{\estoff}{\mathsf{Est_{Sq}^{off}}}
\newcommand{\spt}{\mathrm{spt}}
\newcommand{\expct}{\mathop{{}\mathbb{E}}}
\renewcommand{\CO}[1]{\textcolor{orange}{(\textbf{CO:} #1)}}

\section{Theoretical Approach}\label{sec:approach}

Suppose we have a dataset $D \subseteq \curlyx \times \curlyy$, a norm $\lnorm{\cdot}{}$ and a class of parameterized functions $\{f(\theta, \cdot) \mid \theta \in \R^U\}$, where $f : \R^U \times \curlyx \to \R ^ \curlyy$ and $\Theta=\{\lnorm{\theta}{}\leq 1\}$. We define the margin function $g : \R^U \times \curlyx \times \curlyy \to \R$ as being, for a given datapoint $(x, y) \in D$,
\[g(\theta, x, y) = f(\theta, x)[y] - \max\limits_{y' \in \curlyy \setminus \{y\}}{f(\theta, x)[y']}.\]
Then, the margin of the dataset $D$ is given by $h : \R^U \to \R$ defined as
\[ h(\theta) = \min_{(x, y) \in D} g(\theta, x, y).\]
Similarly, we define the normalized margin for a given $\theta$ as $h(\theta/\|\theta\|)$.

We train using the regularized objective
\[\gL_\lambda(\theta) = \frac{1}{|D|} \sum_{(x, y) \in D} l(f(\theta, x), y) + \lambda \lnorm{\theta}{}^r\] 
where $l$ is the cross-entropy loss. Let $\theta_\lambda\in \argmin_{\theta\in\R^U}\gL_\lambda(\theta)$ be a minimum of this objective, and let $\gamma_\lambda=h(\theta_\lambda/\|\theta_\lambda\|)$ be the normalized margin of $\theta_\lambda$. Let $\gamma^*=\max_{\theta\in\Theta}h(\theta)$ be the maximum normalized margin. The following theorem of~\citet{Wei19} states that, when using vanishingly small regularization $\lambda$, the normalized margin of global optimizers of $\gL_\lambda$ converges to $\gamma^*$.
\begin{theorem}[\citet{Wei19}, Theorem 4.1] \label{thm:reg}
    For any norm $\| \cdot \|$, a fixed $r > 0$ and any homogeneous function $f$ with homogeneity constant $\nu  > 0$, if $\gamma^* > 0$, then $\lim_{\lambda\to 0}\gamma_\lambda=\gamma^*$.
\end{theorem}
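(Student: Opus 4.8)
The plan is to reduce everything to showing $\liminf_{\lambda\to 0}\gamma_\lambda \ge \gamma^*$, since the reverse inequality $\gamma_\lambda = h(\theta_\lambda/\|\theta_\lambda\|)\le\gamma^*$ is immediate (the unit vector $\theta_\lambda/\|\theta_\lambda\|$ lies in $\Theta$). To get the lower bound I would attach to the regularized minimizer two scalars: its norm $\rho_\lambda:=\|\theta_\lambda\|$ and the smallest margin it actually attains on $D$, namely $m_\lambda:=\min_{(x,y)\in D} g(\theta_\lambda,x,y)=h(\theta_\lambda)$. Since $f$ is $\nu$-homogeneous, so is $g$, hence $m_\lambda=\rho_\lambda^{\nu}\gamma_\lambda$ and $\gamma_\lambda = m_\lambda/\rho_\lambda^{\nu}$. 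It then suffices to prove the two asymptotic estimates $m_\lambda\ge(1-o(1))\log(1/\lambda)$ and $\rho_\lambda^{\nu}\le(1+o(1))\log(1/\lambda)/\gamma^*$ as $\lambda\to0$; dividing gives $\gamma_\lambda\ge(1-o(1))\gamma^*$. The engine for both is the elementary sandwich for cross-entropy: writing $k=|\curlyy|$ and $g=g(\theta,x,y)$, one has $l(f(\theta,x),y)\le(k-1)e^{-g}$ for every $\theta$, and $l(f(\theta,x),y)\ge\log(1+e^{-g})\ge(\log 2)e^{-g}$ whenever $g\ge0$.

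First I would bound the optimal value $\psi(\lambda):=\gL_\lambda(\theta_\lambda)$ from above by a competitor. Since $h$ is continuous (a finite minimum of continuous functions) and $\{\|\theta\|\le1\}$ is compact, $\gamma^*$ is attained at some $\bar\theta$, which homogeneity forces to satisfy $\|\bar\theta\|=1$. Plugging $c\bar\theta$ into $\gL_\lambda$ for an arbitrary scale $c>0$ and using $g(c\bar\theta,x,y)=c^{\nu}g(\bar\theta,x,y)\ge c^{\nu}\gamma^*$ together with the upper cross-entropy bound gives
\[\psi(\lambda)\le\gL_\lambda(c\bar\theta)\le(k-1)e^{-c^{\nu}\gamma^*}+\lambda c^{r}.\]
Choosing $c$ with $c^{\nu}\gamma^*=\log(1/\lambda)+\log\log(1/\lambda)$ makes the first term $o(\lambda)$ and leaves $\psi(\lambda)\le(1+o(1))\,\lambda\,(\log(1/\lambda)/\gamma^*)^{r/\nu}$; in particular $\psi(\lambda)\to0$. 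Here $\gamma^*>0$ is exactly what lets the first term be driven to zero. Note also that $\psi(\lambda)\to0<\log k=\gL_\lambda(0)$, so $\theta_\lambda\neq0$ for small $\lambda$ and $\gamma_\lambda$ is well defined.

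Next I would read off the two estimates. Dropping the loss term, $\lambda\rho_\lambda^{r}\le\psi(\lambda)$, hence $\rho_\lambda^{r}\le(1+o(1))(\log(1/\lambda)/\gamma^*)^{r/\nu}$, i.e. $\rho_\lambda^{\nu}\le(1+o(1))\log(1/\lambda)/\gamma^*$. Dropping instead the regularizer and keeping only the worst data point, $\frac{1}{|D|}\log(1+e^{-m_\lambda})\le\psi(\lambda)$; since $\psi(\lambda)\to0$ this forces $m_\lambda\to\infty$, so for small $\lambda$ we have $m_\lambda\ge0$ and the lower cross-entropy bound applies, giving $\frac{\log 2}{|D|}e^{-m_\lambda}\le\psi(\lambda)$ and hence $m_\lambda\ge\log(1/\psi(\lambda))-O(1)\ge(1-o(1))\log(1/\lambda)$. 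Dividing, $\gamma_\lambda=m_\lambda/\rho_\lambda^{\nu}\ge(1-o(1))\gamma^*$, so $\liminf_{\lambda\to0}\gamma_\lambda\ge\gamma^*$, which with $\gamma_\lambda\le\gamma^*$ completes the proof.

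The hard part will be making the leading constants in the two estimates of the previous paragraph cancel exactly in the ratio: a lazy choice of competitor scale (e.g. $c^{\nu}\gamma^*=2\log(1/\lambda)$) inflates the bound on $\rho_\lambda^{\nu}$ by a multiplicative constant and only yields $\gamma_\lambda\gtrsim\gamma^*/2^{1+\nu/r}$. The fix is to use the \emph{same} near-critical $c$ — chosen so that the competitor's margin $c^{\nu}\gamma^*$ is only $\log(1/\lambda)+o(\log(1/\lambda))$, just barely enough to annihilate the loss — when controlling $\psi(\lambda)$ in both places; then $\rho_\lambda^{\nu}$ and $m_\lambda$ are both pinned to $(1\pm o(1))\log(1/\lambda)$ up to the single factor $\gamma^*$, which is precisely what survives. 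Everything else — attainment of $\gamma^*$, existence and nonvanishing of $\theta_\lambda$, and the two-sided cross-entropy/exponential comparison — is routine.
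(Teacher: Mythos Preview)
The paper does not prove this statement at all: Theorem~\ref{thm:reg} is simply quoted from \citet{Wei19} and used as a black box to motivate studying max-margin solutions, so there is no ``paper's own proof'' to compare against.

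Your argument is correct and is essentially the standard one. The only steps worth flagging as implicit are (i) existence of the minimizer $\theta_\lambda$ (which follows from coercivity of $\gL_\lambda$ once $\lambda>0$, since the regularizer dominates as $\|\theta\|\to\infty$ and the loss is bounded below), and (ii) continuity of $f$ in $\theta$, which is needed both for attainment of $\gamma^*$ and of $\theta_\lambda$ and which you are tacitly assuming. In the paper's setting $f$ is a polynomial network, so this is harmless. The key quantitative point---choosing the competitor scale so that $c^\nu\gamma^*=\log(1/\lambda)+o(\log(1/\lambda))$ to make the leading constants cancel exactly in the ratio $m_\lambda/\rho_\lambda^\nu$---is handled cleanly, and your two-sided cross-entropy/exponential sandwich is valid with the constants you state.
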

This provides the motivation behind studying maximum margin classifiers as a proxy for understanding the global minimizers of $\gL_\lambda$ as $\lambda\to 0$. Henceforth, we will focus on characterizing the maximum margin solution: $\Theta^* := \argmax_{\theta \in \Theta}{h(\theta)}$.

Note that the maximum margin $\gamma^*$ is given by

\begin{align*}
\gamma^* &= \max_{\theta \in \Theta} \min_{(x,y) \in D} g(\theta, x, y) \\
&= \max_{\theta \in \Theta} \min_{q \in \curlyp(D)} \expct_{(x,y) \sim q} \left[g(\theta, x, y)\right]    
\end{align*}

where $q$ represents a distribution over data points in $D$. The primary approach in this work for characterizing the maximum margin solution is to exhibit a pair $(\theta^*, q^*)$ such that
\begin{equation} \label{eq:q*}
q^* \in \argmin_{q \in \curlyp(D)} \expct_{(x,y) \sim q} \left[g(\theta^*, x, y) \right]    
\end{equation}
\begin{equation} \label{eq:theta*}
\theta^* \in \argmax_{\theta \in \Theta} \expct_{(x,y) \sim q^*} \left[g(\theta, x, y) \right]    
\end{equation}

That is, $q^*$ is one of the minimizers of the expected margin with respect to $\theta^*$ and $\theta^*$ is one of the maximizers of the expected margin with respect to $q^*$. The lemma below uses the max-min inequality \citep{boyd2004convex} to show that exhibiting such a pair is sufficient for establishing that $\theta^*$ is indeed a maximum margin solution. The proof for the lemma can be found in Appendix \ref{app:theoretical}.

\begin{lemma} \label{lem:max-min}
    If a pair $(\theta^*,q^*)$ satisfies Equations \ref{eq:q*} and \ref{eq:theta*}, then 
    \[ \theta^* \in \argmax_{\theta \in \Theta} \min_{(x,y) \in D} g(\theta, x, y) \]
\end{lemma}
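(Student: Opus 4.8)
The plan is to exhibit $(\theta^*,q^*)$ as a saddle point of the function
\[
F(\theta,q) \;:=\; \expct_{(x,y)\sim q}\bigl[g(\theta,x,y)\bigr],\qquad (\theta,q)\in\Theta\times\curlyp(D),
\]
and then read off the conclusion from the two defining inequalities of a saddle point together with the max--min inequality. The first observation I would record is that $F(\theta,\cdot)$ is linear on the simplex $\curlyp(D)$, so its minimum over $\curlyp(D)$ is attained at a vertex, i.e. at a point mass; hence $\min_{q\in\curlyp(D)}F(\theta,q)=\min_{(x,y)\in D}g(\theta,x,y)$ for every $\theta$. Consequently the quantity we must control, $\max_{\theta\in\Theta}\min_{(x,y)\in D}g(\theta,x,y)$, equals $\gamma^*=\max_{\theta\in\Theta}\min_{q\in\curlyp(D)}F(\theta,q)$.

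Next I would observe that Equations~\ref{eq:q*} and~\ref{eq:theta*} are exactly the statements
\[
F(\theta^*,q^*)\le F(\theta^*,q)\ \ \text{for all }q\in\curlyp(D),
\qquad
F(\theta,q^*)\le F(\theta^*,q^*)\ \ \text{for all }\theta\in\Theta,
\]
i.e. $(\theta^*,q^*)$ is a saddle point of $F$. The first inequality gives $\min_{q}F(\theta^*,q)=F(\theta^*,q^*)$, and the second gives $\max_{\theta}F(\theta,q^*)=F(\theta^*,q^*)$.

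Finally I would sandwich $\gamma^*$ between these two values: on the one hand $\gamma^*=\max_\theta\min_q F(\theta,q)\ge\min_q F(\theta^*,q)=F(\theta^*,q^*)$, obtained by plugging in the specific competitor $\theta^*$; on the other hand, using $\min_q F(\theta,q)\le F(\theta,q^*)$ for every $\theta$ — which is the generic max--min inequality — we get $\gamma^*\le\max_\theta F(\theta,q^*)=F(\theta^*,q^*)$. Hence $\gamma^*=F(\theta^*,q^*)=\min_{q}F(\theta^*,q)=\min_{(x,y)\in D}g(\theta^*,x,y)$, so $\theta^*$ attains the maximum margin, as claimed. There is no serious obstacle here; the only points requiring a little care are the reduction of the inner minimum to a vertex of the simplex and the bookkeeping of which inequality is used in which direction (the ``$\le$'' is the generic max--min bound applied at $q^*$, while the matching ``$\ge$'' comes from the optimality of $\theta^*$ against $q^*$).
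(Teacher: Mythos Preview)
Your proposal is correct and follows essentially the same approach as the paper: both arguments identify $(\theta^*,q^*)$ as a saddle point of $\expct_{(x,y)\sim q}[g(\theta,x,y)]$ and then combine the two saddle-point inequalities with the generic max--min inequality to squeeze $\gamma^*$ to the common value $F(\theta^*,q^*)$. Your explicit reduction of the inner minimum to a vertex of the simplex is stated in the paper as the equality $\max_\theta\min_{(x,y)}g=\max_\theta\min_q\expct_q[g]$, and the rest of the chain of inequalities matches yours line for line.
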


In the following subsections, we will describe our approach for finding such a pair for 1-hidden layer homogeneous neural networks. Furthermore, we will show how exhibiting just a single pair of the above form can enable us to characterize the set of \emph{all} maximum margin solutions.  We start off with the case of binary classification, and then extend the techniques to multi-class classification. 

\subsection{Binary Classification}
In the context of binary classification where $|\curlyy| = 2$, the margin function $g$ for a given datapoint $(x,y) \in D$ is given by
\[g(\theta, x, y) = f(\theta, x)[y] - f(\theta, x)[y'],\]
where $y' \neq y$.  For 1-hidden layer neural networks, by linearity of expectation, the expected margin is given by

\[ \expct_{(x,y) \sim q}[g(\theta, x, y)] = \sum_{i=1}^m \expct_{(x,y) \sim q} \left[\phi(\omega_i, x)[y] - \phi(\omega_i, x)[y']\right], \]
where $y' \neq y$ and $\theta = \{ \omega_i \}_{i=1}^m$. Since the expected margin of the network decomposes into the sum of expected margin of individual neurons, finding a maximum expected margin network simplifies to finding maximum expected margin neurons. Denoting $\psi(\omega, x, y) = \phi(\omega, x)[y] - \phi(\omega, x)[y']$, the following lemma holds:

\begin{lemma} \label{lem:ind_neuron_bin_class}
Let $\Theta = \{ \theta : \|\theta\|_{a,b} \leq 1\}$ and $\Theta^*_q = \argmax_{\theta \in \Theta}{\expct_{(x, y) \sim q}\left[g(\theta, x, y)\right]}$. Similarly, let $\Omega = \{ \omega: \| \omega \|_a \leq 1 \}$ and $\Omega^*_q = \argmax_{\omega \in \Omega} \expct_{(x,y)\sim q}\left[\psi(\omega, x, y)\right]$. For binary classification:
    \begin{itemize}
    \item \textbf{Single neuron optimization}: Any $\theta \in \Theta^*_q$ has directional support only on $\Omega^*_q$.
    \item \textbf{Combining neurons}: If $b = \nu$ (the homogeneity constant of the network) and $ \omega^*_1,..., \omega^*_m \in \Omega^*_q$, then for any neuron scaling factors $\sum \lambda_i^\nu = 1, \lambda_i \geq 0$, we have that $\theta = \{ \lambda_i \omega^*_i \}_{i=1}^m$ belongs to $\Theta^*_q$.  
    \end{itemize}
\end{lemma}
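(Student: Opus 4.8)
The plan is to push everything down to the per-neuron functional $G(\omega) := \expct_{(x,y)\sim q}[\psi(\omega,x,y)]$, which inherits $\nu$-homogeneity from $\phi$ (so $G(\lambda\omega) = \lambda^\nu G(\omega)$ for all $\lambda>0$), together with the scalar $G^* := \max_{\omega\in\Omega}G(\omega)$. This maximum is attained since $\Omega=\{\|\omega\|_a\le 1\}$ is compact and $G$ is a polynomial, and $G^*\ge G(0) = 0$; I treat the generic case $G^*>0$, the case $G^*=0$ being degenerate and disposed of by the same inequalities. A preliminary observation used in both parts: when $G^*>0$, every $\omega\in\Omega^*_q$ has $\|\omega\|_a = 1$, since a maximizer of norm $c<1$ could be rescaled to $\omega/c\in\Omega$ with $G(\omega/c) = c^{-\nu}G^* > G^*$, a contradiction.

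For the single-neuron claim, I first upper bound the expected margin. Using the decomposition $\expct_q[g(\theta,x,y)] = \sum_{i=1}^m G(\omega_i)$ (linearity of expectation plus the one-hidden-layer structure, as in the text) and writing $c_i = \|\omega_i\|_a$, homogeneity gives $G(\omega_i) = c_i^\nu\,G(\omega_i/c_i) \le c_i^\nu G^*$ for each $\omega_i\ne 0$ (and $G(\omega_i)=0$ when $\omega_i=0$), with equality iff $\omega_i=0$ or $\omega_i/c_i\in\Omega^*_q$. Summing, $\expct_q[g(\theta,x,y)] \le G^*\sum_i c_i^\nu \le G^* M$, where $M := \max\{\sum_{i=1}^m c_i^\nu : c_i\ge 0,\ \sum_i c_i^b\le 1\}$ is a finite positive constant ($\ge 1$, attained by compactness). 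This bound is tight: fixing any $\omega^*\in\Omega^*_q$ and an optimal tuple $(c_1^*,\dots,c_m^*)$ for $M$, the network $\{c_i^*\omega^*\}_i$ lies in $\Theta$ and attains value $G^*M$. Hence $\max_{\theta\in\Theta}\expct_q[g] = G^*M$, and any $\theta\in\Theta^*_q$ must achieve equality in the per-neuron bound $G(\omega_i)\le c_i^\nu G^*$ for every $i$ (since both inequalities in the chain become equalities and each summand deficit is nonnegative); by the equality condition this says each $\omega_i$ is either $0$ or points in a direction of $\Omega^*_q$ — exactly directional support on $\Omega^*_q$.

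For the combining claim, specialize to $b=\nu$: the constraint defining $\Theta$ becomes $\sum_i\|\omega_i\|_a^\nu\le 1$, so $M = \max\{\sum_i c_i^\nu : \sum_i c_i^\nu\le 1\} = 1$ and $\max_{\theta\in\Theta}\expct_q[g] = G^*$. Given $\omega^*_1,\dots,\omega^*_m\in\Omega^*_q$ (each of unit $\|\cdot\|_a$-norm with $G(\omega^*_i)=G^*$) and scalings $\lambda_i\ge 0$ with $\sum_i\lambda_i^\nu = 1$, set $\theta = \{\lambda_i\omega^*_i\}_{i=1}^m$. Then $\|\theta\|_{a,\nu}^\nu = \sum_i\lambda_i^\nu\|\omega^*_i\|_a^\nu = \sum_i\lambda_i^\nu = 1$, so $\theta\in\Theta$, and by homogeneity $\expct_q[g(\theta,x,y)] = \sum_i\lambda_i^\nu G(\omega^*_i) = G^*\sum_i\lambda_i^\nu = G^*$, which is the maximum; therefore $\theta\in\Theta^*_q$.

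I expect the only real obstacle to be the bookkeeping in the single-neuron part when $b\ne\nu$: one cannot bound $\sum_i\|\omega_i\|_a^\nu$ directly by $1$, so the argument must route through the auxiliary constant $M$, verify that the bound $G^*M$ is genuinely attained (by a collinear configuration) so that it equals the true maximum, and then extract from equality in the chain that each neuron individually lies in a direction of $\Omega^*_q$. The remaining care concerns degenerate configurations — neurons with $\omega_i = 0$, and the boundary case $G^*=0$ — but these are absorbed cleanly by the "either $\omega_i = 0$ or $\lambda_i\omega_i\in\Omega^*_q$" form of directional support.
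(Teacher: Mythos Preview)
Your proposal is correct and follows essentially the same approach as the paper's proof: both decompose the expected margin as $\sum_i G(\omega_i)$, use $\nu$-homogeneity to write $G(\omega_i) = \|\omega_i\|_a^\nu\, G(\omega_i/\|\omega_i\|_a) \le \|\omega_i\|_a^\nu\, G^*$, and then optimize the remaining scalar problem $\max\{\sum_i c_i^\nu : \sum_i c_i^b \le 1\}$ (your $M$, the paper's $\max_{\|w\|_b\le 1}\|w\|_\nu^\nu$), reading off the equality conditions for part one and specializing to $b=\nu$ (where $M=1$) for part two. Your version is slightly more explicit about attainment of the upper bound and the edge cases $\omega_i=0$ and $G^*=0$, but the argument is the same.
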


The proof for the above lemma can be found in Appendix \ref{app:binary}.

To find a $(\theta^*, q^*)$ pair, we will start with a guess for $q^*$ (which will be the uniform distribution in our case as the datasets are symmetric). Then, using the first part of Lemma \ref{lem:ind_neuron_bin_class}, we will find all neurons which can be in the support of $\theta^*$ satisfying Equation \ref{eq:theta*} for the given $q^*$. Finally, for specific norms of the form $\| \cdot \|_{a,\nu}$, we will combine the obtained neurons using the second part of Lemma \ref{lem:ind_neuron_bin_class} to obtain a $\theta^*$ such that $q^*$ satisfies Equation \ref{eq:q*}.

We think of $(\theta^*, q^*)$ as a ``certificate pair''. By just identifying this single solution, we can characterize the set of \emph{all} maximum margin solutions. Denoting $\text{spt}(q) = \{ (x,y) \in D \mid q(x,y) > 0 \}$, the following lemma holds:

\begin{lemma} \label{lem:char_max_marg}
    Let $\Theta = \{ \theta : \|\theta\|_{a,b} \leq 1\}$ and $\Theta^*_q = \argmax_{\theta \in \Theta}{\expct_{(x, y) \sim q}\left[g(\theta, x, y)\right]}$. Similarly, let $\Omega = \{ \omega: \| \omega \|_a \leq 1 \}$ and $\Omega^*_q = \argmax_{\omega \in \Omega} \expct_{(x,y)\sim q}\left[\psi(\omega, x, y)\right]$. For the task of binary classification, if there exists $\{\theta^*, q^*\}$ satisfying Equation \ref{eq:q*} and \ref{eq:theta*}, then any $\hat{\theta} \in \argmax_{\theta \in \Theta} \min_{(x,y) \in D} g(\theta, x, y)$ satisfies the following:
    \begin{itemize}
        \item $\hat{\theta}$ has directional support only on $\Omega^*_{q^*}$.
        \item For any $(x, y) \in \text{spt}(q^*)$, $f(\hat{\theta}, x, y) - f(\hat{\theta}, x, y') = \gamma^*$, where $y' \neq y$; i.e., all points in the support of $q^*$ are ``on the margin'' for any maximum margin solution.
    \end{itemize}
\end{lemma}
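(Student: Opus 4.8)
The plan is to show that the two certificate conditions force $(\theta^*, q^*)$ to be a saddle point of the function $F(\theta, q) := \expct_{(x,y)\sim q}[g(\theta, x, y)]$ with saddle value exactly $\gamma^*$, and then to use this to pin down any maximum-margin $\hat\theta$. Note that $F$ is affine in $q$ over the simplex $\curlyp(D)$, so $\min_{(x,y)\in D} g(\theta, x, y) = \min_{q \in \curlyp(D)} F(\theta, q)$ for every $\theta$, and hence $\gamma^* = \max_{\theta\in\Theta}\min_{q\in\curlyp(D)}F(\theta,q)$.

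\textbf{Step 1 (the certificate pair is a saddle point).} Combine the certificate conditions with the max--min inequality \citep{boyd2004convex}. Equation~\ref{eq:q*} gives $F(\theta^*, q^*) = \min_{q}F(\theta^*,q) \le \max_{\theta}\min_{q} F(\theta,q) = \gamma^*$. Equation~\ref{eq:theta*} gives $F(\theta^*,q^*) = \max_{\theta} F(\theta, q^*) \ge \min_{q}\max_{\theta} F(\theta,q) \ge \max_{\theta}\min_{q} F(\theta,q) = \gamma^*$. Hence $F(\theta^*, q^*) = \gamma^*$, and in particular, using Equation~\ref{eq:theta*} again, $\max_{\theta\in\Theta} F(\theta, q^*) = F(\theta^*,q^*) = \gamma^*$. (This is exactly the content of Lemma~\ref{lem:max-min}, re-derived here in the form we need.)

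\textbf{Step 2 (first bullet).} Let $\hat\theta \in \argmax_{\theta\in\Theta}\min_{(x,y)\in D}g(\theta,x,y)$, so $h(\hat\theta)=\gamma^*$, i.e.\ $\min_{q}F(\hat\theta, q)=\gamma^*$. Then $F(\hat\theta, q^*) \ge \min_{q}F(\hat\theta, q) = \gamma^*$, while $F(\hat\theta, q^*) \le \max_{\theta\in\Theta}F(\theta, q^*) = \gamma^*$ by Step 1. Therefore $F(\hat\theta, q^*) = \gamma^* = \max_{\theta\in\Theta}F(\theta,q^*)$, i.e.\ $\hat\theta \in \Theta^*_{q^*}$. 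Applying the ``single neuron optimization'' part of Lemma~\ref{lem:ind_neuron_bin_class} (whose hypotheses — binary classification, norm of the form $\|\cdot\|_{a,b}$ — hold by assumption in the present lemma), every element of $\Theta^*_{q^*}$ has directional support only on $\Omega^*_{q^*}$; in particular $\hat\theta$ does.

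\textbf{Step 3 (second bullet).} Since $h(\hat\theta)=\gamma^*$, we have $g(\hat\theta, x, y) \ge \gamma^*$ for all $(x,y)\in D$. But $\expct_{(x,y)\sim q^*}[g(\hat\theta,x,y)] = F(\hat\theta,q^*) = \gamma^*$ by Step 2, so the nonnegative quantity $g(\hat\theta,x,y)-\gamma^*$ has zero $q^*$-expectation and therefore vanishes on $\mathrm{spt}(q^*)$. In binary classification $g(\hat\theta,x,y) = f(\hat\theta,x)[y] - f(\hat\theta,x)[y']$ with $y'\neq y$, which is the claimed equality. The argument is short overall; the only points requiring care are the ordering of the inequalities in the saddle-point sandwich of Step 1 and checking that the hypotheses of Lemma~\ref{lem:ind_neuron_bin_class} are genuinely available, so I do not anticipate a substantive obstacle.
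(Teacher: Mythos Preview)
Your proof is correct and follows essentially the same approach as the paper's own proof: both use the certificate pair to establish $\max_{\theta\in\Theta}\expct_{(x,y)\sim q^*}[g(\theta,x,y)] = \gamma^*$ (the paper cites Lemma~\ref{lem:max-min} for this, you re-derive it as a saddle-point identity), then sandwich $F(\hat\theta,q^*)$ between $\gamma^*$ and $\gamma^*$ to place $\hat\theta$ in $\Theta^*_{q^*}$, invoke Lemma~\ref{lem:ind_neuron_bin_class} for the directional-support claim, and use the zero-expectation-of-a-nonnegative-quantity argument for the second bullet. The only cosmetic difference is that you spell out the saddle-point sandwich in Step~1 more explicitly than the paper does.
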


The proof for the above lemma can be found in Appendix \ref{app:binary}.

Thus, we can say that the neurons found by Lemma $\ref{lem:ind_neuron_bin_class}$ are indeed the exhaustive set of neurons for any maximum margin network. Moreover, any maximum margin solution will have the support of $q^*$ on the margin.

\subsection{Multi-Class Classification}
The modular addition and general finite group tasks are multi-class classification problems. For multi-class classification, the margin function $g$ for a given datapoint $(x,y) \in D$ is given by
\[g(\theta, x, y) = f(\theta, x)[y] - \max_{y' \in \curlyy \backslash \{y\}} f(\theta, x)[y'],\]
For 1-hidden layer networks, the expected margin is given by

\[ \expct_{(x,y) \sim q}[g(\theta, x, y)] = \expct_{(x,y) \sim q} \left[ \sum_{i=1}^m \phi(\omega_i, x)[y] - \max_{y' \in \curlyy \backslash \{y\}} \sum_{i=1}^m \phi(\omega_i, x)[y']\right], \]

Here, due to the max operation, we cannot swap the summation and expectation, and thus the expected margin of the network does not decompose into the expected margins of the neurons as it did in the binary classification case.

To circumvent this issue, we will introduce the notion of \emph{class-weighted margin}. Consider some $\tau : D \to \Delta(\curlyy)$ that assigns a weighting of incorrect labels to every datapoint. For any $(x, y) \in D$, let $\tau$ satisfy the properties that $\sum_{y' \in \curlyy \setminus \{y\}}\tau(x,y)[y']=1$ and $\tau(x, y)[y'] \geq 0$ for all $y' \in \curlyy$. Using this, we define the class-weighted margin $g'$ for a given datapoint $(x,y) \in D$ as
\[ g'(\theta, x, y) = f(\theta, x)[y] - \sum\limits_{y' \in \curlyy \setminus \{y\}}{\tau(x,y)[y']f(\theta, x)[y']}. \]

Note that $g'(\theta, x, y) \geq g(\theta, x, y)$ as $g'$ replaces the max by a weighted sum. Moreover, by linearity of expectation we can say that

\[ \expct_{(x,y) \sim q}[g'(\theta, x, y)] = \sum_{i=1}^m \expct_{(x,y) \sim q} \left[ \phi(\omega_i, x)[y] - \sum\limits_{y' \in \curlyy \setminus \{y\}}{\tau(x,y)[y']\phi(\omega_i, x)[y']} \right], \]

Denoting $\psi'(\omega, x, y) = \phi(\omega, x)[y] - \sum\limits_{y' \in \curlyy \setminus \{y\}}{\tau(x,y)[y']\phi(\omega, x)[y']}$, a result analogous to Lemma~\ref{lem:ind_neuron_bin_class} holds for the class-weighted margin (proof can be found in Appendix \ref{app:multi-class}):

\begin{lemma} \label{lem:ind_neuron}
    Let $\Theta = \{ \theta : \|\theta\|_{a,b} \leq 1\}$ and $\Theta'^*_q = \argmax_{\theta \in \Theta}{\expct_{(x, y) \sim q}\left[g'(\theta, x, y)\right]}$. Similarly, let $\Omega = \{ \omega: \| \omega \|_a \leq 1 \}$ and $\Omega'^*_q = \argmax_{\omega \in \Omega} \expct_{(x,y)\sim q}\left[\psi'(\omega, x, y)\right]$. Then:
    \begin{itemize}
    \item \textbf{Single neuron optimization}: Any $\theta \in \Theta'^*_q$ has directional support only on $\Omega'^*_q$.
    \item \textbf{Combining neurons}: If $b = \nu$ and $ \omega^*_1,..., \omega^*_m \in \Omega'^*_q$, then for any neuron scaling factors $\sum \lambda_i^\nu = 1, \lambda_i \geq 0$, we have that $\theta = \{ \lambda_i \omega^*_i \}_{i=1}^m$ belongs to $\Theta'^*_q$.  
    \end{itemize}
\end{lemma}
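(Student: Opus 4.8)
The plan is to mirror the proof of Lemma~\ref{lem:ind_neuron_bin_class}, exploiting the crucial fact that — unlike the true margin $g$ — the class-weighted margin $g'$ decomposes additively over neurons. I would write $P(\omega) := \expct_{(x,y)\sim q}[\psi'(\omega, x, y)]$, so that for $\theta = \{\omega_i\}_{i=1}^m$ we have $\expct_{(x,y)\sim q}[g'(\theta,x,y)] = \sum_{i=1}^m P(\omega_i)$ by linearity of expectation, as noted just before the statement. Since the neuron map $\phi$ is $\nu$-homogeneous, so is $\psi'$ in its weight argument, and hence $P(\lambda\omega) = \lambda^\nu P(\omega)$ for $\lambda > 0$; moreover $P(0) = 0$, so $M := \max_{\omega\in\Omega} P(\omega) \ge 0$. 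In the case $M > 0$ the maximum of $P$ over the unit ball $\Omega$ is attained only on the unit sphere (scaling up a sub-unit vector strictly increases $P$), so every $\omega^* \in \Omega'^*_q$ has $\|\omega^*\|_a = 1$; the degenerate case $M = 0$ goes through with the same bookkeeping and I would not belabor it.

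The first step is a per-neuron bound: for $\omega \ne 0$ with $\|\omega\|_a \le 1$, homogeneity gives $P(\omega) = \|\omega\|_a^\nu\, P(\omega/\|\omega\|_a) \le \|\omega\|_a^\nu M$, with equality exactly when $\omega/\|\omega\|_a \in \Omega'^*_q$. Summing over neurons, any $\theta = \{\omega_i\}$ with $\|\theta\|_{a,b}\le 1$ satisfies $\expct_q[g'(\theta,x,y)] \le M \sum_i \|\omega_i\|_a^\nu$.

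For \textbf{single neuron optimization}, I would take $\theta = \{\omega_i\} \in \Theta'^*_q$ and suppose, for contradiction, that some $\omega_j\ne 0$ has $\omega_j/\|\omega_j\|_a \notin \Omega'^*_q$, i.e.\ $P(\omega_j) < \|\omega_j\|_a^\nu M$. Replacing $\omega_j$ by $\|\omega_j\|_a\,\omega^*$ for any unit $\omega^*\in\Omega'^*_q$ leaves $\|\theta\|_{a,b}$ unchanged (only the $a$-norm of block $j$ enters, and it is preserved) while strictly increasing $\sum_i P(\omega_i)$ — a contradiction. Hence every nonzero $\omega_i$ points along $\Omega'^*_q$, which is precisely directional support on $\Omega'^*_q$; note this step never uses $b = \nu$. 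For \textbf{combining neurons}, I would specialize to $b = \nu$: then $\sum_i \|\omega_i\|_a^\nu = \|\theta\|_{a,\nu}^\nu \le 1$, so the per-neuron bound collapses to $\expct_q[g'(\theta,x,y)] \le M$, a value attained by a single unit neuron in $\Omega'^*_q$; hence $\max_{\theta\in\Theta}\expct_q[g'] = M$. Given $\omega^*_1,\dots,\omega^*_m \in \Omega'^*_q$ and $\lambda_i\ge 0$ with $\sum_i \lambda_i^\nu = 1$, the candidate $\theta = \{\lambda_i\omega^*_i\}$ has $\|\theta\|_{a,\nu}^\nu = \sum_i \lambda_i^\nu\|\omega^*_i\|_a^\nu = \sum_i\lambda_i^\nu = 1$ and, by homogeneity, $\expct_q[g'(\theta,x,y)] = \sum_i \lambda_i^\nu P(\omega^*_i) = M$, so $\theta\in\Theta'^*_q$.

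I do not expect a genuine obstacle: introducing the class weighting $\tau$ was designed precisely to restore the additive-over-neurons structure that drove the binary-classification argument, so this proof is essentially a verbatim copy of the one for Lemma~\ref{lem:ind_neuron_bin_class} with $\psi,g$ replaced by $\psi',g'$. The only points that need care in the final write-up are the homogeneity/block-norm bookkeeping (that perturbing one neuron's direction at fixed $a$-norm preserves $\|\theta\|_{a,b}$, and that $\nu$-homogeneity of $\phi$ transfers to $\psi'$) and a clean treatment of the $M = 0$ edge case.
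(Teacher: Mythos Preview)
Your proposal is correct and takes essentially the same approach as the paper: the paper's proof simply states that the argument is identical to that of Lemma~\ref{lem:ind_neuron_bin_class} once one uses the linearity of $g'$ over neurons, which is exactly what you do. Your replacement/contradiction phrasing for the first bullet is a minor stylistic variant of the paper's equality-condition characterization, but the substance is the same.
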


The above lemma helps us characterize $\Theta'^*_q$ for a given distribution $q$. Thus, applying it to a given $q^*$, we can find


\begin{equation} \label{eq:theta*-g'}
\theta^* \in \argmax_{\theta \in \Theta} \expct_{(x,y) \sim q^*} \left[g'(\theta, x, y) \right].     
\end{equation}

To further ensure that $\theta^*$ also satisfies the corresponding equation for $g$ (i.e., Equation \ref{eq:theta*}) we will consider the following condition:

\begin{enumerate}[label=\textbf{C.1}]
    \item \label{c1} For any $(x, y) \in \text{spt}(q^*)$, it holds that $g'(\theta^*, x, y) = g(\theta^*, x, y)$. This translates to any label with non-zero weight being one of the incorrect labels where $f$ is maximized: $\{\ell \in \curlyy \setminus\{y\} : \tau(x,y)[\ell] > 0\} \subseteq \argmax\limits_{\ell\in\curlyy \setminus\{y\}}f(\theta^*,x)[\ell]$.
\end{enumerate}

The main lemma used for finding the maximum margin solutions for multi-class classification is stated below:

\begin{lemma} \label{lem:backbone_lemma}
Let $\Theta = \{ \theta : \|\theta\|_{a,b} \leq 1\}$ and $\Theta'^*_q = \argmax_{\theta \in \Theta}{\expct_{(x, y) \sim q}\left[g'(\theta, x, y)\right]}$. Similarly, let $\Omega = \{ \omega: \| \omega \|_a \leq 1 \}$ and $\Omega'^*_q = \argmax_{\omega \in \Omega} \expct_{(x,y)\sim q}\left[\psi'(\omega, x, y)\right]$. If $\exists \{\theta^*, q^*\}$ satisfying Equations \ref{eq:q*} and \ref{eq:theta*-g'}, and \ref{c1} holds, then:
\begin{itemize}
    \item $\theta^* \in \argmax_{\theta \in \Theta} g(\theta, x, y)$
    \item Any $\hat{\theta} \in \argmax_{\theta \in \Theta} \min_{(x,y) \in D} g(\theta, x, y)$ satisfies the following:
    \begin{itemize}
        \item $\hat{\theta}$ has directional support only on $\Omega'^*_{q^*}$.
        \item For any $(x, y) \in \text{spt}(q^*)$, $f(\hat{\theta}, x, y) - \max_{y' \in \curlyy \backslash \{y\}} f(\hat{\theta}, x, y') = \gamma^*$, i.e, all points in the support of $q^*$ are on the margin for any maximum margin solution. 
    \end{itemize}
\end{itemize}
\end{lemma}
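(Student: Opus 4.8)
The plan is to build on the structure already established for binary classification in Lemma \ref{lem:char_max_marg}, replacing the role of the decomposable margin $g$ by the class-weighted surrogate $g'$, and then use condition \ref{c1} as a bridge to transfer conclusions back to the true margin $g$.

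First I would establish the chain of inequalities relating $g$, $g'$, and $\gamma^*$. For the pair $(\theta^*, q^*)$ at hand, use the max--min inequality exactly as in Lemma \ref{lem:max-min}: since $q^*$ minimizes $\expct_{(x,y)\sim q}[g(\theta^*,x,y)]$ (Equation \ref{eq:q*}) we get $\gamma^* \le \max_\theta \min_{(x,y)} g(\theta,x,y)$ bounded above via $q^*$, and since $g' \ge g$ pointwise and $\theta^*$ maximizes the $q^*$-expectation of $g'$ (Equation \ref{eq:theta*-g'}), for any $\theta \in \Theta$,
\[
\min_{(x,y)\in D} g(\theta,x,y) \;\le\; \expct_{(x,y)\sim q^*}[g(\theta,x,y)] \;\le\; \expct_{(x,y)\sim q^*}[g'(\theta,x,y)] \;\le\; \expct_{(x,y)\sim q^*}[g'(\theta^*,x,y)].
\]
On the other side, condition \ref{c1} gives $g'(\theta^*,x,y) = g(\theta^*,x,y)$ for all $(x,y)\in\spt(q^*)$, so $\expct_{(x,y)\sim q^*}[g'(\theta^*,x,y)] = \expct_{(x,y)\sim q^*}[g(\theta^*,x,y)] = \min_{(x,y)\in D} g(\theta^*,x,y)$, where the last equality is Equation \ref{eq:q*}. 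Chaining these shows $\theta^* \in \argmax_\theta \min_{(x,y)} g(\theta,x,y)$ and that the common value is $\gamma^*$, proving the first bullet.

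Next, for the characterization of an arbitrary max-margin solution $\hat\theta$, I would run the same sandwich with $\hat\theta$ in place of $\theta$. Because $\hat\theta$ achieves margin $\gamma^*$, every inequality in the displayed chain must be an equality when $\theta = \hat\theta$: in particular $\expct_{q^*}[g'(\hat\theta,x,y)] = \gamma^*$ and $\hat\theta \in \Theta'^*_{q^*}$, i.e., $\hat\theta$ maximizes the $q^*$-expected class-weighted margin. Then invoke Lemma \ref{lem:ind_neuron} (single-neuron optimization) to conclude $\hat\theta$ has directional support only on $\Omega'^*_{q^*}$. Moreover, equality $\min_{(x,y)\in D} g(\hat\theta,x,y) = \expct_{q^*}[g(\hat\theta,x,y)] = \expct_{q^*}[g'(\hat\theta,x,y)]$ forces, first, $g(\hat\theta,x,y) = \gamma^*$ for every $(x,y)\in\spt(q^*)$ (a variable that is $\ge$ its mean everywhere and equals its mean in expectation must be constant on the support), which is exactly the "on the margin" claim; and second $g(\hat\theta,x,y) = g'(\hat\theta,x,y)$ on $\spt(q^*)$, which is consistent but not needed for the statement.

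The main obstacle I anticipate is bookkeeping the equality cases carefully: one must make sure that the step $g'(\hat\theta,x,y)\ge g(\hat\theta,x,y)$ combined with the expectation equalities genuinely pins down pointwise equalities on $\spt(q^*)$, and that the reduction to $\Omega'^*_{q^*}$ via Lemma \ref{lem:ind_neuron} only requires $\hat\theta \in \Theta'^*_{q^*}$ (which it does, since that lemma's "single neuron optimization" part is stated for exactly that set). A minor subtlety is that $b=\nu$ is implicitly needed only for the "combining neurons" direction and not for this characterization, so I would not need the homogeneity-matching hypothesis here — but I would double-check that $\Theta'^*_{q^*}$ is correctly identified under the norm $\|\cdot\|_{a,b}$ without that assumption. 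Everything else is a direct transcription of the binary-case argument in Lemma \ref{lem:char_max_marg}.
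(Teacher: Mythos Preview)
Your proposal is correct and follows essentially the same approach as the paper's proof: both use the pointwise inequality $g' \ge g$ together with condition \ref{c1} to sandwich $\gamma^*$ between $\min_{(x,y)} g(\theta^*,\cdot)$ and $\expct_{q^*}[g'(\theta^*,\cdot)]$, deduce that $\theta^*$ is max-margin, and then rerun the chain at an arbitrary maximizer $\hat\theta$ to force $\hat\theta \in \Theta'^*_{q^*}$ (whence Lemma~\ref{lem:ind_neuron} gives the directional-support claim) and $g(\hat\theta,x,y)=\gamma^*$ on $\spt(q^*)$. Your observation that only the ``single neuron optimization'' direction of Lemma~\ref{lem:ind_neuron} is needed (so $b=\nu$ is not required here) is also correct.
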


\begin{figure}[t]
\centering
  \includegraphics[width=0.4\textwidth]{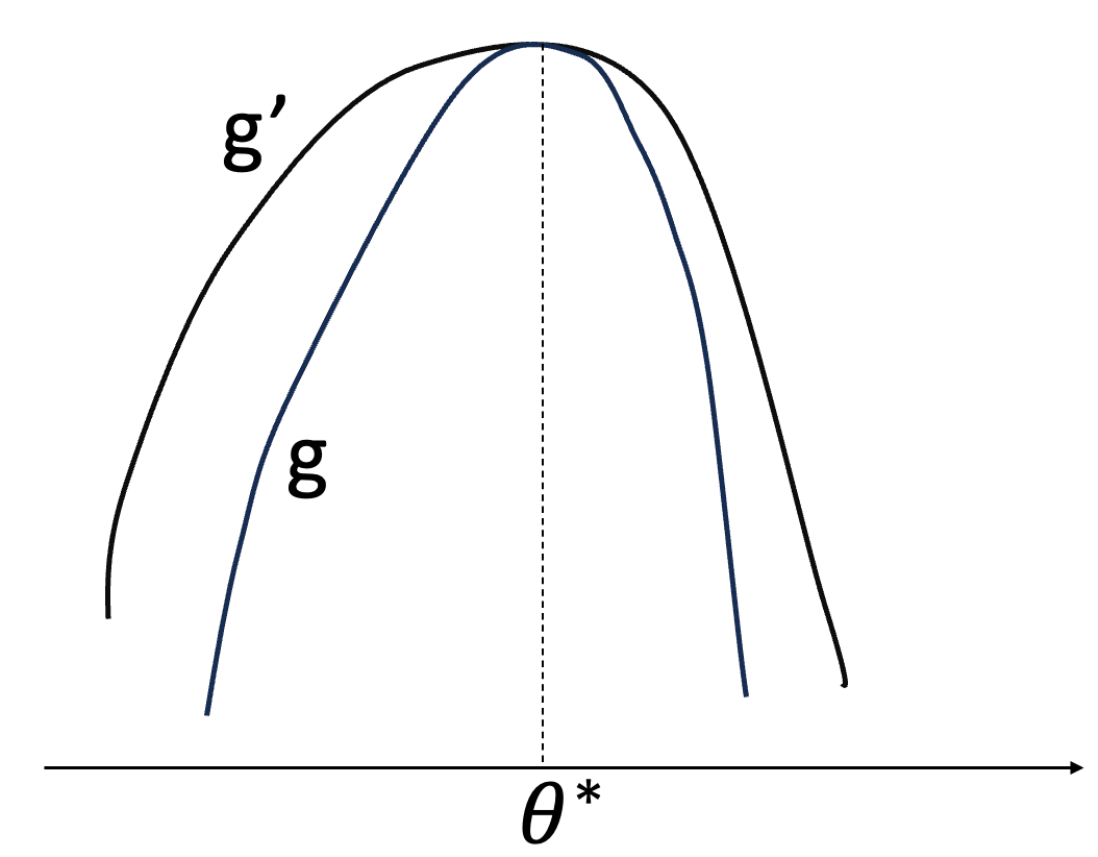}
\caption{A schematic illustration of the relation between class-weighted margin $g'$ and maximum margin $g$.}
\label{fig:gg'}
\end{figure}

The first part of the above lemma follows from the fact that $g'(\theta, x, y) \geq g(\theta, x, y)$. Thus, any maximizer of $g'$ satisfying $g' = g$ is also a maximizer of $g$ (See Figure \ref{fig:gg'}). The  second part states that the neurons found using Lemma \ref{lem:ind_neuron} are indeed the exhaustive set of neurons for any maximum margin network. Moreover, any maximum margin solution has the support of $q^*$ on margin. The proof for the lemma can be found in Appendix \ref{app:multi-class}.

Overall, to find a $(\theta^*, q^*)$ pair, we will start with a guess of $q^*$ (which will be uniform in our case as the datasets are symmetric) and a guess of the weighing $\tau$ (which will be uniform for the modular addition case). Then, using the first part of Lemma \ref{lem:ind_neuron}, we will find all neurons which can be in the support of $\theta^*$ satisfying Equation \ref{eq:theta*-g'} for the given $q^*$. Finally, for specific norms of the form $\| \cdot \|_{a,\nu}$, we will combine the obtained neurons using the second part of Lemma \ref{lem:ind_neuron} to obtain a $\theta^*$ such that it satisfies \ref{c1} and $q^*$ satisfies Equation \ref{eq:q*}. Thus, we will primarily focus on maximum margin with respect to $L_{2, \nu}$ norm in this work.

\subsection{Blueprint for the case studies} \label{sec:blueprint}
In each case study, we want to find a certificate pair: a network $\theta^*$ and a distribution on the input data points $q^*$, such that Equation \ref{eq:q*} and \ref{eq:theta*} are satisfied. Informally, these are the main steps involved in the proof approach:

\begin{enumerate}[topsep=0pt]
    \item As the datasets we considered are symmetric, we consider $q^*$ to be uniformly distributed on the input data points.
    \item Using the \textbf{Single neuron optimization} part of Lemma \ref{lem:ind_neuron}, we find all neurons that maximize the expected class-weighted margin. Only these neurons can be part of a network $\theta^*$ satisfying Equation \ref{eq:theta*-g'}.
    \item Using the \textbf{Combining neurons} part of Lemma \ref{lem:ind_neuron}, we combine the above neurons into a network $\theta^*$ such that
    \begin{enumerate}
        \item All input points are on the margin, i.e, $q^*$ satisfies Equation \ref{eq:q*}.
        \item The class-weighted margin is equal to the maximum margin, i.e, $\theta^*$ satisfies \ref{c1}.
    \end{enumerate}
\end{enumerate}
Then, using Lemma \ref{lem:backbone_lemma}, we can say that the network $\theta^*$ maximizes the margin.



\section{Cyclic groups (modular addition)} \label{sec:cyclic}

\begin{figure}[t]
\centering
\begin{subfigure}{.3\textwidth}
  \centering
  \includegraphics[width=\textwidth]{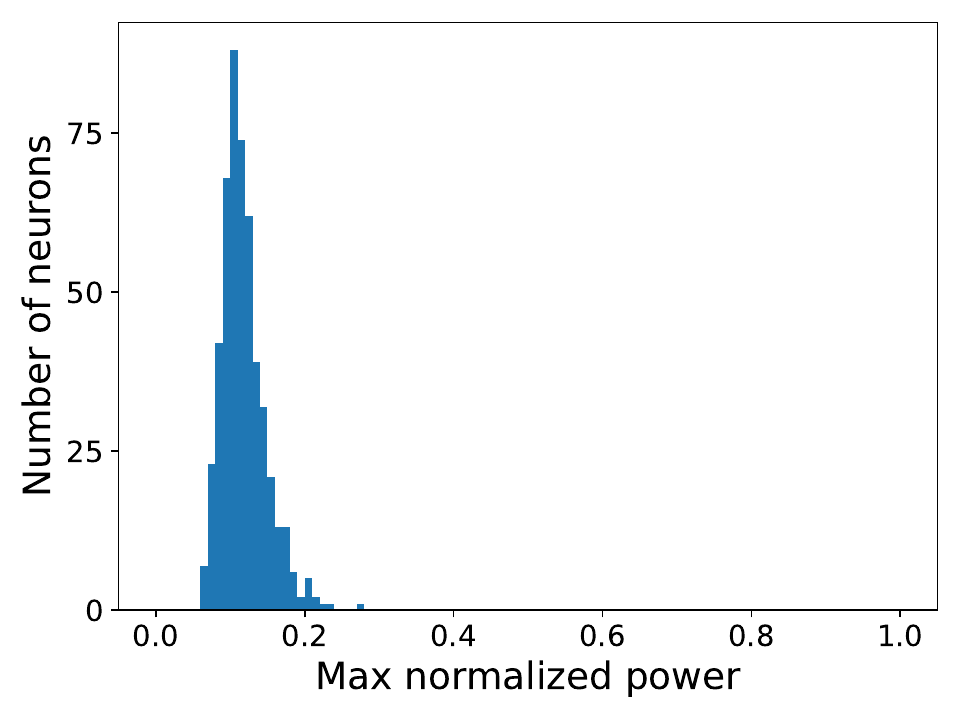}
   \caption{Initial distribution}
\end{subfigure}
\begin{subfigure}{.3\textwidth}
  \centering
  \includegraphics[width=\textwidth]{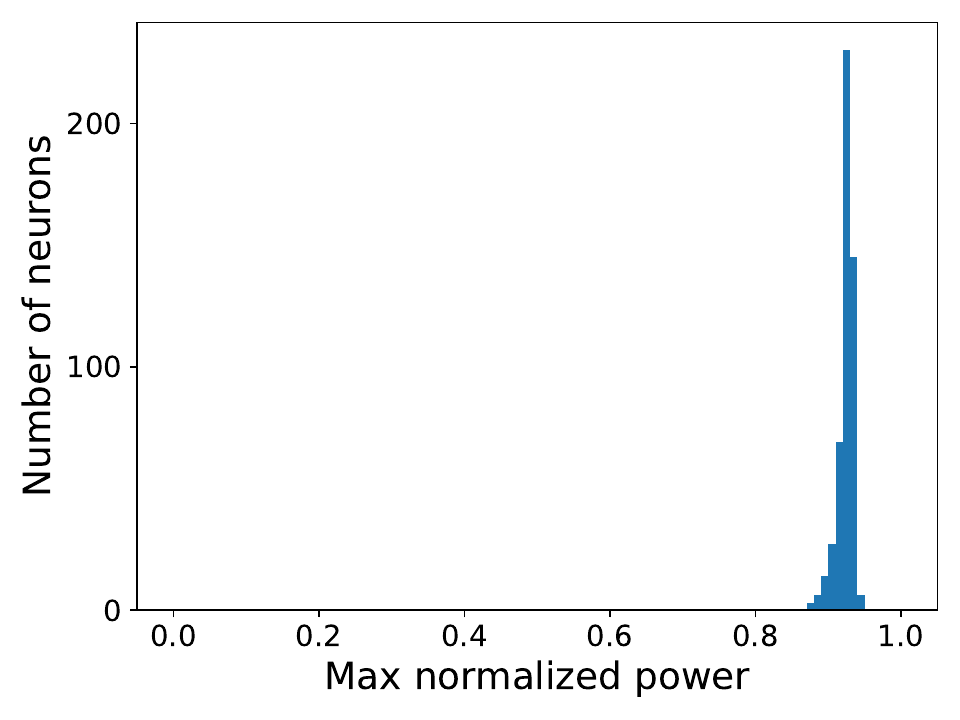}
    \caption{ReLU activation}
\end{subfigure}
\begin{subfigure}{.3\textwidth}
  \centering
  \includegraphics[width=\textwidth]{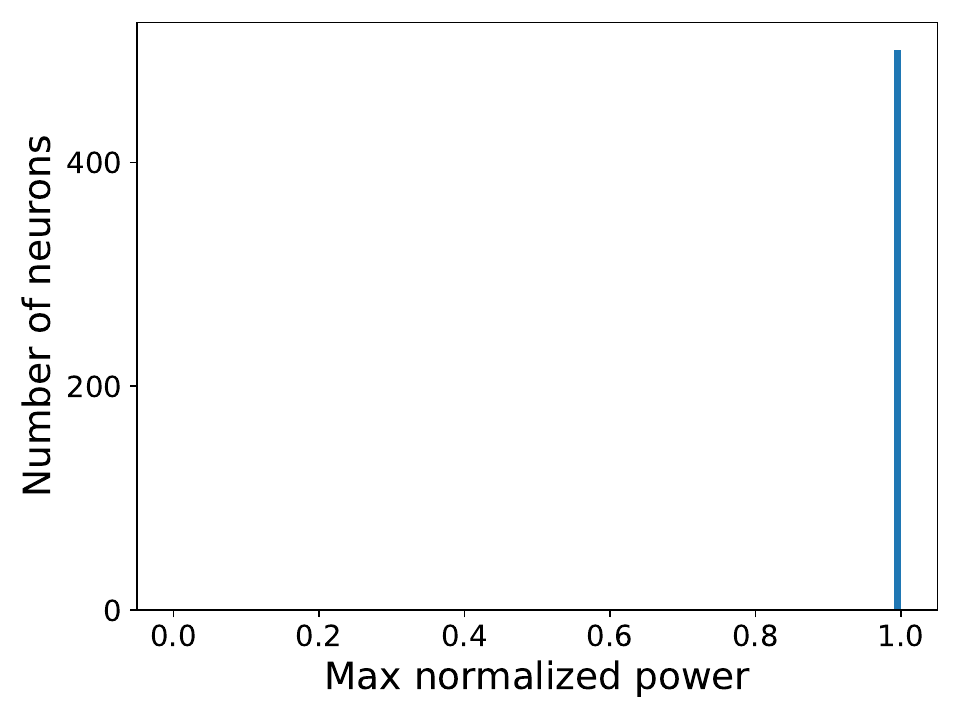}
    \caption{Quadratic activation}
\end{subfigure}         
\caption{The maximum normalized power of the embedding vector of a neuron is given by $\max_i |\hat{u}[i]|^2/(\sum |\hat{u}[j]|^2)$, where $\hat{u}[i]$ represents the $i^{th}$ component of the Fourier transform of $u$. (a) Initially, the maximum power is randomly distributed. (b) For 1-hidden layer ReLU network trained with $L_2$ regularization, the final distribution of maximum power seems to be concentrated around 0.9, meaning neurons are nearly 1-sparse in frequency space but not quite. (c) For 1-hidden layer quadratic network trained with $L_{2,3}$ regularization, the final maximum power is almost exactly 1 for all the neurons, so the embeddings are 1-sparse in frequency space, as predicted by the maximum margin analysis.}
\label{fig:max_norm}
\end{figure}

For a prime $p>2$, let $\sZ_p$ denote the cyclic group on $p$ elements. For a function $f: \sZ_p \to \sC$, the discrete Fourier transform of $f$ at a frequency $j \in \sZ_p$ is defined as
\[
\hat{f}(j) := \sum_{k \in \sZ_p} f(k) \exp(-2\pi i \cdot jk/p).
\]
Note that we can treat a vector $v \in \sC^p$ as a function $v: \sZ_p \to \sC$, thereby endowing it with a Fourier transform. Consider the input space $\mathcal{X} := \sZ_p \times \sZ_p$ and output space $\mathcal{Y} := \sZ_p$. Let the dataset $D_p := \{((a,b),a+b): a,b \in \sZ_p\}$.

\begin{theorem}\label{thm:cyclic}
Consider one-hidden layer networks $f(\theta,a,b)$ of the form given in section~\ref{sec:prelim} with $m\geq 4(p-1)$ neurons. The maximum $L_{2,3}$-margin of such a network on the dataset $D_p$ is:
\[
\gamma^*=\sqrt{\frac{2}{27}} \cdot \frac{1}{p^{1/2}(p-1)}.
\]
Any network achieving this margin satisfies the following conditions:
\begin{enumerate}
\item for each neuron $\phi(\{u,v,w\};a,b)$ in the network, there exists a scaling constant $\lambda \in \sR$ and a frequency $\zeta \in \{1,\dots,\frac{p-1}{2}\}$ such that
\begin{align*}
u(a) &= \lambda \cos(\theta_u^* + 2\pi \zeta a/p) \\
v(b) &= \lambda \cos(\theta_v^* + 2\pi \zeta b/p) \\
w(c) &= \lambda \cos(\theta_w^* + 2\pi \zeta c/p)
\end{align*}
for some phase offsets $\theta_u^*,\theta_v^*,\theta_w^* \in \sR$ satisfying $\theta_u^* + \theta_v^* = \theta_w^*$.
\item For every frequency $\zeta \in \{1,\dots,\frac{p-1}{2}\}$, at least one neuron in the network uses this frequency.
\end{enumerate}
\end{theorem}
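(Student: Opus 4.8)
I will follow the blueprint of Section~\ref{sec:blueprint}: produce a certificate pair $(\theta^*,q^*)$ with $q^*$ the uniform distribution on $D_p$ and the incorrect‑label weighting $\tau$ uniform over $\sZ_p\setminus\{a+b\}$, and then invoke Lemma~\ref{lem:backbone_lemma}. Since that lemma also characterizes \emph{all} max‑margin solutions in terms of the optimal‑neuron set $\Omega'^*_{q^*}$, the real work is (i) computing $\Omega'^*_{q^*}$ exactly, (ii) superposing such neurons into a $\theta^*$ with every data point on the margin and all incorrect logits tied, and (iii) an extra Fourier‑support argument for the "every frequency appears" clause.

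\emph{Single‑neuron optimization.} Here $\nu=3$ and a neuron is $\phi(\{u,v,w\};a,b)[c]=(u_a+v_b)^2 w_c$. With $q^*,\tau$ uniform, expanding $\psi'$, substituting $w_{a+b}-\tfrac1{p-1}\sum_{c\ne a+b}w_c=\tfrac{p}{p-1}w_{a+b}-\tfrac{\hat w(0)}{p-1}$, and handling the cross term via $\sum_{a,b}u_av_bw_{a+b}=\sum_c w_c (u*v)(c)$ with the convolution theorem and Parseval, everything collapses (the DC contributions cancel exactly) to
\[
\expct_{q^*}\!\big[\psi'(\{u,v,w\})\big]=\frac{2}{p^2(p-1)}\sum_{j\ne 0}\overline{\hat w(j)}\,\hat u(j)\,\hat v(j)=\frac{4}{p^2(p-1)}\sum_{j=1}^{(p-1)/2}\mathrm{Re}\!\big(\overline{\hat w(j)}\hat u(j)\hat v(j)\big).
\]
Because $\hat u(0),\hat v(0),\hat w(0)$ do not appear, any maximizer sets them to $0$; Parseval then turns $\|u\|_2^2+\|v\|_2^2+\|w\|_2^2\le1$ into $\tfrac2p\sum_{j=1}^{(p-1)/2}(|\hat u(j)|^2+|\hat v(j)|^2+|\hat w(j)|^2)\le1$. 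Bounding $\mathrm{Re}(\overline{\hat w(j)}\hat u(j)\hat v(j))\le|\hat u(j)||\hat v(j)||\hat w(j)|$, applying AM--GM to the three magnitudes at each frequency (with equality iff they are equal), and using $\sum_j s_j^{3/2}\le(\sum_j s_j)^{3/2}$ to push all spectral mass to one frequency, gives optimal value $\sqrt{2/27}\cdot p^{-1/2}(p-1)^{-1}$, attained exactly when a single $\zeta_0\in\{1,\dots,\tfrac{p-1}{2}\}$ carries all the mass with $|\hat u(\zeta_0)|=|\hat v(\zeta_0)|=|\hat w(\zeta_0)|$ and $\arg\hat u(\zeta_0)+\arg\hat v(\zeta_0)=\arg\hat w(\zeta_0)$. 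Inverting the transform gives precisely the cosine form of part~1 (common amplitude $\lambda$, phases with $\theta_u^*+\theta_v^*=\theta_w^*$), so $\Omega'^*_{q^*}$ is exactly that family.

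\emph{Constructing $\theta^*$ and concluding.} For each frequency $\zeta$ I take a constant‑size bundle of such optimal neurons (at most eight, so the total is $\le 4(p-1)$) whose phases are chosen so that destructive interference annihilates every term but the diagonal one: averaging $\theta_u-\theta_v$ over $\{0,\pi\}$ kills the $\cos(2\pi\zeta(a-b)/p)$ factor, and averaging $\theta_w=\theta_u+\theta_v$ over a set that annihilates $e^{i\theta_w}$ and $e^{2i\theta_w}$ leaves the bundle's logit contribution for output $c$ equal to a positive multiple of $\cos(2\pi\zeta(a+b-c)/p)$ — a special case of the construction of \cite{gromov2023grokking}. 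Weighting all frequencies equally and normalizing to $\|\theta^*\|_{2,3}=1$ makes $f(\theta^*,a,b)[c]=K\sum_{\zeta=1}^{(p-1)/2}\cos(2\pi\zeta(a+b-c)/p)$; since $\sum_{\zeta=1}^{(p-1)/2}\cos(2\pi\zeta d/p)$ is $\tfrac{p-1}{2}$ at $d=0$ and $-\tfrac12$ at every $d\ne0$, all $p^2$ points lie on the margin with identical incorrect logits. This verifies Equation~\ref{eq:q*} and condition~\ref{c1}, and tracking the normalization gives $\gamma^*=\sqrt{2/27}\cdot p^{-1/2}(p-1)^{-1}$, matching the single‑neuron bound; hence $\theta^*$ is max‑margin. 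Lemma~\ref{lem:backbone_lemma} then gives that any max‑margin $\hat\theta$ has directional support on $\Omega'^*_{q^*}$ (part~1) and has all of $D_p$ on the margin with all incorrect logits equal.

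\emph{Every frequency appears, and the main obstacle.} Fix a max‑margin $\hat\theta$. By the above, $f(\hat\theta,a,b)[c]=M(a,b)+\gamma^*\,\mathbbm{1}[c=a+b]$ for all $a,b,c$, and as a function of $c$ this has a nonzero Fourier coefficient at \emph{every} frequency (the indicator contributes $e^{-2\pi i\xi(a+b)/p}\ne0$ for all $\xi$; $M(a,b)$ only affects the DC term). But each neuron of $\hat\theta$ is a cosine of some frequency $\zeta_i$, so its output $(u_a+v_b)^2 w_c$ depends on $c$ only through $w_c$, whose spectrum is $\{\zeta_i,p-\zeta_i\}$; thus the $c$‑spectrum of $f(\hat\theta,a,b)[\cdot]$ lies in $\bigcup_i\{\zeta_i,p-\zeta_i\}$, forcing every $\zeta\in\{1,\dots,\tfrac{p-1}{2}\}$ to occur, which is part~2. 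The delicate step is the construction together with the choice of uniform $\tau$: one must check the single‑frequency cosine neurons can be superposed so that \emph{all} data points are simultaneously on the margin \emph{and} all $p-1$ incorrect logits coincide — it is this complete destructive interference, powered by $\sum_{\zeta=1}^{(p-1)/2}\cos(2\pi\zeta d/p)=-\tfrac12$ for $d\ne0$, that makes the uniform weighting a valid certificate (i.e.\ closes the gap $g'=g$); the Fourier/AM--GM computation in the single‑neuron step is then routine.
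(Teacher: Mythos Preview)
Your proposal follows the paper's proof essentially step for step: uniform $q^*$ and uniform $\tau$, the Fourier reduction of the single-neuron class-weighted margin to $\tfrac{2}{p^2(p-1)}\sum_{j\neq 0}\hat u(j)\hat v(j)\overline{\hat w(j)}$, AM--GM plus a power-sum inequality to force single-frequency support (the paper phrases this as QM--GM plus Cauchy--Schwarz, but the content is identical), and an eight-neuron-per-frequency construction summing to a multiple of $\sum_\zeta \cos(2\pi\zeta(a+b-c)/p)$ so that all points lie on the margin with all incorrect logits tied.

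Your argument for part~2 is a bit cleaner than the paper's. The paper takes the full three-dimensional DFT of $f(a,b,c)$, observes that a frequency-$\zeta$ neuron contributes to $\hat f(j,k,\ell)$ only when $j=0$, $k=0$, or $j,k,\ell\in\{\pm\zeta\}$, and then uses that $\mathbbm{1}[a+b=c]$ has nonzero coefficient at every diagonal $(j,j,-j)$. You instead take only the one-dimensional DFT in $c$: each neuron's $c$-dependence is through $w_c$ alone, so its $c$-spectrum is $\{\pm\zeta_i\}$, while the indicator $\mathbbm{1}[c=a+b]$ (which appears in any max-margin $f$ by the equal-margin/equal-incorrect-logit consequence of Lemma~\ref{lem:backbone_lemma} with uniform $\tau$) has full $c$-spectrum. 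This avoids the multidimensional DFT entirely and is a genuine streamlining, though both arguments rest on the same structural fact.
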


\begin{proof}[Proof outline]
Following the blueprint described in the previous section, we first prove that neurons of the form above (and only these neurons) maximize the expected class-weighted margin $\E_{a,b}[\psi'(u,v,w)]$ with respect to the uniform distribution $q^* = \mathrm{unif}(\mathcal{X})$. We will use the uniform class weighting: $\tau(a,b)[c'] := 1/(p-1)$ for all $c' \neq a+b$. As a crucial intermediate step, we prove that
\[
\E_{a,b}[\psi'(\{u,v,w\},a,b)] = \frac{2}{(p-1)p^2}\sum_{j \neq 0} \hat{u}(j) \hat{v}(j) \hat{w}(-j),
\]
Maximizing the above expression subject to the max-margin norm constraint \\ $\sum_{j \neq 0} \left(|\hat{u}(j)|^2 + |\hat{v}(j)|^2 + |\hat{w}(j)|^2\right) \leq 1$ leads to sparsity in Fourier space.

Then, we describe a network $\theta^*$ (of width $4(p-1)$) composed of such neurons, and that satisfies Equation \ref{eq:q*} and condition \ref{c1}. By Lemma~\ref{lem:backbone_lemma}, part (1) of Theorem~\ref{thm:cyclic} will follow, and $\theta^*$ will be an example of a max-margin network. Finally, in order to show that all frequencies are used, we introduce the multidimensional discrete Fourier transform. We prove that each neuron only contributes a single frequency to the multi-dimensional DFT of the network; but that second part of Lemma~\ref{lem:backbone_lemma} implies that all frequencies are present in the full network's multidimensional DFT. The full proof can be found in Appendix~\ref{app:cyclic}.
\end{proof}
As demonstrated in Figure \ref{fig:intro} and \ref{fig:max_norm}, empirical networks trained with gradient descent with $L_{2,3}$ regularization approach the theoretical maximum margin, and have single frequency neurons. Figure \ref{fig:freq_neurons} in the Appendix verifies that all frequencies are present in the network.
\newcommand{\ba}{\va}
\newcommand{\bb}{\vb}
\newcommand{\bsigma}{\mathbf{\sigma}}

\section{Sparse parity} \label{sec:parity}

\begin{figure}[t]
\centering
\hspace*{\fill}
\begin{subfigure}{.4\textwidth}
  \centering
  \includegraphics[height=.18\textheight]{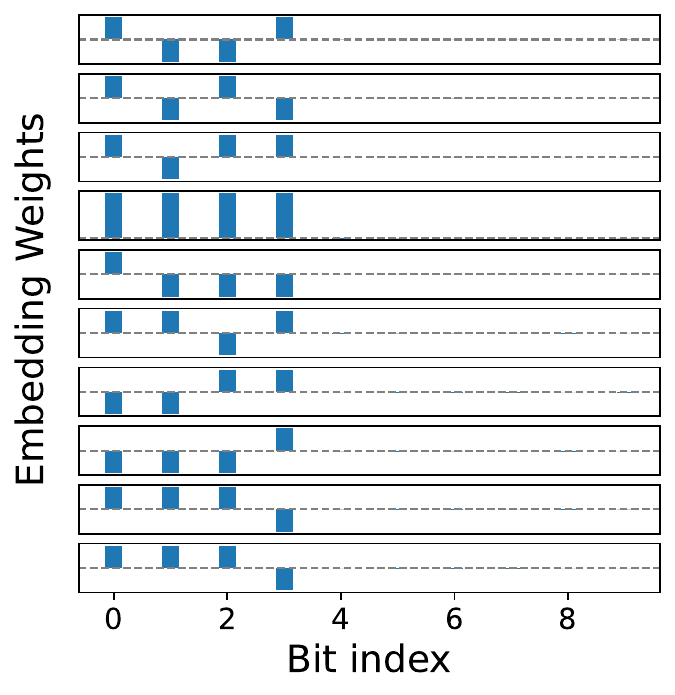}
    \caption{Embeddings}
\end{subfigure}
\begin{subfigure}{.4\textwidth}
  \centering
  \includegraphics[width=\textwidth]{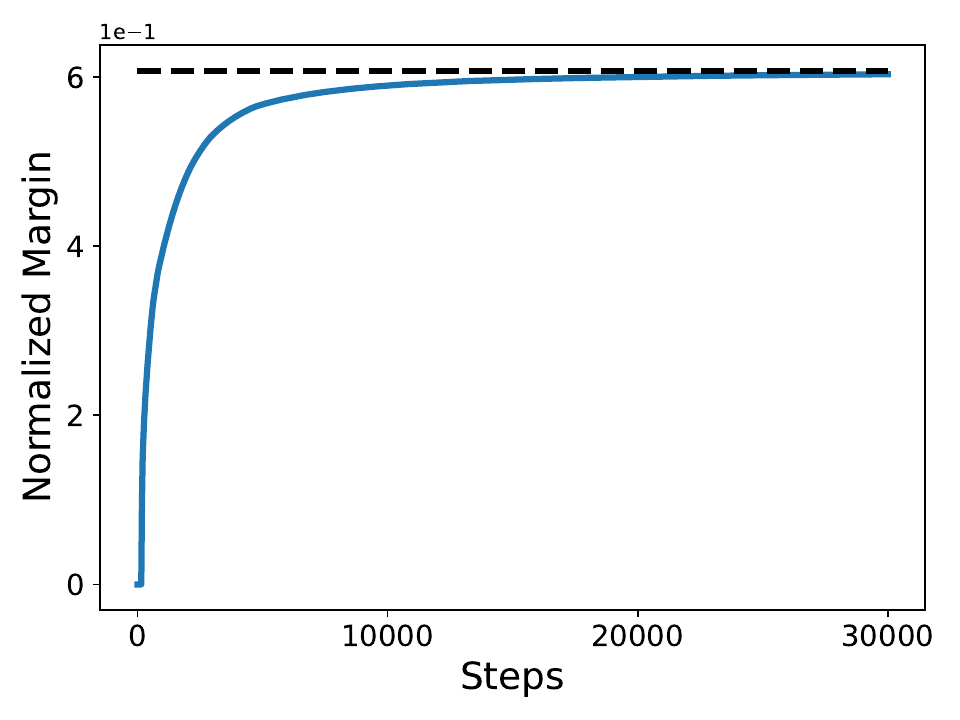}
    \caption{Normalized margin}
\end{subfigure}  
\hspace*{\fill}
\caption{Final neurons with highest norm and the evolution of normalized $L_{2,5}$ margin over training of a 1-hidden layer quartic network (activation $x^4$) on $(10,4)$ sparse parity dataset with $L_{2,5}$ regularization. The network approaches the theoretical maximum margin that we predict.}
\label{fig:parity_margin}
\end{figure}

In this section, we will establish the max margin features that emerge when training a neural network on the sparse parity task. Consider the $(n,k)$-sparse parity problem, where the parity is computed over $k$  bits out of $n$. To be precise, consider inputs $x_1,..., x_n \in \{\pm 1\}$. For a given subset $S \subseteq [n]$ such that $|S| = k$, the parity function is given by $\Pi_{j \in S} x_j$.

\begin{theorem}
\label{thm:parity}
    Consider a single hidden layer neural network of width $m$ with the activation function given by $x^k$, i.e, $f(x) = \sum_{i=1}^m (u_i^\top x)^k w_i$, where $u_i \in \mathbb{R}^n$ and $w_i \in \mathbb{R}^2$, trained on the $(n,k)-$sparse parity task. Without loss of generality, assume that the first coordinate of $w_i$ corresponds to the output for class $y=+1$. Denote the vector $[1,-1]$ by $\vb$. Provided $m \geq 2^{k-1}$, the $L_{2,k+1}$ maximum margin is:
    \[\gamma^*=k! \sqrt{2 (k+1)^{-(k+1)}}.\]
    Any network achieving this margin satisfies the following conditions:
    \begin{enumerate}
        \item For every $i$ having $\| u_i \| > 0$,  $\text{spt}(u_i) = S$, $w_i$ lies in the span of $\vb$ and $\forall j \in S$, $|u_i[j]| = \| w_i \|$.
        \item For every $i$, $\left(\Pi_{j \in S} u_i[j]\right) (w_i^\top \vb) \geq 0$.
    \end{enumerate}
\end{theorem}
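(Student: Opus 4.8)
The plan is to instantiate the blueprint of Section~\ref{sec:blueprint} for this binary classification problem ($\curlyy=\{+1,-1\}$), using Lemmas~\ref{lem:ind_neuron_bin_class} and~\ref{lem:char_max_marg}. The activation $\phi(\{u,w\},x)=(u^\top x)^k w$ is homogeneous of degree $\nu=k+1$, matching the second index of the $L_{2,k+1}$ norm, so the ``combining neurons'' half of Lemma~\ref{lem:ind_neuron_bin_class} is available. I would take $q^*=\mathrm{unif}(D)$ with $D=\{\pm1\}^n$. The first substantive step is to compute the per-neuron expected margin: writing $y(x)=\prod_{j\in S}x_j$, one checks $\psi(\{u,w\},x,y(x))=y(x)\,(u^\top x)^k\,(w^\top\vb)$, and then expanding $(u^\top x)^k$ and using that $\E_x[x_{j_1}\cdots x_{j_k}\prod_{j\in S}x_j]$ equals $1$ exactly when every coordinate occurs to an even power and $0$ otherwise (so that, since $|S|=k$, only the $k!$ multi-indices that are permutations of $S$ survive) yields the crucial identity
\[
\E_{x\sim q^*}\!\big[\psi(\{u,w\},x,y(x))\big]=k!\,(w^\top\vb)\!\prod_{j\in S}u_j .
\]

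Next I would solve the single-neuron optimization: maximize the right-hand side over $\|u\|_2^2+\|w\|_2^2\le 1$. Cauchy--Schwarz gives $w^\top\vb\le\sqrt2\,\|w\|_2$, AM--GM gives $\prod_{j\in S}|u_j|\le(\|u\|_2/\sqrt k)^k$, and a one-variable optimization of $\|w\|_2\,\|u\|_2^k$ under the budget pins down $\|u\|_2^2=\tfrac{k}{k+1}$ and $\|w\|_2^2=\tfrac1{k+1}$, with maximal value $k!\sqrt{2(k+1)^{-(k+1)}}$. Tracking the equality cases (all of which must hold strictly, since the optimal value is positive) identifies $\Omega^*_{q^*}$ as exactly the unit neurons with $\text{spt}(u)=S$, $w\in\mathrm{span}(\vb)$, $|u_j|=\|w\|_2$ for all $j\in S$, and $\big(\prod_{j\in S}u_j\big)(w^\top\vb)>0$. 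Conditions (1)--(2) of the theorem then follow immediately from the ``directional support only on $\Omega^*_{q^*}$'' conclusion of Lemma~\ref{lem:char_max_marg}: scaling a unit optimal neuron by any $\lambda>0$ preserves $|u_j|=\|w\|$, while zero neurons satisfy (2) trivially.

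To close the argument I would exhibit a certificate network $\theta^*$ built from $\Omega^*_{q^*}$ for which $q^*=\mathrm{unif}(D)$ satisfies Equation~\ref{eq:q*}, i.e.\ $g(\theta^*,x,y(x))$ is constant in $x$. I would use $2^{k-1}$ neurons indexed by the sign patterns $\epsilon\in\{\pm1\}^S$ modulo global negation: $u_i|_S=c\,\epsilon^{(i)}$, $u_i|_{S^c}=0$, $w_i\propto\big(\prod_j\epsilon^{(i)}_j\big)\vb$, with the common magnitudes chosen so that each neuron lands in $\Omega^*_{q^*}$ after the overall $L_{2,k+1}$ normalization $\sum_i\|\omega_i\|_2^{k+1}=1$ (and padding with zero neurons when $m>2^{k-1}$). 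The engine is the polarization-type identity
\[
\sum_{\epsilon\in\{\pm1\}^k}\Big(\prod_j\epsilon_j\Big)\Big(\sum_j\epsilon_j t_j\Big)^k=2^k\,k!\prod_j t_j,
\]
proved by the same even-power count as above; since $\epsilon$ and $-\epsilon$ contribute identically, summing the neuron outputs over the $2^{k-1}$ patterns makes $g(\theta^*,x,y(x))$ the constant $\gamma^*=k!\sqrt{2(k+1)^{-(k+1)}}$. Lemma~\ref{lem:max-min} applied to $(\theta^*,q^*)$ (with $g'=g$, since classification is binary) then certifies that $\gamma^*$ is the maximum margin; combined with the previous paragraph this gives the full statement.

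The routine parts are the Boolean-Fourier expectation and the convex optimization. I expect the main obstacle to be the certificate construction --- arranging sign patterns and weights so that every one of the $2^n$ data points lies exactly on the margin. This is precisely where the hypothesis $m\ge 2^{k-1}$ is consumed and where the polarization identity carries the work, and it requires checking that the per-neuron norm split forced in the optimization step is compatible with simultaneous tightness at all points (it is, because that split forces the $u$-entry magnitude and the $w$-scale to coincide, so the neurons assembled above genuinely lie in $\Omega^*_{q^*}$).
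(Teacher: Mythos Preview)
Your proposal is correct and follows essentially the same route as the paper's proof: uniform $q^*$, the same expected-margin identity $\E[\psi]=k!(w^\top\vb)\prod_{j\in S}u_j$, the same single-neuron optimization (the paper simply asserts the optimizer whereas you spell out Cauchy--Schwarz and AM--GM), and the same $2^{k-1}$-neuron certificate indexed by sign patterns in $\{\pm1\}^S$ modulo negation. Your explicit polarization identity is in fact a cleaner way to show constant margin than the paper's monomial-pairing argument, but the underlying mechanism is identical.
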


As shown in Figure \ref{fig:parity_margin}, a network trained with gradient descent and $L_{2, k+1}$ regularization exhibits these properties, and approaches the theoretically-predicted maximum margin. The proof for Theorem \ref{thm:parity} can be found in Appendix \ref{sec:proof_sp_parity}.
\section{Finite Groups with Real Representations} \label{sec:group}

\begin{figure}[t]
\centering
\begin{subfigure}{.4\textwidth}
  \centering
  \includegraphics[width=\textwidth]{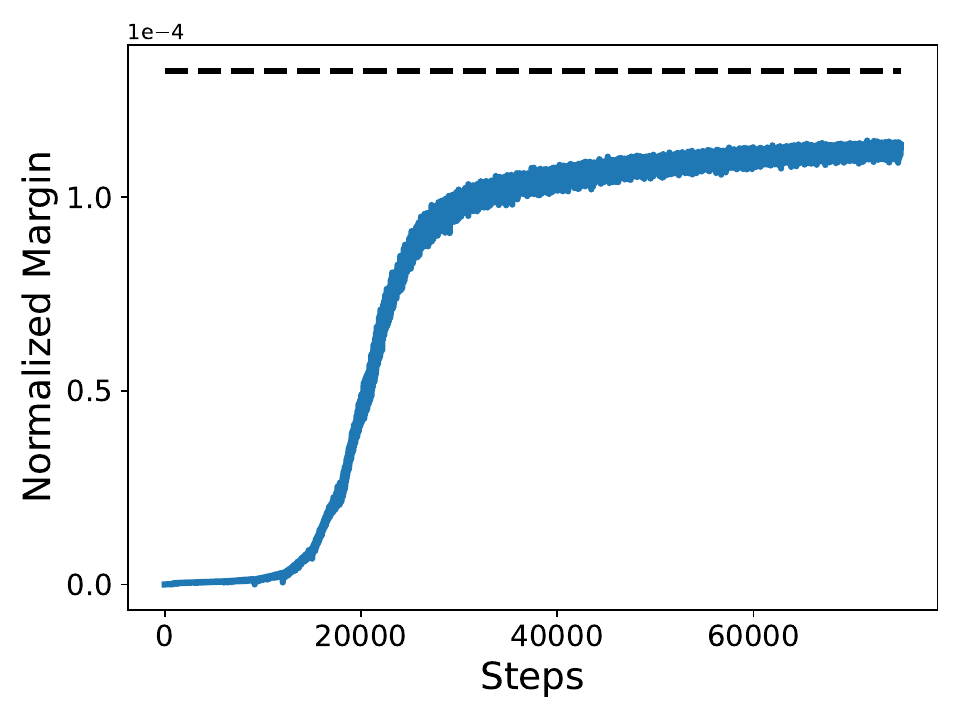}
    \caption{Normalized Margin}
\end{subfigure}
\begin{subfigure}{.4\textwidth}
  \centering
  \includegraphics[width=\textwidth]{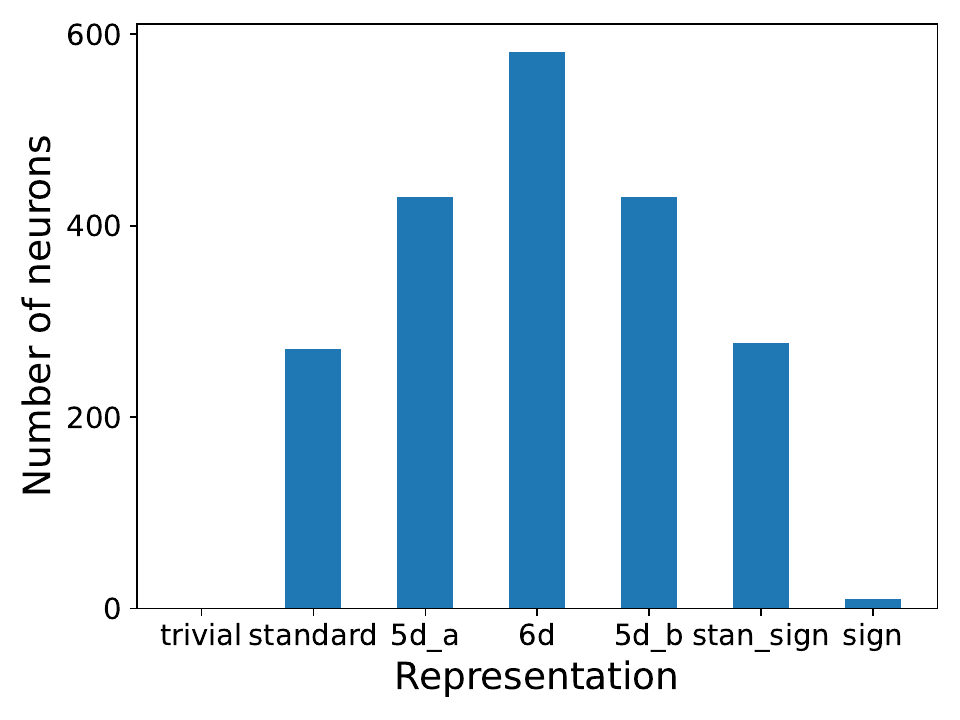}
    \caption{Representation distribution}
\end{subfigure}
\begin{subfigure}{.4\textwidth}
  \centering
  \includegraphics[width=\textwidth]{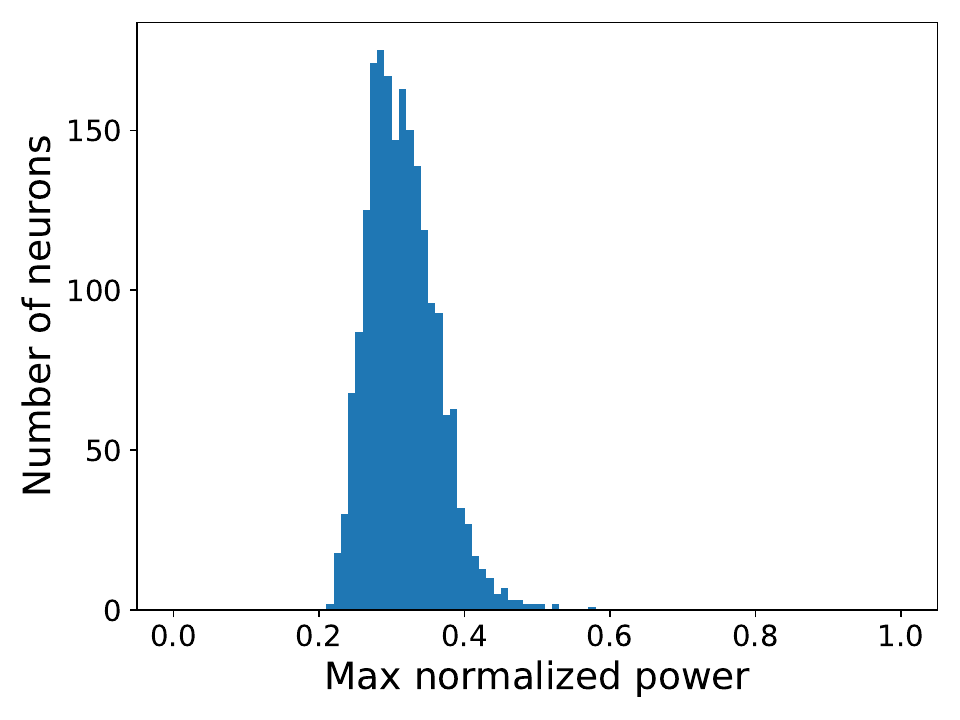}
    \caption{Initial Maximum Power Distribution}
\end{subfigure}
\begin{subfigure}{.4\textwidth}
  \centering
  \includegraphics[width=\textwidth]{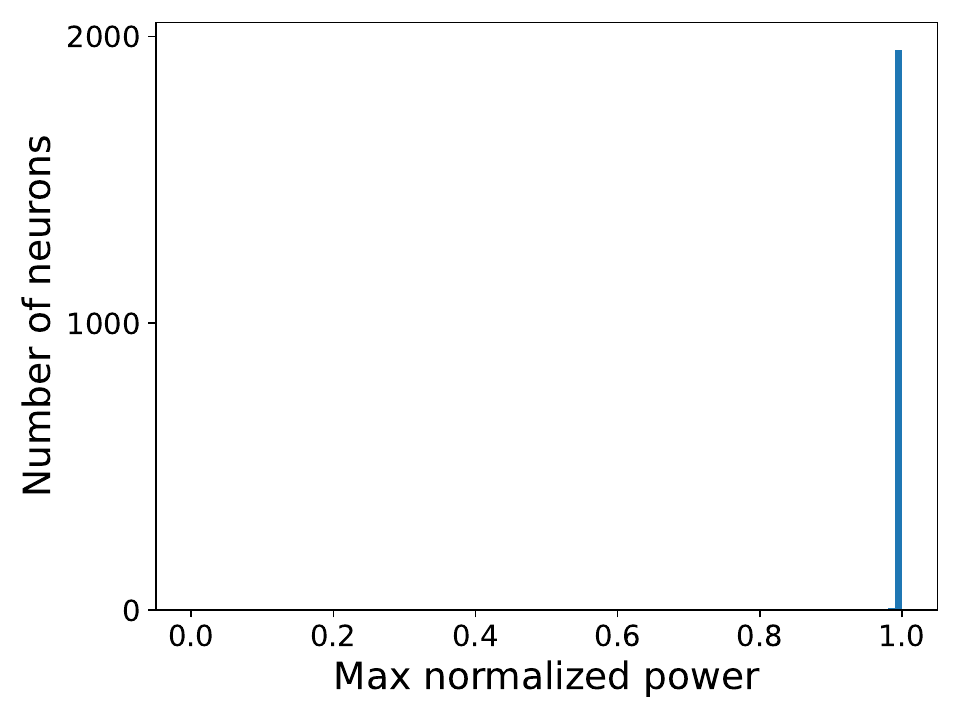}
    \caption{Final Maximum Power Distribution}
\end{subfigure}  
\caption{This figure demonstrates the training of a 1-hidden layer quadratic network on the symmetric group $S5$ with $L_{2,3}$ regularization. (a) Evolution of the normalized $L_{2,3}$ margin of the network with training. It approaches the theoretical maximum margin that we predict. (b) Distribution of neurons spanned by a given representation. Higher dimensional representations have more neurons as given by our construction. (c) and (d) Maximum normalized power is given by $\max_i \hat{u}[i]^2/(\sum_j \hat{u}[j]^2)$ where $\hat{u}[i]$ refers to the component of weight vector $u$ spanned by the basis vectors corresponding to $i^{th}$ representation. This is random at initialization, but towards the end of training, all neurons are concentrated in a single representation, as predicted by maximum margin.}
\label{fig:group}
\end{figure}

We conclude our case study on algebraic tasks by studying group composition on finite groups $G$. Namely, here we set $\mathcal{X}:= G \times G$ and output space $\mathcal{Y} := G$. Given inputs $a, b \in G$ we train the network to predict $c = ab$. We wish to characterize the maximum margin features similarly to the case of modular addition; here, our analysis relies on principles from group representation theory. 

\subsection{Brief Background and Notation}

The following definitions and notation are essential for stating our main result, and further results are presented with more rigor in Appendix~\ref{sec:rep_theory_primer}. 

A real representation of a group $G$ is a finite dimensional real vector space $V = \R^d$ and a group homomorphism (i.e. a map preserving the group structure) $R: G \to GL(V)$. We denote such a representation by $(R, V)$ or just by $R$. The dimension of a representation $R$, denoted $d_R$, is the dimension of $V$. Our analysis focuses on \emph{unitary, irreducible, real} representations of $G$. The number of such representations is precisely equal to the number of conjugacy classes of $G$ where the conjugacy class of $a \in G$ is defined as $C(a) = \{gag^{-1} : g \in G\}$.

A quantity important to our analysis is the \emph{character} of a representation $R$, denoted $\chi_R: G \to \R$ given by $\chi_R(g) = \trace(R(g))$. It was previously observed by \cite{chughtai2023toy} that one-layer ReLU MLPs and transformers learn the task by mapping inputs $a, b$ to their respective matrices $R(a), R(b)$ for some irreducible representation $R$ and performing matrix multiplication with $R(c^{-1})$ to output logits proportional to the character $\chi_R(abc^{-1}) = \trace(R(a)R(b)R(c^{-1}))$, which is in particular maximized when $c = ab$. They also find evidence of network weights being \emph{spanned by representations}, which we establish rigorously here.

For each representation $R$ we will consider the $|G|$-dimensional vectors by fixing one index in the matrices outputted by $R$, i.e. vectors $(R(g)_{(i, j)})_{g \in G}$ for some $i, j \in [d_R]$. For each $R$, this gives ${d_R}^2$ vectors. Letting $K$ be the number of conjugacy classes and $R_1, \dots, R_K$ be the corresponding irreducible representations, since $|G| = \sum_{n=1}^K d_{R_n}^2$, taking all such vectors for each representation will form a set of $|G|$ vectors which we will denote $\rho_1,...,\rho_{|G|}$ ($\rho_1$ is always the vector corresponding to the trivial representation). These vectors are in fact orthogonal, which follows from orthogonality relations of the representation matrix elements $R(g)_{(i, j)}$ (see Appendix~\ref{sec:rep_theory_primer} for details). Thus, we refer to this set of vectors as \emph{basis vectors} for $\R^{|G|}$. One can ask whether the maximum margin solution in this case has neurons which are spanned only by basis vectors corresponding to a single representation $R$, and if all representations are present in the network--- the analogous result we obtained for modular addition in Theorem~\ref{thm:cyclic}. We show that this is indeed the case.

\subsection{The Main Result}
Our main result characterizing the max margin features for group composition is as follows.

\begin{theorem}
\label{thm:group}
    Consider a single hidden layer neural network of width $m$ with quadratic activation trained on learning group composition for $G$ with real irreducible representations. Provided $m \geq 2\sum_{n=2}^K {d_{R_n}}^3$ and $\sum_{n=2}^K {d_{R_n}}^{1.5}\chi_{R_n}(C) < 0$ for every non-trivial conjugacy class $C$, the $L_{2,3}$ maximum margin is:
    \[
    \gamma^*=\frac{2}{3 \sqrt{3|G|}} \frac{1}{\left(\sum_{n=2}^K d_{R_n}^{2.5} \right)}.
    \]
    Any network achieving this margin satisfies the following conditions:
    \begin{enumerate}
        \item For every neuron, there exists a non-trivial representation such that the input and output weight vectors are spanned only by that representation.
        \item There exists at least one neuron spanned by each representation (except for the trivial representation) in the network.
    \end{enumerate}
\end{theorem}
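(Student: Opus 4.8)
The plan is to follow the blueprint of Section~\ref{sec:blueprint}: exhibit a certificate pair $(\theta^*,q^*)$ and a class weighting $\tau$ satisfying the hypotheses of Lemma~\ref{lem:backbone_lemma}, then read off conditions (1) and (2). Take $q^*$ uniform on $G\times G$. The essential new ingredient is a \emph{non-uniform} class weighting adapted to the group: for $c'\neq ab$ set
\[
\tau(a,b)[c'] := \frac{1}{Z}\Bigl(-\sum_{n=2}^{K} d_{R_n}^{3/2}\,\chi_{R_n}\bigl(ab(c')^{-1}\bigr)\Bigr),\qquad Z := \sum_{n=2}^{K} d_{R_n}^{5/2}.
\]
Its nonnegativity is exactly the hypothesis $\sum_{n\geq2}d_{R_n}^{3/2}\chi_{R_n}(C)<0$ on nontrivial conjugacy classes, and $\sum_{c'\neq ab}\tau(a,b)[c']=1$ follows from $\sum_{g\in G}\chi_{R_n}(g)=0$ for $n\geq2$ (whence $\sum_{g\neq e}\chi_{R_n}(g)=-d_{R_n}$), so $\tau$ is a legitimate weighting.

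First I would express the expected class-weighted margin $\E_{q^*}[\psi']$ representation-theoretically. Using one-hotness, $\psi'(\{u,v,w\},a,b)=\tfrac{1}{Z}(u_a+v_b)^2(\beta\ast w)(ab)$ with $\beta(g):=\sum_{n\geq2}d_{R_n}^{3/2}\chi_{R_n}(g)$ and $\ast$ group convolution. Expanding the square, substituting $c=ab$, and averaging over $(a,b)$ reduces the expectation to $\|u\|^2,\|v\|^2$ and the pairing $\langle u\ast v,\,\beta\ast w\rangle$; the Schur relation $\sum_g\chi_{R_m}(g)R_n(g)=\tfrac{|G|}{d_{R_n}}\delta_{mn}I$ gives $\widehat{\beta\ast w}(R_n)=|G|\,d_{R_n}^{1/2}\,\hat w(R_n)$ for $n\geq2$ and $0$ on the trivial representation, so the trivial-isotypic terms cancel and Plancherel yields
\[
\E_{(a,b)\sim q^*}\bigl[\psi'(\{u,v,w\},a,b)\bigr]=\frac{2}{|G|^2 Z}\sum_{n=2}^{K} d_{R_n}^{3/2}\,\mathrm{Tr}\bigl[\hat u(R_n)\hat v(R_n)\hat w(R_n)^{\top}\bigr].
\]
Rewriting in the orthonormal basis $\{\rho_\ell\}$ introduced above --- where $\hat u(R_n)=\sqrt{|G|/d_{R_n}}\,c^{(n)}_u$ and $\|u\|^2=\sum_n\|c^{(n)}_u\|_F^2$ --- the dimension factors cancel entirely, leaving the symmetric expression $\tfrac{2}{\sqrt{|G|}\,Z}\sum_{n\geq2}\mathrm{Tr}[c^{(n)}_u c^{(n)}_v(c^{(n)}_w)^{\top}]$; producing this cancellation is exactly what $\tau$ is engineered for, and the display is the group analogue of the Fourier-spectrum formula in the proof of Theorem~\ref{thm:cyclic}.

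Next, the single-neuron part of Lemma~\ref{lem:ind_neuron} reduces to maximizing $\sum_{n\geq2}\mathrm{Tr}[c^{(n)}_u c^{(n)}_v(c^{(n)}_w)^{\top}]$ over $\sum_n(\|c^{(n)}_u\|_F^2+\|c^{(n)}_v\|_F^2+\|c^{(n)}_w\|_F^2)\leq1$. Within one representation $\mathrm{Tr}[ABC^{\top}]\leq\|A\|_F\|B\|_F\|C\|_F$, with equality iff $A,B,C$ are rank one with matching row/column supports; AM--GM caps representation $n$'s contribution by $(\sigma_n/3)^{3/2}$ for $\sigma_n$ its norm budget, and superadditivity of $t\mapsto t^{3/2}$ forces the whole budget onto a single representation. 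So the single-neuron optima are exactly the neurons spanned by a single nontrivial representation (with rank-one Fourier coefficients) --- condition (1), via Lemma~\ref{lem:ind_neuron} --- and the optimal value is $\gamma^*=\tfrac{2}{3\sqrt{3|G|}}\cdot\tfrac{1}{\sum_{n\geq2}d_{R_n}^{5/2}}$. For the certificate network, for each nontrivial $R_n$ and each $(i,j,k)\in[d_{R_n}]^3$ I would take the neuron $u(\cdot)=R_n(\cdot)_{ij}$, $v(\cdot)=R_n(\cdot)_{jk}$, $w(\cdot)=R_n(\cdot)_{ik}$ together with its sign-flipped partner $(-u,v,-w)$; each pair contributes $4R_n(a)_{ij}R_n(b)_{jk}R_n(c)_{ik}$ to $f(\theta,a,b)[c]$, so summing over $(i,j,k)$ gives $4\chi_{R_n}(abc^{-1})$, using $2d_{R_n}^3$ neurons per representation and $2\sum_{n\geq2}d_{R_n}^3\leq m$ in all. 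Rescaling the representation-$n$ block by $\lambda_n\propto d_{R_n}^{-1/6}$, normalized so $\sum_i\lambda_i^3=1$, makes $f(\theta^*,a,b)[c]$ proportional to $\sum_{n\geq2}d_{R_n}\chi_{R_n}(abc^{-1})$, which by column orthogonality ($\sum_{n=1}^{K}d_{R_n}\chi_{R_n}(g)$ equals $|G|$ when $g=e$ and $0$ otherwise) is a fixed negative constant for every $c\neq ab$ and strictly larger at $c=ab$. Hence every datapoint is on the margin at value exactly $\gamma^*$ (so $q^*$ satisfies Equation~\ref{eq:q*}), the incorrect logits are tied so $g'(\theta^*,\cdot)=g(\theta^*,\cdot)$ (condition~\ref{c1}), and $\theta^*$ maximizes $\E_{q^*}[g']$ by the combining-neurons part of Lemma~\ref{lem:ind_neuron}; Lemma~\ref{lem:backbone_lemma} then certifies $\gamma^*$ as the maximum margin and condition (1) for every max-margin network.

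Finally, for condition (2), let $\hat\theta$ be any max-margin network. Lemma~\ref{lem:backbone_lemma} gives $g'(\hat\theta,\cdot)=g(\hat\theta,\cdot)=\gamma^*$ on all of $D$, and since $\tau$ is strictly positive on every incorrect label this forces $f(\hat\theta,a,b)[c]$ to be constant over $c\neq ab$, say equal to $A(a,b)$ when $c=ab$ and to $A(a,b)-\gamma^*$ otherwise. Transforming in the $c$ variable and using $\sum_c R_\ell(c)=0$ for nontrivial $R_\ell$ gives $\sum_c f(\hat\theta,a,b)[c]\,R_\ell(c)=\gamma^*R_\ell(a)R_\ell(b)$, which is invertible and hence nonzero. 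But every neuron of $\hat\theta$ is spanned by a single representation, so the $R_\ell$-Fourier coefficient of its output weight vanishes unless that representation is $R_\ell$; therefore some neuron must be spanned by $R_\ell$, for each nontrivial $R_\ell$. The step I expect to be the main obstacle is the certificate construction above: getting the explicit neurons to lie in the single-neuron-optimal set (the rank-one/alignment conditions), to collapse to the constant-on-incorrect-labels logit profile, and to hit the exact constant $\gamma^*$ after the $\lambda_n\propto d_{R_n}^{-1/6}$ reweighting and the $\sum_i\lambda_i^3=1$ normalization --- this is where all the dimension bookkeeping, and the width bound $m\geq2\sum_{n\geq2}d_{R_n}^3$, must line up exactly. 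A secondary subtlety is extracting the equality case of $\mathrm{Tr}[ABC^{\top}]\leq\|A\|_F\|B\|_F\|C\|_F$ precisely enough inside each representation, although only the coarser ``spanned by a single representation'' statement is actually needed.
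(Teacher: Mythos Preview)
Your proposal is correct and follows essentially the same route as the paper. The class weighting you write down, $\tau(a,b)[c']=-\tfrac{1}{Z}\sum_{n\ge 2}d_{R_n}^{3/2}\chi_{R_n}(ab(c')^{-1})$, is exactly the paper's conjugacy-class weighting (their Lemma~\ref{lemm:conj_class_weights}), just expressed elementwise; your convolution identity $\psi'=\tfrac{1}{Z}(u_a+v_b)^2(\beta\ast w)(ab)$ and the Plancherel reduction to $\sum_{n\ge 2}\mathrm{Tr}[c^{(n)}_u c^{(n)}_v(c^{(n)}_w)^{\top}]$ recast their Lemmas~\ref{lemma:weighted_margin}--\ref{lemma:weighted_margin_optimization} more compactly. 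The certificate network and the ``all representations used'' argument match the paper's construction and multidimensional-Fourier proof, with your $c$-variable transform being a cleaner version of the same idea. The apparent discrepancy in scalings ($\lambda_n\propto d_{R_n}^{-1/6}$ versus the paper's $d_R^{1/3}$) is only a normalization difference: your base neurons $u=R_n(\cdot)_{ij}$ have $\|u\|^2+\|v\|^2+\|w\|^2=3|G|/d_{R_n}$, whereas the paper's have $1/d_{R_n}$, and once both are brought to unit norm the two scalings agree.
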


The complete proof for Theorem~\ref{thm:group} can be found in Appendix~\ref{app:group_proofs}.

The condition that $\sum_{n=2}^K {d_{R_n}}^{1.5}\chi_{R_n}(C) < 0$ for every non-trivial conjugacy class $C$ holds for the symmetric group $S_k$ up until $k=5$. In this case, as shown in Figure~\ref{fig:group}, network weights trained with gradient descent and $L_{2, 3}$ regularization exhibit similar properties. The maximum margin of the network approaches what we have predicted in theory.
Analogous results for training on $S_3$ and $S_4$ in Figures~\ref{fig:group-k3} and \ref{fig:group-k4} are in the Appendix.

Although Theorem~\ref{thm:group} does not apply to all finite groups with real representations, it can be extended to apply more generally. The theorem posits that \emph{every} representation is present in the network, and \emph{every} conjugacy class is present on the margin. Instead, for general finite groups, each neuron still satisfies the characteristics of max margin solutions in that it is only spanned by one non-trivial representation, but only a \emph{subset} of representations are present in the network; moreover, only a subset of conjugacy classes are present on the margin. More details are given in Appendix~\ref{sec:general_group_theorem}.
\section{Discussion}\label{sec:discussion}
We have shown that the simple condition of margin maximization can, in certain algebraic learning settings, imply very strong conditions on the representations learned by neural networks. The mathematical techniques we introduce are general, and may be able to be adapted to other settings than the ones we consider. Our proof holds for the case of $x^2$ activations ($x^k$ activations, in the $k$-sparse parity case) and $L_{2,\nu}$ norm, where $\nu$ is the homogeneity constant of the network. Empirical findings suggest that the results may be transferable to other architectures and norms. In general, we think explaining how neural networks adapt their representations to symmetries and other structure in data is an important subject for future theoretical and experimental inquiry.
\section{Acknowledgments}\label{sec:acknowledgments}

We thank Boaz Barak for helpful discussions. This work has been made possible in part by a gift from the Chan Zuckerberg Initiative Foundation to establish the Kempner Institute for the Study of Natural and Artificial Intelligence. Sham Kakade acknowledges funding from the Office of Naval Research under award N00014-22-1-2377. Ben Edelman acknowledges funding from the National Science Foundation Graduate Research Fellowship Program under award DGE-214074. Depen Morwani, Costin-Andrei Oncescu and Rosie Zhao acknowledge support from Simons Investigator Fellowship, NSF grant DMS-2134157, DARPA grant W911NF2010021, and DOE grant DE-SC0022199.

\bibliography{refs}
\bibliographystyle{iclr/iclr2024_conference}

\newpage
\appendix
\part{Appendix} 
\textbf{Appendix Organization:} In Appendix \ref{sec:related}, we discuss further related work. Appendix \ref{app:hyper} provides the hyperparameter details for various experiments. Appendix \ref{app:add_exp} provides additional experimental results. In Appendix \ref{sec:dumb_cons}, we describe an alternative network construction for the modular addition task, which does not exhibit Fourier sparsity. The proofs for Section \ref{sec:approach} are provided in Appendix \ref{app:theoretical}. Proofs for the three case studies have been provided in Appendix \ref{app:cyclic}, \ref{sec:proof_sp_parity} and \ref{app:group_proofs}. Additional group representation theory preliminaries can be found in Appendix \ref{sec:rep_theory_primer}.

\newpage
\section{Further Related Work}\label{sec:related}

Two closely related works to ours are  \cite{gromov2023grokking} and \cite{bronstein2022inductive}.
\cite{gromov2023grokking} provides an analytic construction of various two-layer quadratic networks that can solve the modular addition task. The construction used in the proof of Theorem \ref{thm:cyclic} is a special case of the given scheme. \cite{bronstein2022inductive}shows that all max margin solutions of a one-hidden-layer ReLU network (with fixed top weights) trained on read-once DNFs have neurons which align with clauses. However, the proof techniques are significantly different. For any given neural network not satisfying the desired conditions ((neurons aligning with the clauses), \cite{bronstein2022inductive} construct a perturbed network satisfying the conditions which exhibits a better margin. We rely on the max-min duality for certifying a maximum margin solution, as shown in Section \ref{sec:blueprint}.

\textbf{Margin maximization.} One branch of results on margin maximization in neural networks involve proving that the optimization of neural networks leads to an implicit bias towards margin maximization. \cite{soudry2018implicit} show that logistic regression converges in direction to the max margin classifier. \cite{wei2019regularization} prove that the global optimum of weakly-regularized cross-entropy loss on homogeneous networks reaches the max margin. Similarly, \cite{lyu2019gradient} and \cite{ji2020directional} show that in homogeneous networks, even in the absence of explicit regularization, if loss becomes low enough then the weights will tend in direction to a KKT point of the max margin optimization objective. This implies margin maximization in deep linear networks, although it is not necessarily the global max margin \citep{vardi2022margin}. \cite{chizat2020implicit} prove that infinite-width 2-homogeneous networks with mean field initialization will converge to the global max margin solution. In a different setting, \cite{lyu2021gradient} and \cite{frei2022implicit} show that the margin is maximized when training leaky-ReLU one hidden layer networks with gradient flow on linearly separable data, given certain assumptions on the input (eg. presence of symmetries, near-orthogonality). For more on studying inductive biases in neural networks, refer to \cite{vardi2023implicit}.

Numerous other works do not focus on neural network dynamics and instead analyze properties of solutions with good margins \citep{bartlett1996valid}. For instance, \cite{frei2023benign} show that the maximum margin KKT points have ``benign overfitting” properties. The works by \cite{lyu2021gradient}, \cite{morwani2023simplicity} and \cite{frei2023benign} show that max margin implies linear decision boundary for solutions.  \cite{gunasekar2018implicit} show that under certain assumptions, gradient descent on depth-two linear convolutional networks (with weight-sharing in first layer) converges not to the standard $L_2$ max margin, but to the global max margin with respect to the $L_1$ norm of the Fourier transform of the predictor. Our work follows a similar vein, in which we characterize max margin features in our setting and relate this to trained networks via results from \cite{wei2019regularization}.

\textbf{Training on algebraic tasks and mechanistic interpretability.} Studying neural networks trained on algebraic tasks has offered insights into their training dynamics and inductive biases, with the simpler setting lending a greater ease of understanding. One such example is the task of modular addition, which was studied in \cite{power2022grokking} in their study of grokking, leading to multiple follow-up works \citep{liu2022towards, liu2023omnigrok}. Another example is the problem of learning parities for neural networks, which has been investigated in numerous works \citep{daniely2020learning, zhenmei2022theoretical, frei2022random, barak2022hidden, edelman2023pareto}. Other mathematical tasks like learning addition have been used to investigate whether models possess algorithmic reasoning capabilities \citep{saxton2018analysing, hendrycks2021measuring, lewkowycz2022solving}.

The area of mechanistic interpretability aims to understand the internal representations of individual neural networks by analyzing its weights. This form of analysis has been applied to understand the motifs and features of neurons in \textit{circuits}--- particular subsets of a neural network--- in computer vision models~\citep{olah2020zoom, cammarata2020curve} and more recently in language models~\citep{elhage2021mathematical, olsson2022context}. However, the ability to fully reverse engineer a neural network is extremely difficult for most tasks and architectures. Some work in this area has shifted towards finding small, toy models that are easier to interpret, and employing labor intensive approaches to reverse-engineering specific features and circuits in detail\citep{elhage2022toy}. In \citet{nanda2023progress}, the authors manage to fully interpret how one-layer transformers implement modular addition and use this knowledge to define progress measures that precede the grokking phase transition which was previously observed to occur for this task~\citep{power2022grokking}. \cite{chughtai2023toy} extends this analysis to learning composition for various finite groups, and identifies analogous results and progress measures. In this work, we show that these empirical findings can be analytically explained via max margin analysis, due to the implicit bias of gradient descent towards margin maximization.
\section{Experimental details} \label{app:hyper}
In this section, we will provide the hyperparameter settings for various experiments in the paper.

\subsection{Cyclic Group}
We train a 1-hidden layer network with $m=500$, using gradient descent on the task of learning modular addition for $p=71$ for $40000$ steps. The initial learning rate of the network is $0.05$, which is doubled on the steps - $[1e3, 2e3, 3e3, 4e3, 5e3, 6e3, 7e3, 8e3, 9e3, 10e3]$. Thus, the final learning rate of the network is $51.2$. This is done to speed up the training of the network towards the end, as the gradient of the loss goes down exponentially. For quadratic network, we use a $L_{2,3}$ regularization of $1e-4$. For ReLU network, we use a $L_2$ regularization of $1e-4$. 

\subsection{Sparse parity}
We train a 1-hidden layer quadratic network with $m=40$ on $(10,4)-$sparse parity task. It is trained by gradient descent for $30000$ steps with a learning rate of $0.1$ and $L_{2,5}$ regularization of $1e-3$. 

\subsection{General Groups}
The hyperparameters for various groups $S_3, S_4$, and $S_5$ are provided in subsections below.

\subsubsection{S3}
We train a 1-hidden layer quadratic network with $m=30$, using gradient descent for $50000$ steps, with a $L_{2,3}$ regularization of $1e-7$. The initial learning rate is $0.05$, which is doubled on the steps - $[200, 400, 600, 800, 1000, 1200, 1400, 1600, 1800, 2000, 2200, 2400, 2600, 5000, 10000]$. Thus, the final learning rate is $1638.4$. This is done to speed up the training of the network towards the end, as the gradient of the loss goes down exponentially.

\subsubsection{S4}
We train a 1-hidden layer quadratic network with $m=200$, using gradient descent for $50000$ steps, with a $L_{2,3}$ regularization of $1e-7$. The initial learning rate is $0.05$, which is doubled on the steps - $[200, 400, 600, 800, 1000, 1200, 1400, 1600, 1800, 2000, 2200, 2400, 2600, 5000, 10000]$. Thus, the final learning rate is $1638.4$. This is done to speed up the training of the network towards the end, as the gradient of the loss goes down exponentially.

\subsubsection{S5}
We train a 1-hidden layer quadratic network with $m=2000$, using stochastic gradient descent for $75000$ steps, with a batch size of $1000$ and $L_{2,3}$ regularization of $1e-5$. The initial learning rate is $0.05$, which is doubled on the steps - $[3000,  6000,  9000, 12000, 15000, 18000, 21000, 24000]$. Thus, the final learning rate is $12.8$. This is done to speed up the training of the network towards the end, as the gradient of the loss goes down exponentially.
\section{Additional Experiments} \label{app:add_exp}
The distribution of neurons of a particular frequency for the modular addition case is shown in Figure \ref{fig:freq_neurons}. As can be seen, for both ReLU and quadratic activation, the distribution is close to uniform. 

Experimental results for other symmetric groups $S_3$ and $S_4$ in Figures~\ref{fig:group-k3} and \ref{fig:group-k4} respectively. We observe the same max margin features as stated in Theorem~\ref{thm:group} and the $L_{2,3}$ margin approaches the theoretical max margin that we have predicted.

\begin{figure}[t]
\centering
\begin{subfigure}{.3\textwidth}
  \centering
  \includegraphics[width=\textwidth]{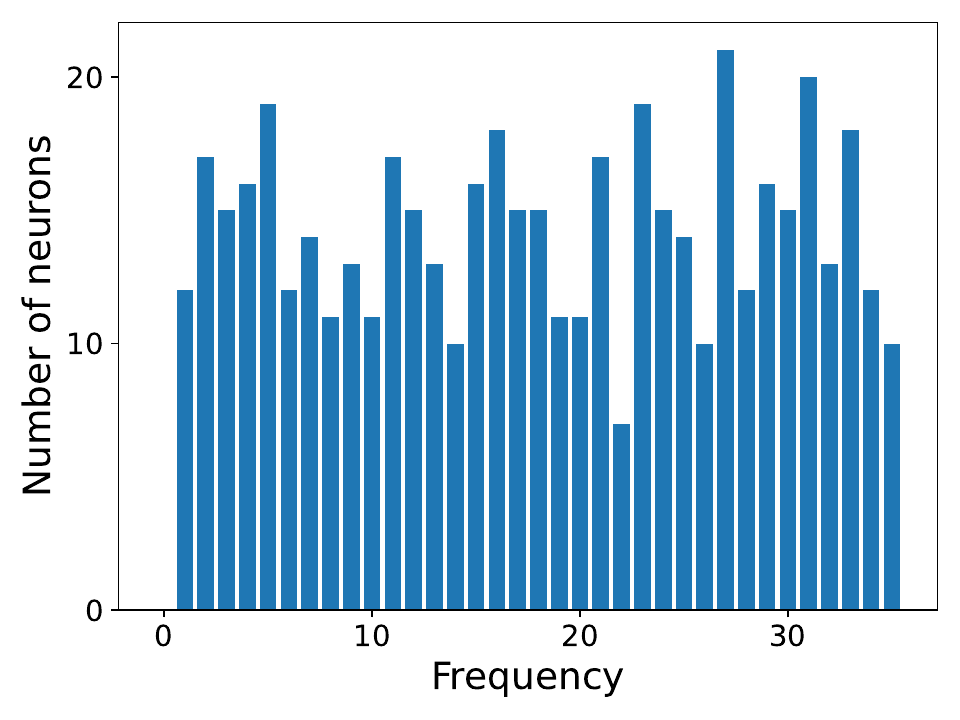}
    \caption{ReLU activation}
\end{subfigure}
\begin{subfigure}{.3\textwidth}
  \centering
  \includegraphics[width=\textwidth]{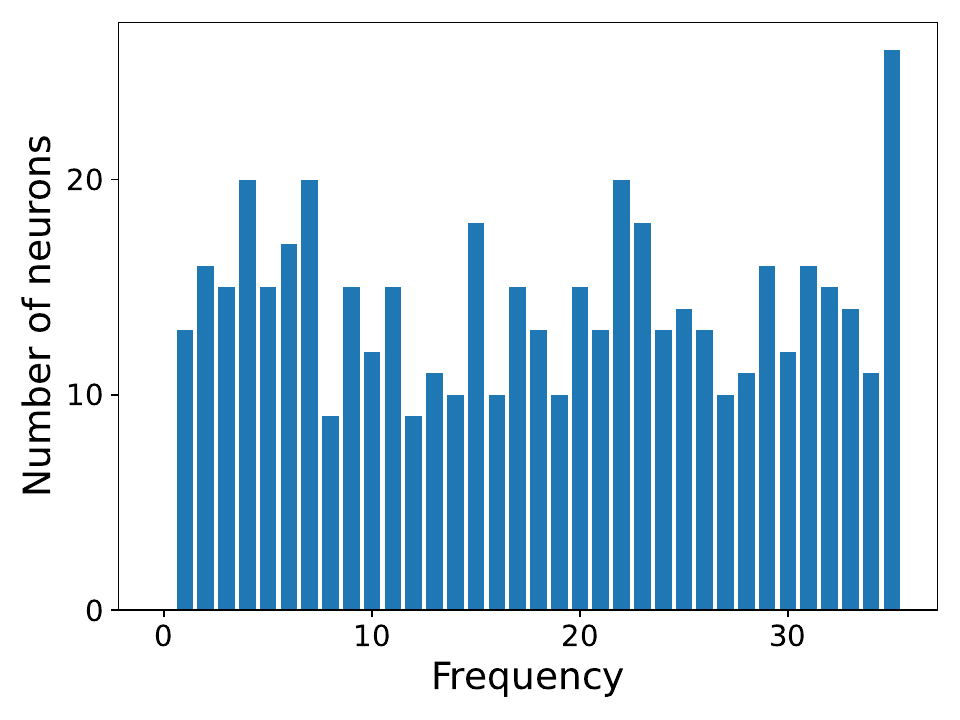}
    \caption{Quadratic activation}
\end{subfigure}         
\caption{Final distribution of the neurons corresponding to a particular frequency in (a) ReLU network trained with $L_2$ regularization and (b) Quadratic network trained with $L_{2,3}$ regularization. Similar to our construction, the final distribution across frequencies is close to uniform.}
\label{fig:freq_neurons}
\end{figure}

\begin{figure}[t]
\centering
\begin{subfigure}{.4\textwidth}
  \centering
  \includegraphics[width=\textwidth]{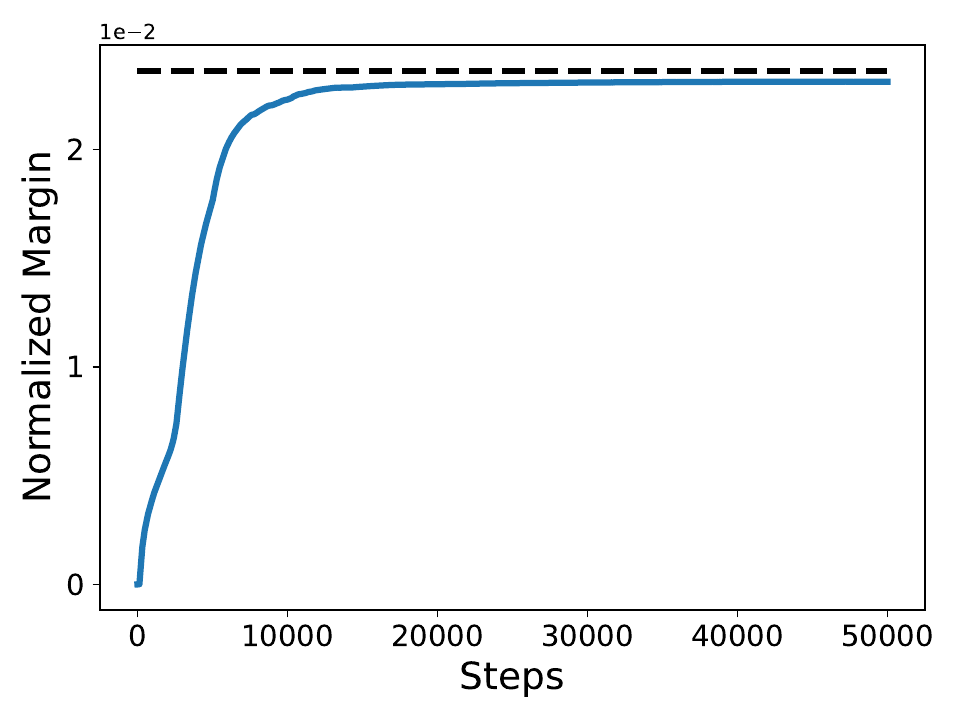}
    \caption{Normalized Margin}
\end{subfigure}
\begin{subfigure}{.4\textwidth}
  \centering
  \includegraphics[width=\textwidth]{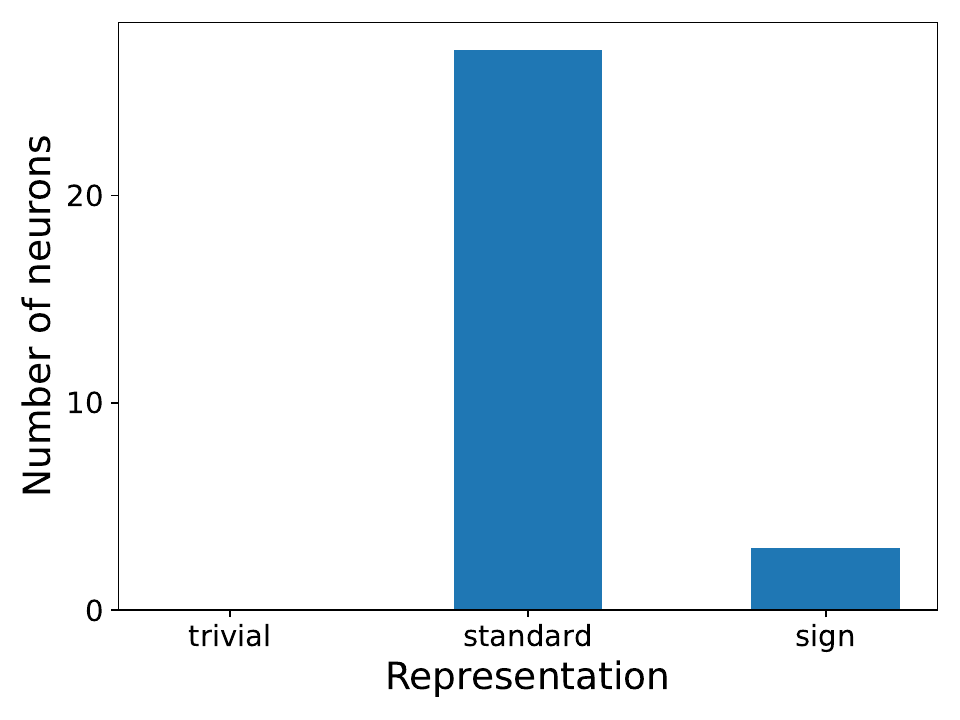}
    \caption{Representation distribution}
\end{subfigure}
\begin{subfigure}{.4\textwidth}
  \centering
  \includegraphics[width=\textwidth]{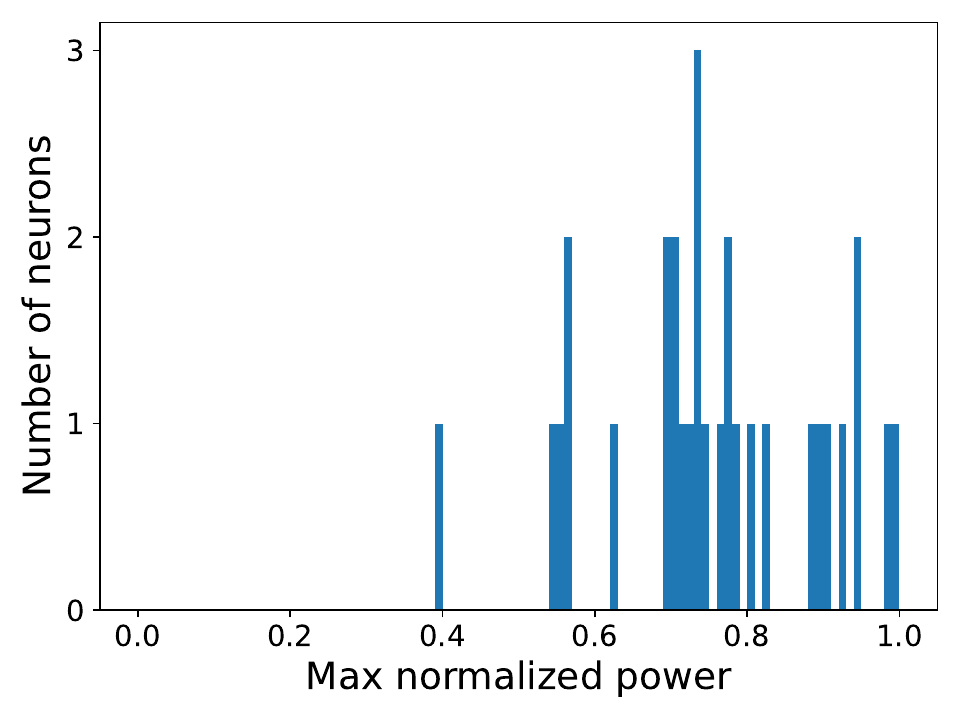}
    \caption{Initial Maximum Power Distribution}
\end{subfigure}
\begin{subfigure}{.4\textwidth}
  \centering
  \includegraphics[width=\textwidth]{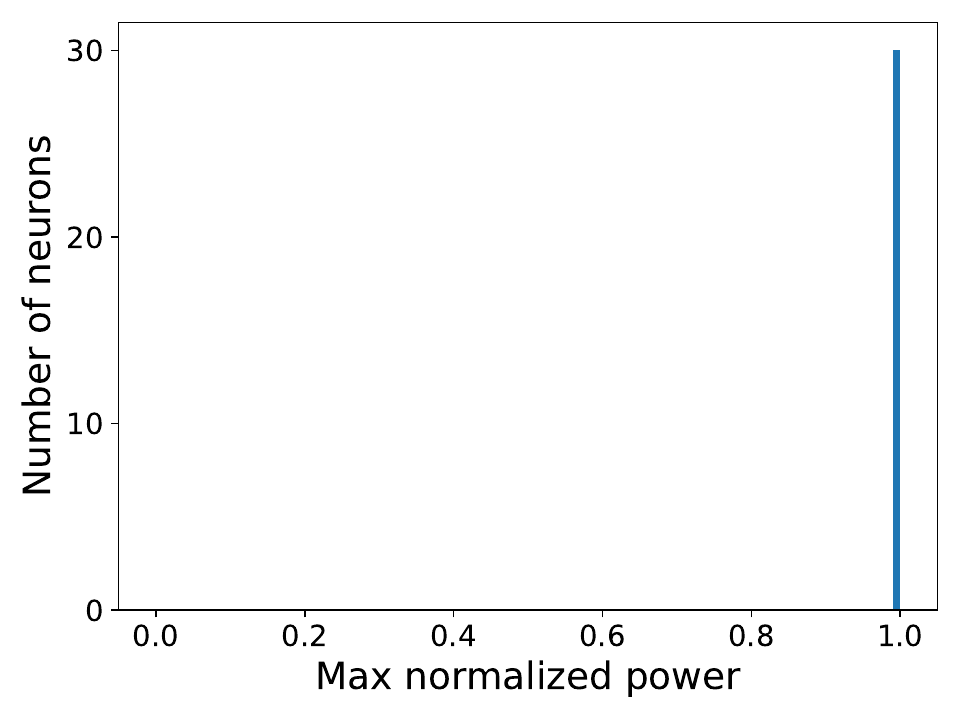}
    \caption{Final Maximum Power Distribution}
\end{subfigure}  
\caption{This figure demonstrates the training of a 1-hidden layer quadratic network on the symmetric group $S3$ with $L_{2,3}$ regularization. (a) Evolution of the normalized $L_{2,3}$ margin of the network with training. It approaches the theoretical maximum margin that we predict. (b) Distribution of neurons spanned by a given representation. Higher dimensional representations have more neurons as given by our construction. (c) and (d) Maximum normalized power is given by $\frac{\max \hat{u}[i]^2}{\sum_j \hat{u}[j]^2}$ where $\hat{u}[i]$ refers to the component of weight $u$ along $i^{th}$ representation. Initially, it's random, but towards the end of training, all neurons are concentrated in a single representation, as predicted by maximum margin.}
\label{fig:group-k3}
\end{figure}

\begin{figure}[t]
\centering
\begin{subfigure}{.4\textwidth}
  \centering
  \includegraphics[width=\textwidth]{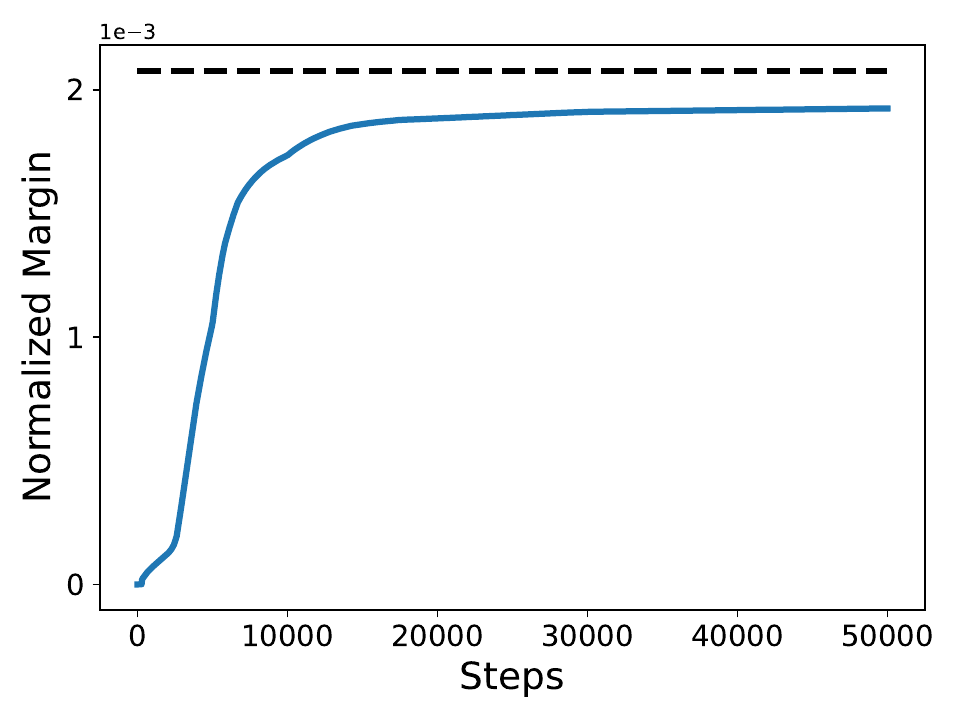}
    \caption{Normalized Margin}
\end{subfigure}
\begin{subfigure}{.4\textwidth}
  \centering
  \includegraphics[width=\textwidth]{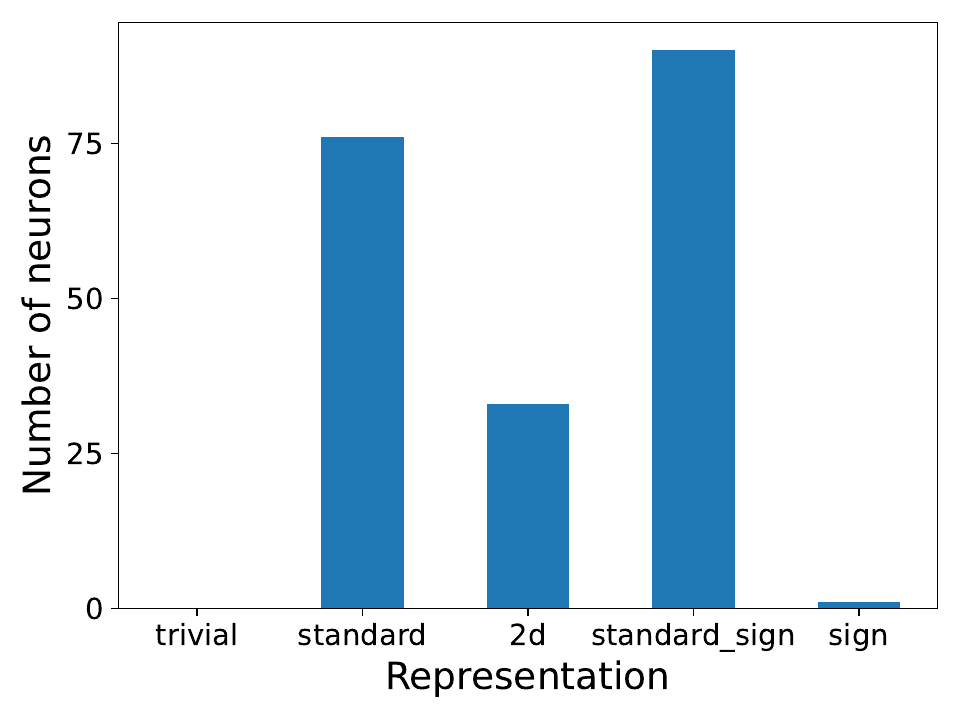}
    \caption{Representation distribution}
\end{subfigure}
\begin{subfigure}{.4\textwidth}
  \centering
  \includegraphics[width=\textwidth]{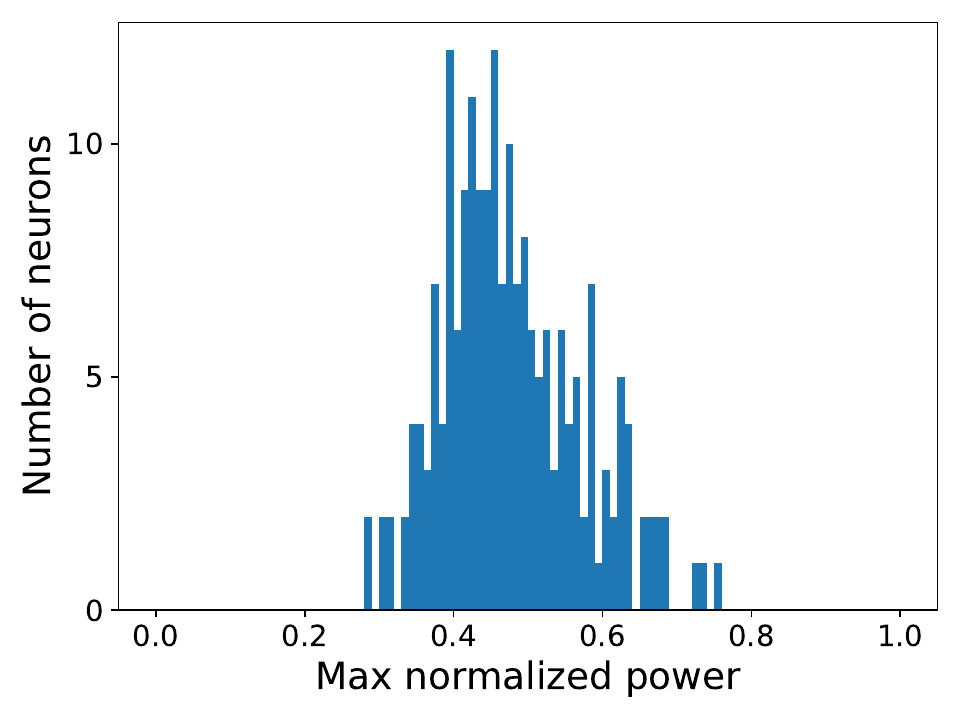}
    \caption{Initial Maximum Power Distribution}
\end{subfigure}
\begin{subfigure}{.4\textwidth}
  \centering
  \includegraphics[width=\textwidth]{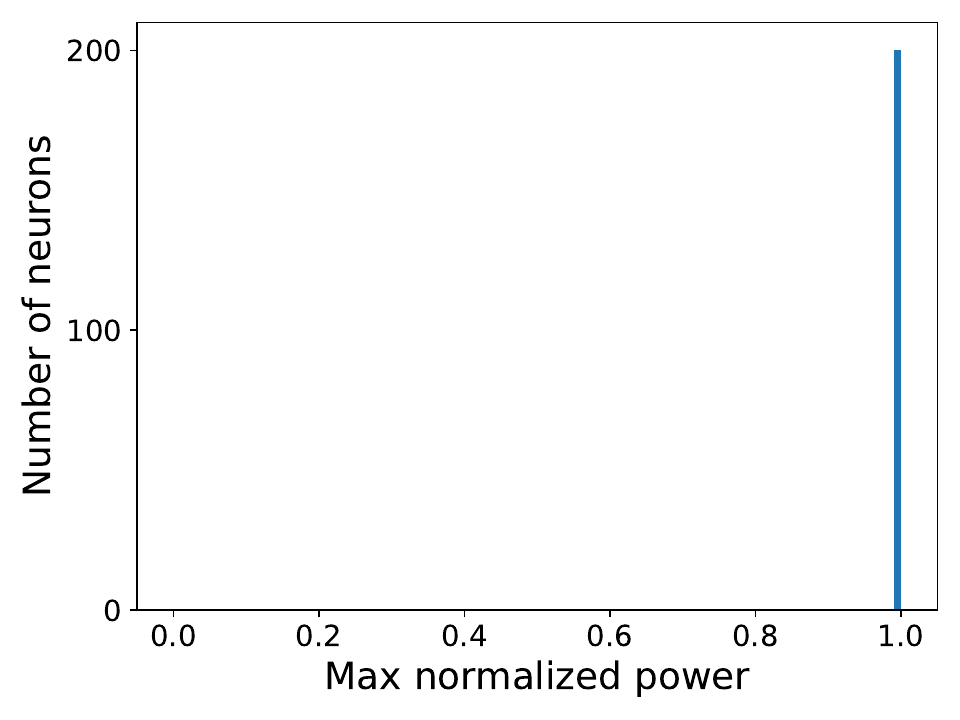}
    \caption{Final Maximum Power Distribution}
\end{subfigure}  
\caption{This figure demonstrates the training of a 1-hidden layer quadratic network on the symmetric group $S4$ with $L_{2,3}$ regularization. (a) Evolution of the normalized $L_{2,3}$ margin of the network with training. It approaches the theoretical maximum margin that we predict. (b) Distribution of neurons spanned by a given representation. Higher dimensional representations have more neurons as given by our construction. (c) and (d) Maximum normalized power is given by $\frac{\max \hat{u}[i]^2}{\sum_j \hat{u}[j]^2}$ where $\hat{u}[i]$ refers to the component of weight $u$ along $i^{th}$ representation. Initially, it is random, but towards the end of training, all neurons are concentrated on a single representation, as predicted by the maximum margin analysis.}
\label{fig:group-k4}
\end{figure}
\section{Alternative construction} \label{sec:dumb_cons}
To argue why the problem of finding correctly classifying networks is overdetermined, we present an alternative construction (which applies to general groups) that does not have an ``interesting" Fourier spectrum or any behavioral similarity to the solutions reached by standard training.

For any function $r:[n]^2\to[n]$, there exists a neural network parameterized by $\theta$ of the form considered in Sections \ref{sec:cyclic} and \ref{sec:group} with $2p^2$ neurons such that $f(\theta,(a,b))[c]=\1_{c=r(a,b)}$ and that is ``dense" in the Fourier spectrum. For each pair $(a,b)$ we use two neurons given by $\{u,v,w\}$ and $\{u',v',w'\}$, where $u_i=u'_i=\1_{i=a}$, $v_i=\1_{i=b}$, $v'_i=-1_{i=b}$, $w_i=\1_{i=r(a,b)}/4$and $w'_i=-\1_{i=r(a,b)}/4$. When adding together the outputs for these two neurons, for an input of $(i, j)$ we get $k$\textsuperscript{th} logit equal to:
\begin{align*}
\frac{1}{4}\left((\1_{i=a}+\1_{j=b})^2\1_{k=r(i,j)}-(\1_{i=a}-\1_{j=b})^2\1_{k=r(i,j)}\right)=\1_{i=a}\1_{j=b}\1_{k=r(a,b)}
\end{align*}
Hence, these two norms help ``memorize" the output for $(a,b)$ while not influencing the output for any other input, so when summing together all these neurons we get an $f$ with the aforementioned property. Note that all the vectors used are (up to sign) one-hot encodings and thus have an uniform norm in the Fourier spectrum. This is to show that Fourier sparsity is not present in any correct classifier.
\section{Proofs for the Theoretical Approach}\label{app:theoretical}
For ease of the reader, we will first restate Equations \ref{eq:q*} and \ref{eq:theta*}.

\begin{equation*}
q^* \in \argmin_{q \in \curlyp(D)} \expct_{(x,y) \sim q} \left[g(\theta^*, x, y) \right]    
\end{equation*}
\begin{equation*}
\theta^* \in \argmax_{\theta \in \Theta} \expct_{(x,y) \sim q^*} \left[g(\theta, x, y) \right]    
\end{equation*}

We will first provide the proof of Lemma \ref{lem:max-min}.

\begin{lemma*}
    If a pair $(\theta^*,q^*)$ satisfies Equations \ref{eq:q*} and \ref{eq:theta*}, then 
    \[ \theta^* \in \argmax_{\theta \in \Theta} \min_{(x,y) \in D} g(\theta, x, y) \]
\end{lemma*}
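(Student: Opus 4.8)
The plan is to prove this via the standard max-min inequality (weak duality). I want to show that $\theta^*$ achieves the value $\max_\theta \min_{(x,y)} g(\theta,x,y)$, and the two hypotheses (Equations \ref{eq:q*} and \ref{eq:theta*}) are exactly what is needed to sandwich the value of $\theta^*$ against the optimum from both sides.

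First I would set up the notation: let $\gamma^* = \max_{\theta \in \Theta} \min_{(x,y) \in D} g(\theta, x, y)$ be the max-margin value, which equals $\max_{\theta \in \Theta} \min_{q \in \curlyp(D)} \expct_{(x,y)\sim q}[g(\theta,x,y)]$ since minimizing a linear functional over a simplex is attained at a vertex. Then I would invoke the max-min inequality: for any function $\Phi(\theta,q)$, $\max_\theta \min_q \Phi(\theta,q) \le \min_q \max_\theta \Phi(\theta,q)$. Applying this with $\Phi(\theta,q) = \expct_{(x,y)\sim q}[g(\theta,x,y)]$ gives $\gamma^* \le \min_q \max_\theta \expct_{(x,y)\sim q}[g(\theta,x,y)] \le \max_\theta \expct_{(x,y)\sim q^*}[g(\theta,x,y)]$.

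Next, the core chain of (in)equalities. By Equation \ref{eq:theta*}, $\theta^*$ maximizes $\expct_{(x,y)\sim q^*}[g(\theta,x,y)]$ over $\Theta$, so $\max_\theta \expct_{(x,y)\sim q^*}[g(\theta,x,y)] = \expct_{(x,y)\sim q^*}[g(\theta^*,x,y)]$. By Equation \ref{eq:q*}, $q^*$ minimizes $\expct_{(x,y)\sim q}[g(\theta^*,x,y)]$ over $q \in \curlyp(D)$, so $\expct_{(x,y)\sim q^*}[g(\theta^*,x,y)] = \min_q \expct_{(x,y)\sim q}[g(\theta^*,x,y)] = \min_{(x,y)\in D} g(\theta^*,x,y)$. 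Stringing these together: $\gamma^* \le \expct_{(x,y)\sim q^*}[g(\theta^*,x,y)] = \min_{(x,y)\in D} g(\theta^*,x,y) \le \max_{\theta\in\Theta}\min_{(x,y)\in D} g(\theta,x,y) = \gamma^*$, where the last inequality is just because $\theta^* \in \Theta$. Hence all quantities are equal, and in particular $\min_{(x,y)\in D} g(\theta^*,x,y) = \gamma^*$, i.e. $\theta^* \in \argmax_{\theta\in\Theta}\min_{(x,y)\in D} g(\theta,x,y)$.

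There is no real obstacle here — it is a clean weak-duality argument — but the one point to be careful about is the reduction from $\min_{(x,y)\in D}$ to $\min_{q\in\curlyp(D)}$: I should note that $\expct_{(x,y)\sim q}[g(\theta,x,y)]$ is a convex combination of the values $g(\theta,x,y)$, so its infimum over the simplex $\curlyp(D)$ equals the minimum over the finitely many data points, attained by a point mass. With that observation in place the argument is just the two substitutions above plugged into the max-min inequality.
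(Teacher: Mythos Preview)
Your proof is correct and follows essentially the same approach as the paper's own proof: both rewrite the max-min over data points as a max-min over distributions, apply the max-min (weak duality) inequality, and then use Equations~\ref{eq:theta*} and~\ref{eq:q*} to close the chain of inequalities. Your version is slightly more explicit about why $\min_{(x,y)\in D} g(\theta,x,y) = \min_{q\in\curlyp(D)} \expct_{(x,y)\sim q}[g(\theta,x,y)]$, but otherwise the arguments are the same.
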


\begin{proof}
First, using max-min inequality, we have:
\begin{align*}
    & \max\limits_{\theta \in \Theta}{\min\limits_{(x, y) \in D}{g(\theta, x, y)}} = \max\limits_{\theta \in \Theta}\min\limits_{q \in \curlyp(D)}{\expct\limits_{(x, y) \sim q}\left[g(\theta, x, y)\right]} \leq \\
    & \min\limits_{q \in \curlyp(D)}\max\limits_{\theta \in \Theta}{\expct\limits_{(x, y) \sim q}\left[g(\theta, x, y)\right]}
\end{align*}
On the other hand, it also holds that:
\begin{align*}
    & \min\limits_{q \in \curlyp(D)}\max\limits_{\theta \in \Theta}{\expct\limits_{(x, y) \sim q}\left[g(\theta, x, y)\right]} \leq 
    \max\limits_{\theta \in \Theta}{\expct\limits_{(x, y) \sim q^*}\left[g(\theta, x, y)\right]} = \\
    & \expct\limits_{(x, y) \sim q^*}\left[g(\theta^*, x, y)\right] = \min\limits_{q \in \curlyp(D)}{\expct\limits_{(x, y) \sim q}\left[g(\theta^*, x, y)\right]} \leq \\
    & \max\limits_{\theta \in \Theta}\min\limits_{q \in \curlyp(D)}{\expct\limits_{(x, y) \sim q}\left[g(\theta, x, y)\right]}
\end{align*}
where the first equality follows from Equation \ref{eq:theta*} and the second follows from Equation \ref{eq:q*}. Putting these inequalities together it follows that all of the above terms are equal (and, thus we get a minimax theorem). In particular, $\theta^* \in \argmax_{\theta \in \Theta} \min_{(x,y) \in D} g(\theta, x, y)$ as desired.
\end{proof}

\subsection{Binary Classification}\label{app:binary}
Now, we will provide the proof of Lemma \ref{lem:ind_neuron_bin_class}.

\begin{lemma*}
Let $\Theta = \{ \theta : \|\theta\|_{a,b} \leq 1\}$ and $\Theta^*_q = \argmax_{\theta \in \Theta}{\expct_{(x, y) \sim q}\left[g(\theta, x, y)\right]}$. Similarly, let $\Omega = \{ \omega: \| \omega \|_a \leq 1 \}$ and $\Omega^*_q = \argmax_{\omega \in \Omega} \expct_{(x,y)\sim q}\left[\psi(\omega, x, y)\right]$. Then, for binary classification, the following holds:
    \begin{itemize}
    \item \textbf{Single neuron optimization}: Any $\theta \in \Theta^*_q$ has directional support only on $\Omega^*_q$.
    \item \textbf{Using multiple neurons}: If $b = \nu$ and $ \omega^*_1,..., \omega^*_m \in \Omega^*_q$, then $\theta = \{ \lambda_i \omega^*_i \}_{i=1}^m$ with $\sum \lambda_i^\nu = 1, \lambda_i \geq 0$ belongs to $\Theta^*_q$.  
    \end{itemize}
\end{lemma*}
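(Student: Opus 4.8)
The plan is to reduce the network-level optimization to a single-neuron optimization by exploiting that, in binary classification, the margin is \emph{additive} over neurons. Since $f(\theta,x)=\sum_{i=1}^m\phi(\omega_i,x)$, we have $g(\theta,x,y)=\sum_{i=1}^m\psi(\omega_i,x,y)$, so writing $\bar\psi(\omega):=\expct_{(x,y)\sim q}[\psi(\omega,x,y)]$, linearity of expectation gives $\expct_{(x,y)\sim q}[g(\theta,x,y)]=\sum_{i=1}^m\bar\psi(\omega_i)$. Homogeneity of $\phi$ passes to $\bar\psi$: for $\lambda>0$, $\bar\psi(\lambda\omega)=\lambda^\nu\bar\psi(\omega)$, hence $\bar\psi(\omega)=\|\omega\|_a^\nu\,\bar\psi(\omega/\|\omega\|_a)$ whenever $\omega\neq 0$. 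Put $M:=\max_{\omega\in\Omega}\bar\psi(\omega)$; since the neurons satisfy $\phi(0,x)=0$ we have $M\ge\bar\psi(0)=0$. If $M>0$, then every maximizer $\omega^*\in\Omega^*_q$ has $\|\omega^*\|_a=1$, since a point with $\|\omega^*\|_a<1$ could be scaled up to strictly increase $\bar\psi$. (If $M=0$, both assertions are immediate: $\Theta^*_q$ is exactly the set of $\theta\in\Theta$ with every $\bar\psi(\omega_i)=0$, and the rest is a direct rescaling check; so assume $M>0$ below.)

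\textbf{Single-neuron optimization (any $b$).} I would argue by an exchange/swap argument. Suppose $\theta=\{\omega_i\}_{i=1}^m\in\Theta^*_q$ but some $\omega_i\neq 0$ has $\omega_i/\|\omega_i\|_a\notin\Omega^*_q$, i.e.\ $\bar\psi(\omega_i/\|\omega_i\|_a)<M$. Fix any $\omega^*\in\Omega^*_q$ and replace $\omega_i$ by $\tilde\omega_i:=\|\omega_i\|_a\,\omega^*$. Since $\|\omega^*\|_a=1$, this leaves $\|\tilde\omega_i\|_a=\|\omega_i\|_a$ unchanged, so the modified parameter $\tilde\theta$ still satisfies $\|\tilde\theta\|_{a,b}\le 1$; but the objective increases by $\bar\psi(\tilde\omega_i)-\bar\psi(\omega_i)=\|\omega_i\|_a^\nu\big(M-\bar\psi(\omega_i/\|\omega_i\|_a)\big)>0$, contradicting optimality of $\theta$. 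Hence every nonzero $\omega_i$ satisfies $\omega_i/\|\omega_i\|_a\in\Omega^*_q$, i.e.\ $\theta$ has directional support on $\Omega^*_q$. This step uses nothing about $b$, since the swap preserves $\|\omega_i\|_a$ and therefore $\|\theta\|_{a,b}$ for every $b$.

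\textbf{Combining neurons ($b=\nu$).} First I would compute the optimal value. For any $\theta\in\Theta$,
\[
\expct_{(x,y)\sim q}[g(\theta,x,y)]=\sum_{i:\,\omega_i\neq 0}\|\omega_i\|_a^\nu\,\bar\psi\!\big(\omega_i/\|\omega_i\|_a\big)\ \le\ M\sum_{i=1}^m\|\omega_i\|_a^\nu\ =\ M\,\|\theta\|_{a,\nu}^\nu\ \le\ M,
\]
using $\bar\psi(\cdot/\|\cdot\|_a)\le M$, then $M\ge 0$, then $\|\theta\|_{a,\nu}\le 1$; and the value $M$ is attained, e.g.\ by the parameter $\{\omega^*,0,\dots,0\}$ with $\omega^*\in\Omega^*_q$. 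So $\max_{\theta\in\Theta}\expct_{(x,y)\sim q}[g(\theta,x,y)]=M$. Now take $\omega^*_1,\dots,\omega^*_m\in\Omega^*_q$ and $\lambda_i\ge 0$ with $\sum_i\lambda_i^\nu=1$, and set $\theta=\{\lambda_i\omega^*_i\}_{i=1}^m$. Then $\|\theta\|_{a,\nu}^\nu=\sum_i\lambda_i^\nu\|\omega^*_i\|_a^\nu=\sum_i\lambda_i^\nu=1$, so $\theta\in\Theta$, and $\expct_{(x,y)\sim q}[g(\theta,x,y)]=\sum_i\lambda_i^\nu\bar\psi(\omega^*_i)=M\sum_i\lambda_i^\nu=M$, so $\theta\in\Theta^*_q$, as claimed.

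The content here is essentially bookkeeping around additivity of the binary margin and $\nu$-homogeneity of $\bar\psi$; the only point needing genuine care is the observation that, when $M>0$, the members of $\Omega^*_q$ have unit $\|\cdot\|_a$-norm — this is exactly what makes the neuron swap in the single-neuron step norm-preserving — together with cleanly disposing of the degenerate case $M=0$.
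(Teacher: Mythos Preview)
Your proof is correct and follows essentially the same approach as the paper: both exploit additivity of the binary margin over neurons, use $\nu$-homogeneity to factor each term as $\|\omega_i\|_a^\nu\,\bar\psi(\omega_i/\|\omega_i\|_a)$, bound by $M\sum_i\|\omega_i\|_a^\nu$, and read off the equality conditions. The only cosmetic difference is that for Part~1 the paper phrases the argument as a chain of inequalities whose equality cases characterize $\Theta^*_q$, whereas you use an equivalent exchange/swap argument; you are also slightly more explicit about the edge case $M=0$ and the fact that maximizers in $\Omega^*_q$ have unit $\|\cdot\|_a$-norm when $M>0$, points the paper leaves implicit.
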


\begin{proof} \label{proof:bin_ind_neuron}
Let $\gamma=\max\limits_{\omega \in \Omega} \expct\limits_{(x,y)\sim q^*}\left[\psi(\omega,x,y)\right]$ and take any $\theta=\{\omega_i\}_{i=1}^m$. Then:
\begin{align*}
& \expct\limits_{(x,y)\sim q^*}\left[g(\theta,x,y)\right]=
\expct\limits_{(x,y)\sim q^*}\left[\sum\limits_{i=1}^m\psi(\omega_i)\right]=
\sum\limits_{i=1}^m\lnorm{\omega_i}{a}^\nu\expct\limits_{(x,y)\sim q^*}\left[\psi\left(\frac{\omega_i}{\lnorm{\omega_i}{a}}\right)\right] \leq \\
& \gamma \sum\limits_{i=1}^m\lnorm{\omega_i}{a}^\nu \leq
\gamma \max_{\substack{w \in \R^m \\ \lnorm{w}{b} \leq 1}} \lnorm{w}{\nu}^\nu
\end{align*}
with equality when $\frac{\omega_i}{\lnorm{\omega_i}{a}} \in \argmax\limits_{\omega \in \Omega} \expct\limits_{(x,y)\sim q^*}\left[\psi(\omega,x,y)\right]$ for all $i$ with $\omega_i \neq 0$ and the $L_a$ norms of $\omega$s respect $\{\lnorm{\omega_i}{a}\}_{i=1}^m \in \argmax\limits_{\lnorm{w}{b} \leq 1} \lnorm{w}{\nu}^\nu$. Since there exists equality for this upper bound, these two criteria define precisely $\argmax\limits_{\theta \in \Theta}{\expct\limits_{(x, y) \sim q^*}\left[g(\theta, x, y)\right]}$. Hence, we proved the first part of the statement by first criterion. For the second, note that when $b=\nu$, one can choose any vector of norms for $\omega$ with $L_b$ norm of $1$ (since $\lnorm{w}{\nu}^\nu=\lnorm{w}{b}^b\leq 1$), such as $\lambda$ - this concludes the proof of the second part.
\end{proof}

\begin{remark*}
Note that the analysis in above proof can be used to compute optimal norms for $b \neq \nu$ as well - however, for any such $b$ we would not get the same flexibility to build a $\theta^*$ satisfying Equation \ref{eq:q*}. This is the reason behind choosing $b=\nu$. 
\end{remark*}

Now, we will provide the proof of Lemma \ref{lem:char_max_marg}.

\begin{lemma*}
    Let $\Theta = \{ \theta : \|\theta\|_{a,b} \leq 1\}$ and $\Theta^*_q = \argmax_{\theta \in \Theta}{\expct_{(x, y) \sim q}\left[g(\theta, x, y)\right]}$. Similarly, let $\Omega = \{ \omega: \| \omega \|_a \leq 1 \}$ and $\Omega^*_q = \argmax_{\omega \in \Omega} \expct_{(x,y)\sim q}\left[\psi(\omega, x, y)\right]$. For the task of binary classification, if there exists $\{\theta^*, q^*\}$ satisfying Equation \ref{eq:q*} and \ref{eq:theta*}, then any $\hat{\theta} \in \argmax_{\theta \in \Theta} \min_{(x,y) \in D} g(\theta, x, y)$ satisfies the following:
    \begin{itemize}
        \item $\hat{\theta}$ has directional support only on $\Omega^*_{q^*}$.
        \item For any $(x_1, y_1) \in \text{spt}(q^*)$, $f(\hat{\theta}, x_1, y_1) - f(\hat{\theta}, x_1, y_1') = \gamma^*$, where $y_1' \neq y_1$, i.e, all points in the support of $q^*$ are on the margin for any maximum margin solution.
    \end{itemize}
\end{lemma*}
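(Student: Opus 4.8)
The plan is to combine the minimax equality established inside the proof of Lemma~\ref{lem:max-min} with the single-neuron characterization of Lemma~\ref{lem:ind_neuron_bin_class}. Since $(\theta^*,q^*)$ satisfies Equations~\ref{eq:q*} and~\ref{eq:theta*}, every inequality in the proof of Lemma~\ref{lem:max-min} is tight, so in particular
\[
\gamma^* \;=\; \max_{\theta\in\Theta}\min_{(x,y)\in D} g(\theta,x,y) \;=\; \max_{\theta\in\Theta}\expct_{(x,y)\sim q^*}\!\left[g(\theta,x,y)\right] \;=\; \expct_{(x,y)\sim q^*}\!\left[g(\theta^*,x,y)\right].
\]
I would record this identity first, as it is what drives both conclusions.

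Now fix any $\hat\theta\in\argmax_{\theta\in\Theta}\min_{(x,y)\in D} g(\theta,x,y)$. By definition $\min_{(x,y)\in D} g(\hat\theta,x,y)=\gamma^*$, so $g(\hat\theta,x,y)\ge\gamma^*$ for every $(x,y)\in D$; averaging over $q^*$ yields $\expct_{(x,y)\sim q^*}[g(\hat\theta,x,y)]\ge\gamma^*$. On the other hand, $\hat\theta\in\Theta$ and the displayed identity give $\expct_{(x,y)\sim q^*}[g(\hat\theta,x,y)]\le\max_{\theta\in\Theta}\expct_{(x,y)\sim q^*}[g(\theta,x,y)]=\gamma^*$. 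Hence the two are equal, i.e.\ $\hat\theta\in\Theta^*_{q^*}$, and the first bullet is immediate from the single-neuron-optimization part of Lemma~\ref{lem:ind_neuron_bin_class}: any element of $\Theta^*_{q^*}$ has directional support only on $\Omega^*_{q^*}$.

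For the second bullet, I would use that an average equal to a pointwise lower bound forces pointwise equality on the support of the averaging measure: since $\expct_{(x,y)\sim q^*}[g(\hat\theta,x,y)]=\gamma^*$ while $g(\hat\theta,x,y)\ge\gamma^*$ for all $(x,y)\in D$, every $(x_1,y_1)$ with $q^*(x_1,y_1)>0$ must satisfy $g(\hat\theta,x_1,y_1)=\gamma^*$. Expanding $g$ in the binary case, $g(\hat\theta,x_1,y_1)=f(\hat\theta,x_1)[y_1]-f(\hat\theta,x_1)[y_1']$ for $y_1'\ne y_1$, which is exactly the claim. I do not anticipate a genuine obstacle here; the only point needing care is verifying that the chain of (in)equalities in Lemma~\ref{lem:max-min} genuinely pins $\max_{\theta\in\Theta}\expct_{(x,y)\sim q^*}[g(\theta,x,y)]$ to $\gamma^*$ rather than merely bounding it, since that two-sided tightness is what makes the averaging argument work in both directions.
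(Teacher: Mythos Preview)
Your proposal is correct and follows essentially the same approach as the paper: establish $\gamma^*=\max_{\theta\in\Theta}\expct_{(x,y)\sim q^*}[g(\theta,x,y)]$ from the tight minimax chain, sandwich $\expct_{(x,y)\sim q^*}[g(\hat\theta,x,y)]$ between $\gamma^*$ (from the pointwise lower bound) and $\gamma^*$ (from the max), deduce $\hat\theta\in\Theta^*_{q^*}$ and invoke Lemma~\ref{lem:ind_neuron_bin_class}, then use the average-equals-lower-bound observation to force pointwise equality on $\mathrm{spt}(q^*)$. Your justification of the second bullet is in fact a bit more explicit than the paper's, which simply asserts the pointwise conclusion once the expectation equals $\gamma^*$.
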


\begin{proof}
    Let $\gamma^* = \max_{\theta \in \Theta} \min_{(x,y) \in D} g(\theta, x, y)$. Then, by Lemma \ref{lem:max-min}, $\gamma^* = \E_{(x,y) \sim q^*} g(\theta^*, x, y)$. 

    Consider any $\hat{\theta} \in \argmax_{\theta \in \Theta} \min_{(x,y) \in D} g(\theta, x, y)$. This means, that $\min_{(x,y) \in D} g(\hat{\theta}, x, y) = \gamma^*$. This implies that $\E_{(x,y) \sim q^*} g(\hat{\theta}, x, y) \geq \gamma^*$. However, by Equation \ref{eq:theta*}, $\max_{\theta \in \Theta} \E_{(x,y) \sim q^*} g(\theta, x, y) = \gamma^*$. This implies that $\E_{(x,y) \sim q^*} g(\hat{\theta}, x, y) = \gamma^*$. Thus, $\hat{\theta}$ is also a maximizer of $\E_{(x,y) \sim q^*} g(\theta, x, y)$, and thus by Lemma \ref{lem:ind_neuron_bin_class}, it only has directional support on $\Omega^*_{q^*}$.

    Moreover, as $\E_{(x,y) \sim q^*} g(\hat{\theta}, x, y) = \gamma^*$, thus, for any $(x_1, y_1) \in \text{spt}(q^*)$, $f(\hat{\theta}, x_1, y_1) - f(\hat{\theta}, x_1, y_1') = \gamma^*$, where $y_1' \neq y_1$.
\end{proof}

\subsection{Multi-Class Classification}\label{app:multi-class}
We will first provide the proof of Lemma \ref{lem:ind_neuron}.

\begin{lemma*}
    Let $\Theta = \{ \theta : \|\theta\|_{a,b} \leq 1\}$ and $\Theta'^*_q = \argmax_{\theta \in \Theta}{\expct_{(x, y) \sim q}\left[g'(\theta, x, y)\right]}$. Similarly, let $\Omega = \{ \omega: \| \omega \|_a \leq 1 \}$ and $\Omega'^*_q = \argmax_{\omega \in \Omega} \expct_{(x,y)\sim q}\left[\psi'(\omega, x, y)\right]$. Then:
    \begin{itemize}
    \item \textbf{Single neuron optimization}: Any $\theta \in \Theta'^*_q$ has directional support only on $\Omega'^*_q$.
    \item \textbf{Using multiple neurons}: If $b = \nu$ and $ \omega^*_1,..., \omega^*_m \in \Omega'^*_q$, then $\theta = \{ \lambda_i \omega^*_i \}_{i=1}^m$ with $\sum \lambda_i^\nu = 1, \lambda_i \geq 0$ belongs to $\Theta'^*_q$.  
    \end{itemize}
\end{lemma*}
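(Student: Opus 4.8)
The plan is to reduce this multi-class statement to the binary-classification argument of Lemma~\ref{lem:ind_neuron_bin_class}, using the one structural fact that makes the class-weighted margin tractable: it is additive across neurons. First I would note that, because $g'$ replaces the $\max$ over incorrect labels by the fixed convex combination with weights $\tau(x,y)$ (a function of the datapoint only, not of $\theta$), we have $g'(\theta,x,y)=\sum_{i=1}^m\psi'(\omega_i,x,y)$ for $\theta=\{\omega_i\}_{i=1}^m$, and hence by linearity of expectation $\E_{(x,y)\sim q}[g'(\theta,x,y)]=\sum_{i=1}^m\E_{(x,y)\sim q}[\psi'(\omega_i,x,y)]$. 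Second, since each neuron $\phi$ is homogeneous of degree $\nu$ and $\psi'$ is a fixed linear combination of the output coordinates of $\phi$, the function $\psi'$ is itself homogeneous of degree $\nu$ in $\omega$. These are exactly the two properties of $\psi$ that drove the proof of Lemma~\ref{lem:ind_neuron_bin_class}.

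With these in hand, the remainder is the verbatim analog of that proof. Writing $\gamma:=\max_{\omega\in\Omega}\E_{(x,y)\sim q}[\psi'(\omega,x,y)]$ (which is $\geq 0$ since $\omega=0$ is feasible and $\phi(0,\cdot)=0$ by homogeneity), for any $\theta=\{\omega_i\}_{i=1}^m\in\Theta$ homogeneity applied to each nonzero $\omega_i$ gives
\begin{align*}
\E_{(x,y)\sim q}[g'(\theta,x,y)] = \sum_{i=1}^m\|\omega_i\|_a^\nu\,\E_{(x,y)\sim q}\!\left[\psi'\!\left(\tfrac{\omega_i}{\|\omega_i\|_a},x,y\right)\right] \leq \gamma\sum_{i=1}^m\|\omega_i\|_a^\nu \leq \gamma\max_{\|w\|_b\leq 1}\|w\|_\nu^\nu,
\end{align*}
where $w=(\|\omega_1\|_a,\dots,\|\omega_m\|_a)\in\R^m$ is feasible because $\|\theta\|_{a,b}\leq 1$. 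Equality in the first inequality forces $\omega_i/\|\omega_i\|_a\in\Omega'^*_q$ for every $\omega_i\neq 0$, i.e.\ directional support only on $\Omega'^*_q$; equality in the second forces the norm vector $w$ to maximize $\|\cdot\|_\nu^\nu$ over the $L_b$-ball. Since the upper bound is attained (take any $\omega^*\in\Omega'^*_q$ together with an optimal norm vector), these two equality conditions characterize $\Theta'^*_q$ exactly, which is the first bullet. For the second bullet, setting $b=\nu$ makes $\|w\|_\nu^\nu=\|w\|_b^b\leq 1$ for every feasible $w$, so the second inequality is an equality for \emph{any} nonnegative norm vector with $\sum_i\lambda_i^\nu=1$; hence $\theta=\{\lambda_i\omega^*_i\}_{i=1}^m$ with $\omega^*_i\in\Omega'^*_q$ attains the bound and therefore lies in $\Theta'^*_q$.

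I do not expect a serious obstacle: the content is a direct transfer of Lemma~\ref{lem:ind_neuron_bin_class}, and the scalar norm-optimization step is identical. The only point needing genuine care is the additive decomposition $g'=\sum_i\psi'$ --- in particular verifying that $\tau$ is held fixed as a function of $(x,y)$ (it is chosen prior to and independently of $\theta$), so that $g'$ is genuinely linear in the neuron outputs and hence additive in $\{\omega_i\}$; once that is confirmed, homogeneity of $\psi'$ and everything downstream follow mechanically. I would also sanity-check the degenerate case $\gamma=0$, where the directional-support conclusion is either vacuous or forced via $\omega_i=0$, so the statement still holds.
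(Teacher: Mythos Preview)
Your proposal is correct and is exactly the approach the paper takes: its proof of this lemma is a one-line reference to Lemma~\ref{lem:ind_neuron_bin_class}, noting that the same argument goes through verbatim once $g'$ decomposes additively across neurons. You have actually written out more detail than the paper does, including the explicit homogeneity of $\psi'$ and the $\gamma=0$ sanity check.
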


\begin{proof}
    The proof follows the same strategy as the proof of Lemma \ref{lem:ind_neuron_bin_class} (Section \ref{proof:bin_ind_neuron}), following the linearity of $g'$. 
\end{proof}

Now, for ease of the reader, we will first restate Equation \ref{eq:theta*-g'} and condition \ref{c1}.

\begin{equation*} 
\theta^* \in \argmax_{\theta \in \Theta} \expct_{(x,y) \sim q^*} \left[g'(\theta, x, y) \right].     
\end{equation*}

\begin{enumerate}[label=\textbf{C.1}]
    \item For any $(x, y) \in \text{spt}(q^*)$, it holds that $g'(\theta^*, x, y) = g(\theta^*, x, y)$. This translates to any label with non-zero weight being one of the incorrect labels where $f$ is maximized: $\{\ell \in \curlyy \setminus\{y\} : \tau(x,y)[\ell] > 0\} \subseteq \argmax\limits_{\ell\in\curlyy \setminus\{y\}}f(\theta^*,x)[\ell]$.
\end{enumerate}

We will now the provide the proof of Lemma \ref{lem:backbone_lemma}.

\begin{lemma*}
Let $\Theta = \{ \theta : \|\theta\|_{a,b} \leq 1\}$ and $\Theta'^*_q = \argmax_{\theta \in \Theta}{\expct_{(x, y) \sim q}\left[g'(\theta, x, y)\right]}$. Similarly, let $\Omega = \{ \omega: \| \omega \|_a \leq 1 \}$ and $\Omega'^*_q = \argmax_{\omega \in \Omega} \expct_{(x,y)\sim q}\left[\psi'(\omega, x, y)\right]$. If $\exists \{\theta^*, q^*\}$ satisfying Equations \ref{eq:q*} and \ref{eq:theta*-g'}, and \ref{c1} holds, then:
\begin{itemize}
    \item $\theta^* \in \argmax_{\theta \in \Theta} g(\theta, x, y)$
    \item Any $\hat{\theta} \in \argmax_{\theta \in \Theta} \min_{(x,y) \in D} g(\theta, x, y)$ satisfies the following:
    \begin{itemize}
        \item $\hat{\theta}$ has directional support only on $\Omega'^*_{q^*}$.
        \item For any $(x_1, y_1) \in \text{spt}(q^*)$, $f(\hat{\theta}, x_1, y_1) - \max_{y' \in \curlyy \backslash \{y_1\}} f(\hat{\theta}, x_1, y_1') = \gamma^*$, i.e, all points in the support of $q^*$ are on the margin for any maximum margin solution. 
    \end{itemize}
\end{itemize}
\end{lemma*}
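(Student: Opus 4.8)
The plan is to leverage Lemma~\ref{lem:max-min} together with the structural results already established for the class-weighted margin $g'$, using condition \ref{c1} as the bridge between $g'$ and $g$. First I would observe that since $g'(\theta,x,y) \geq g(\theta,x,y)$ pointwise, we have for every $\theta \in \Theta$ the chain
\[
\E_{(x,y)\sim q^*}[g(\theta,x,y)] \leq \E_{(x,y)\sim q^*}[g'(\theta,x,y)] \leq \max_{\theta'\in\Theta}\E_{(x,y)\sim q^*}[g'(\theta',x,y)] = \E_{(x,y)\sim q^*}[g'(\theta^*,x,y)],
\]
where the last equality is Equation~\ref{eq:theta*-g'}. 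By condition \ref{c1}, $g'(\theta^*,x,y) = g(\theta^*,x,y)$ for all $(x,y)\in\spt(q^*)$, so the rightmost quantity equals $\E_{(x,y)\sim q^*}[g(\theta^*,x,y)]$. Hence $\theta^*$ maximizes $\E_{(x,y)\sim q^*}[g(\theta,x,y)]$ over $\Theta$, i.e.\ Equation~\ref{eq:theta*} holds for the pair $(\theta^*,q^*)$. Combined with Equation~\ref{eq:q*} (which is assumed), Lemma~\ref{lem:max-min} immediately gives $\theta^* \in \argmax_{\theta\in\Theta}\min_{(x,y)\in D} g(\theta,x,y)$, which is the first bullet.

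For the second bullet, let $\gamma^* = \max_{\theta\in\Theta}\min_{(x,y)\in D} g(\theta,x,y)$ and take any maximizer $\hat\theta$. The minimax equalities proved inside Lemma~\ref{lem:max-min} show $\gamma^* = \E_{(x,y)\sim q^*}[g(\theta^*,x,y)] = \E_{(x,y)\sim q^*}[g'(\theta^*,x,y)]$, the latter again by \ref{c1}. Since $\min_{(x,y)\in D} g(\hat\theta,x,y) = \gamma^*$, averaging over $q^*$ gives $\E_{(x,y)\sim q^*}[g(\hat\theta,x,y)] \geq \gamma^*$, and using $g'\geq g$ we get $\E_{(x,y)\sim q^*}[g'(\hat\theta,x,y)] \geq \gamma^*$. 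But $\gamma^*$ is exactly the maximum value of $\E_{q^*}[g'(\cdot)]$ over $\Theta$ (from the previous paragraph), so $\E_{(x,y)\sim q^*}[g'(\hat\theta,x,y)] = \gamma^*$, meaning $\hat\theta \in \Theta'^*_{q^*}$. Applying the single-neuron part of Lemma~\ref{lem:ind_neuron} then yields that $\hat\theta$ has directional support only on $\Omega'^*_{q^*}$.

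Finally, to get the ``all points of $\spt(q^*)$ on the margin'' statement, I would chain the two inequalities from the previous step carefully: we have $\gamma^* \leq \E_{q^*}[g(\hat\theta,x,y)] \leq \E_{q^*}[g'(\hat\theta,x,y)] = \gamma^*$, forcing $\E_{q^*}[g(\hat\theta,x,y)] = \E_{q^*}[g'(\hat\theta,x,y)] = \gamma^*$. Since $g(\hat\theta,x,y) \geq \gamma^* = \min_{(x,y)\in D} g(\hat\theta,x,y)$ for every $(x,y)$ and its $q^*$-average equals $\gamma^*$, we conclude $g(\hat\theta,x,y) = \gamma^*$ for every $(x,y)\in\spt(q^*)$; that is, $f(\hat\theta,x_1,y_1) - \max_{y'\neq y_1} f(\hat\theta,x_1,y') = \gamma^*$ on the support. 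I do not anticipate a serious obstacle here — the only subtlety is bookkeeping the direction of the inequalities $g'\geq g$ versus the maximizations, and making sure condition \ref{c1} is invoked only on $\spt(q^*)$ (which is all that is needed, since all expectations are taken under $q^*$). The argument is essentially a repackaging of the binary-classification proof of Lemma~\ref{lem:char_max_marg} with $g'$ interposed between $g$ and the optimum.
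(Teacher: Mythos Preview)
Your proposal is correct and follows essentially the same approach as the paper's proof: you use $g'\geq g$ together with condition~\ref{c1} to show the pair $(\theta^*,q^*)$ satisfies Equation~\ref{eq:theta*}, invoke Lemma~\ref{lem:max-min} for the first bullet, and then sandwich $\E_{q^*}[g(\hat\theta,\cdot)]$ between $\gamma^*$ and $\E_{q^*}[g'(\hat\theta,\cdot)]=\gamma^*$ to obtain both the directional-support statement (via Lemma~\ref{lem:ind_neuron}) and the on-the-margin statement. The bookkeeping and the order of the inequalities are handled correctly, including the observation that \ref{c1} is only needed on $\spt(q^*)$.
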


\begin{proof}
    For the first part, we will show that $\{\theta^*, q^*\}$ satisfy Equations \ref{eq:q*} and \ref{eq:theta*}, and then it follows from Lemma \ref{lem:max-min}. As we have already assumed these satisfy Equation \ref{eq:q*}, we will show that they satisfy Equation \ref{eq:theta*}.

    Note that $g'(\theta, x, y) \geq g(\theta, x, y)$. Thus,
    \begin{align*}
        \E_{(x,y) \sim q^*} [g(\theta^*, x, y)] &\leq \max_{\theta \in \Theta} \E_{(x,y) \sim q^*} [g(\theta, x, y)] \\
        &\leq \max_{\theta \in \Theta} \E_{(x,y) \sim q^*} [g'(\theta, x, y)] \\
        &= \E_{(x,y) \sim q^*} [g'(\theta^*, x, y)]
    \end{align*} 
    where the second inequality follows as $g' \geq g$ and the last equality follows as $\theta^*$ satisfies Equation \ref{eq:theta*-g'}. Now, as the pair also satisfies \ref{c1}, therefore $\E_{(x,y) \sim q^*} [g(\theta^*, x, y)] = \E_{(x,y) \sim q^*} [g'(\theta^*, x, y)]$. This means, that all inequalities in the above chain must be equality. Thus,
    $\theta^* \in \argmax_{\theta \in \Theta} \E_{(x,y) \sim q^*} [g(\theta, x, y)]$. Thus, the pair $\{\theta^*, q^*\}$ satisfies Equation \ref{eq:q*} and \ref{eq:theta*}, and thus by Lemma \ref{lem:max-min}, $\theta^* \in \argmax_{\theta \in \Theta} g(\theta, x, y)$.

    Let $\gamma^* = \max_{\theta \in \Theta} \min_{(x,y) \in D} g(\theta, x, y)$. Then, $\gamma^* = \E_{(x,y) \sim q^*} g(\theta^*, x, y)$. Consider any $\hat{\theta} \in \argmax_{\theta \in \Theta} \min_{(x,y) \in D} g(\theta, x, y)$. This means, that $\min_{(x,y) \in D} g(\hat{\theta}, x, y) = \gamma^*$. This implies that $\E_{(x,y) \sim q^*} g(\hat{\theta}, x, y) \geq \gamma^*$. Since $g' \geq g$, it then folllows that $\E_{(x,y) \sim q^*} g'(\hat{\theta}, x, y) \geq \gamma^*$. 
    
    However, by Equation \ref{eq:theta*-g'} and \ref{c1}, $\max_{\theta \in \Theta} \E_{(x,y) \sim q^*} g'(\theta, x, y) = \E_{(x,y) \sim q^*} g'(\theta^*, x, y) = \E_{(x,y) \sim q^*} g(\theta^*, x, y) = \gamma^*$. This implies that $\E_{(x,y) \sim q^*} g'(\hat{\theta}, x, y) = \gamma^*$. Thus, $\hat{\theta}$ is also a maximizer of $\E_{(x,y) \sim q^*} g'(\theta, x, y)$, and thus by Lemma \ref{lem:ind_neuron}, it only has directional support on $\Omega'^*_{q^*}$.

    Moreover, as $\min_{(x,y) \in D} g(\hat{\theta}, x, y) = \gamma^*$, therefore, $\E_{(x,y) \sim q^*} g(\hat{\theta}, x, y) \geq \gamma^*$. However, as $g' \geq g$, therefore, $\E_{(x,y) \sim q^*} g(\hat{\theta}, x, y) \leq \E_{(x,y) \sim q^*} g'(\hat{\theta}, x, y) = \gamma^*$, as shown above. Thus, $\E_{(x,y) \sim q^*} g(\hat{\theta}, x, y) = \gamma^*$. Thus, we have $f(\hat{\theta}, x_1, y_1) - \max_{y' \in \curlyy \backslash \{y_1\}} f(\hat{\theta}, x_1, y_1') = g(\hat{\theta}, x_1, y_1) = \gamma^*$ for any $(x_1, y_1) \in \text{spt}(q^*)$.
\end{proof}
\section{Proofs for cyclic groups(Theorem~\ref{thm:cyclic})}\label{app:cyclic}

\subsection{Proof that Equation \ref{eq:theta*-g'} is satisfied}
\begin{proof}
Let
\[
\eta_{u,v,w}(\delta) := \E_{a, b} \left[(u(a) + v(b))^2 w(a+b-\delta)\right].
\]

We wish to find the solution to the following mean margin maximization problem:
\begin{equation}\label{eq:cycmarg}
\argmax_{u,v,w: \|u\|^2 + \|v\|^2 + \|w\|^2 \leq 1} \left(\eta_{u,v,w}(0) - \E_{\delta \neq 0}\left[\eta_{u,v,w}(\delta)\right]\right) = \frac{p}{p-1} \left(\eta_{u,v,w}(0) - \E_\delta\left[\eta_{u,v,w}(\delta)\right]\right).
\end{equation}

First, note that $\E_c\left[w(c)\right] = 0$, because shifting the mean of $w$ does not affect the margin. It follows that
\[
\E_{a, b} \left[(u(a)^2 w(a+b-\delta)\right] = \E_a \left[u(a)^2 \E_b[w(a+b-\delta)]\right] = \E_a\left[u(a)^2 \E_b[w(b)] \right] = 0,
\]
and similarly for the $v(b)^2$ component of $\eta$, so we can rewrite (\ref{eq:cycmarg}) as
\[
\argmax_{u,v,w: \|u\|^2 + \|v\|^2 + \|w\|^2 \leq 1} \frac{2p}{p-1} \left(\tilde{\eta}_{u,v,w}(0) - \E_\delta\left[\tilde{\eta}_{u,v,w}(\delta)\right]\right),
\]
where
\[
\tilde{\eta}_{u,v,w}(\delta) := \E_{a, b} \left[u(a)v(b)w(a+b-\delta)\right].
\]

Let $\rho := e^{2\pi i/p}$, and let $\hat{u}, \hat{v}, \hat{w}$ be the discrete Fourier transforms of $u$, $v$, and $w$ respectively. Then we have:
\begin{align*}
\tilde{\eta}_{u,v,w}(\delta) &= \E_{a,b}\left[\left(\frac{1}{p}\sum_{j=0}^{p-1} \hat{u}(j) \rho^{ja}\right)\left(\frac{1}{p}\sum_{k=0}^{p-1} \hat{v}(k) \rho^{kb}\right)\left(\frac{1}{p}\sum_{\ell=0}^{p-1} \hat{w}(\ell) \rho^{\ell(a + b - \delta)}\right)\right] \\
&= \frac{1}{p^3}\sum_{j,k,\ell} \hat{u}(j) \hat{v}(k) \hat{w}(\ell) \rho^{-\ell \delta} \left(\E_a \rho^{(j+\ell)a}\right) \left(\E_b \rho^{(k+\ell)b}\right) \\
&= \frac{1}{p^3}\sum_{j} \hat{u}(j) \hat{v}(j) \hat{w}(-j) \rho^{j \delta} \qquad (\text{only terms where } j+\ell = k+\ell = 0 \text{ survive})
\end{align*}

Hence, we need to maximize
\begin{align}
&\quad \frac{2p}{p-1}(\tilde{\eta}_{u,v,w}(0) - \E_\delta\left[\tilde{\eta}_{u,v,w}(\delta)\right]) \\
&= \frac{2p}{p-1}\left(\frac{1}{p^3}\sum_{j} \hat{u}(j) \hat{v}(j) \hat{w}(-j) - \frac{1}{p^3}\sum_{j} \hat{u}(j) \hat{v}(j) \hat{w}(-j) (\E_\delta \rho^{j\delta})\right) \notag \\
&= \frac{2}{(p-1)p^2}\sum_{j \neq 0} \hat{u}(j) \hat{v}(j) \hat{w}(-j). \label{eq:fouriersum}
\end{align}

We have arrived at the crux of why any max margin solution must be sparse in the Fourier domain: in order to maximize expression~\ref{eq:fouriersum}, we must concentrate the mass of $\hat{u}$, $\hat{v}$, and $\hat{w}$ on the same frequencies, the fewer the better. We will now work this out carefully. Since $u,v,w$ are real-valued, we have
\[
\hat{u}(-j) = \overline{\hat{u}(j)}, \hat{v}(-j) = \overline{\hat{v}(j)}, \hat{w}(-j) = \overline{\hat{w}(j)}
\]
for all $j \in \sZ_p$. Let $\theta_u, \theta_v, \theta_w \in [0,2\pi)^p$ be the phase components of $u, v, w$ respectively; so, e.g., for $\hat{u}$:
\[
\hat{u}(j) = |\hat{u}(j)|\exp(i\theta_u(j)).
\]
Then, for odd $p$, expression~\ref{eq:fouriersum} becomes:
\begin{align*}
&\quad\frac{2}{(p-1)p^2}\sum_{j=1}^{(p-1)/2}\left[ \hat{u}(j) \hat{v}(j) \overline{\hat{w}(j)} + \overline{\hat{u}(j)} \overline{\hat{v}(j)} \hat{w}(j) \right] \\
&= \frac{2}{(p-1)p^2}\sum_{j=1}^{(p-1)/2} |\hat{u}(j)||\hat{v}(j)||\hat{w}(j)|\left[\exp(i(\theta_u(j)+\theta_v(j)-\theta_w(j)) + \exp(i(-\theta_u(j)-\theta_v(j)+\theta_w(j))\right] \\
&= \frac{4}{(p-1)p^2}\sum_{j=1}^{(p-1)/2} |\hat{u}(j)||\hat{v}(j)||\hat{w}(j)| \cos(\theta_u(j)+\theta_v(j)-\theta_w(j)).
\end{align*}

Thus, we need to optimize:
\begin{equation}
\max_{u,v,w: \|u\|^2 + \|v\|^2 + \|w\|^2 \leq 1} \frac{4}{(p-1)p^2}\sum_{j=1}^{(p-1)/2} |\hat{u}(j)||\hat{v}(j)||\hat{w}(j)| \cos(\theta_u(j)+\theta_v(j)-\theta_w(j)).
\end{equation}

By Plancherel's theorem, the norm constraint is equivalent to
\[
 \|\hat{u}\|^2 + \|\hat{v}\|^2 + \|\hat{w}\|^2 \leq p,
\]
so the choice of $\theta_u(j), \theta_v(j), \theta_w(j)$ is unconstrained. Therefore, we can (and must) choose them to satisfy $\theta_u(j) + \theta_v(j) = \theta_w(j)$, so that $\cos(\theta_u(j)+\theta_v(j)-\theta_w(j)) = 1$ is maximized for each $j$ (unless the amplitude part of the $j$th term is 0, in which case the phase doesn't matter). The problem is thus further reduced to:
\begin{equation}\label{eq:penultimate-opt}
\max_{|\hat{u}|,|\hat{v}|,|\hat{w}|: \|\hat{u}\|^2 + \|\hat{v}\|^2 + \|\hat{w}\|^2 \leq p} \frac{4}{(p-1)p^2}\sum_{j=1}^{(p-1)/2} |\hat{u}(j)||\hat{v}(j)||\hat{w}(j)|.
\end{equation}

By the inequality of quadratic and geometric means,
\begin{equation}\label{eq:qm-gm}
|\hat{u}(j)||\hat{v}(j)||\hat{w}(j)| \leq \left(\frac{|\hat{u}(j)|^2 + |\hat{v}(j)|^2 + |\hat{w}(j)|^2}{3}\right)^{3/2}.
\end{equation}
Let $z:\{1, \dots, \frac{p-1}{2}\} \to \sR$ be defined as $z(j) := |\hat{u}(j)|^2 + |\hat{v}(j)|^2 + |\hat{w}(j)|^2$. Then, since we must have $\hat{u}(0) = \hat{v}(0) = \hat{w}(0) = 0$ in the optimization above, we can upper-bound expression~\ref{eq:penultimate-opt} by
\begin{align*}
&\quad \frac{4}{(p-1)p^2} \cdot \max_{\|z\|_1 \leq \frac{p}{2}} \sum_{j=1}^{(p-1)/2} \left(\frac{z(j)}{3}\right)^{3/2} \\
&\leq \frac{4}{3^{3/2}(p-1)p^2} \cdot \max_{\|z\|_1 \leq \frac{p}{2}} \left(\sum_{j=1}^{(p-1)/2} z(j)^2\right)^{1/2} \cdot \left(\sum_{j=1}^{(p-1)/2} z(j)\right)^{1/2} & \text{(Cauchy-Schwartz)} \\
&= \frac{2^{3/2}}{3^{3/2}(p-1)p^{3/2}} \cdot \max_{\|z\|_1 \leq \frac{p}{2}} \|z\|_2 \\
&\leq  \frac{2^{3/2}}{3^{3/2}(p-1)p^{3/2}} \cdot \frac{p}{2} = \sqrt{\frac{2}{27}} \cdot \frac{1}{p^{1/2}(p-1)}.
\end{align*}

The only way to turn inequality~\ref{eq:qm-gm} into an equality is to set $|\hat{u}(j)| = |\hat{v}(j)| = |\hat{w}(j)|$, and the only way to achieve $\|z\|_2 = \frac{p}{2}$ is to place all the mass on a single frequency, so the only possible way to achieve the upper bound is to set
\[
|\hat{u}(j)| = |\hat{v}(j)| = |\hat{w}(j)| =
\begin{cases}
    \sqrt{p/6} & \text{if } j=\pm \zeta\\
    0 & \text{otherwise}
\end{cases}.
\]
for some frequency $\zeta \in \{1,\dots,\frac{p-1}{2}\}$. In this case, we indeed match the upper bound:
\[
\frac{4}{(p-1)p^2} \cdot \left(\frac{p}{6}\right)^{3/2} = \sqrt{\frac{2}{27}} \cdot \frac{1}{p^{1/2}(p-1)}.
\]
so this is the maximum margin.

Putting it all together, and abusing notation by letting $\theta_u^* :=\theta_u(\zeta)$, we obtain that all neurons maximizing the expected class-weighted margin are of the form (up to scaling):
\begin{align*}
u(a) &= \frac{1}{p}\sum_{j=0}^{p-1} \hat{u}(j) \rho^{ja} \\
&= \frac{1}{p}\left[\hat{u}(\zeta) \rho^{\zeta a} + \hat{u}(-\zeta ) \rho^{-\zeta a}\right] \\
&= \frac{1}{p}\left[\sqrt{\frac{p}{6}} \exp(i \theta_u^*)\rho^{\zeta a} + \sqrt{\frac{p}{6}} \exp(-i \theta_u^*)\rho^{-\zeta a}\right] \\
&= \sqrt{\frac{2}{3p}} \cos(\theta_u^* + 2\pi \zeta a/p)
\end{align*}
and
\begin{align*}
v(b) &= \sqrt{\frac{2}{3p}} \cos(\theta_v^* + 2\pi \zeta b/p) \\
w(c) &= \sqrt{\frac{2}{3p}} \cos(\theta_w^* + 2\pi \zeta c/p)
\end{align*}
for some phase offsets $\theta_u^*,\theta_v^*,\theta_w^* \in \sR$ satisfying $\theta_u^* + \theta_v^* = \theta_w^*$ and some $\zeta \in \sZ_p \setminus \{0\}$ (where $\zeta$ is the same for $u$, $v$, and $w$).
\end{proof}

It remains to construct a network $\theta^*$ which uses neurons of the above form and satisfies condition \ref{c1} and Equation \ref{eq:q*} with respect to $q = \textrm{unif}(\sZ_p)$.

\subsection{Proof that condition \ref{c1} and Equation \ref{eq:q*} are satisfied}
\begin{proof}
Our $\theta^*$ will consist of $4(p-1)$ neurons: 8 neurons for each of the frequencies $1, \dots, \frac{p-1}{2}$. Consider a given frequency $\zeta$. For brevity, let $\cos_\zeta(x)$ denote $\cos(2\pi \zeta x/p)$, and similarly for $\sin_\zeta(x)$. First, we observe:
\begin{align*}
\cos_\zeta(a+b-c) &= \cos_\zeta(a+b)\cos_\zeta(c) + \sin_\zeta(a+b)\sin_\zeta(c)
\\ &= \cos_\zeta(a)\cos_\zeta(b)\cos_\zeta(c) - \sin_\zeta(a)\sin_\zeta(b)\cos_\zeta(c) \\ &\quad+ \sin_\zeta(a)\cos_\zeta(b)\sin_\zeta(c) + \cos_\zeta(a)\sin_\zeta(b)\sin_\zeta(c)    
\end{align*}

Each of these four terms can be implemented by a pair of neurons $\phi_1, \phi_2$. Consider the first term, $\cos_\zeta(a)\cos_\zeta(b)\cos_\zeta(c)$. For the first neuron $\phi_1$, set $u_1(\cdot), v_1(\cdot), w_1(\cdot) := \cos_\zeta(\cdot)$, and for $\phi_2$, set $u_2(\cdot) := \cos_\zeta(\cdot)$ and $v_2(\cdot), w_2(\cdot) := -\cos_\zeta(\cdot)$. These can be implemented in the form we derived by setting $(\theta_u^*,\theta_v^*,\theta_w^*)$ to $(0,0,0)$ for the first neuron and $(0,\pi,\pi)$ for the second.

Adding these two neurons, we obtain:
\begin{align*}
\phi_1(a,b) + \phi_2(a,b) &= (\cos_\zeta(a) + \cos_\zeta(a))^2 \cos_\zeta(c) + (\cos_\zeta(a) - \cos_\zeta(a))^2 (-\cos_\zeta(c)) \\
&= 4\cos_\zeta(a)\cos_\zeta(b)\cos_\zeta(c)
\end{align*}

Similarly, each of the other three terms can be implemented by pairs of neurons, by setting $(\theta_u^*,\theta_v^*,\theta_w^*)$ to
\begin{enumerate}
    \item $(\frac{\pi}{2}, -\frac{\pi}{2}, 0)$ and $(\frac{\pi}{2}, \frac{\pi}{2}, \pi)$
    \item $(-\frac{\pi}{2}, 0, -\frac{\pi}{2})$ and $(-\frac{\pi}{2}, \pi, \frac{\pi}{2})$
    \item $(0, -\frac{\pi}{2}, -\frac{\pi}{2})$ and $(0, \frac{\pi}{2}, \frac{\pi}{2})$
\end{enumerate}

If we include such a collection of 8 neurons for every frequency $\zeta \in \{1, \dots, \frac{p-1}{2}\}$, the resulting network will compute the function
\begin{align*}
f(a,b) &= \sum_{\zeta=1}^{(p-1)/2} \cos_\zeta(a+b-c) \\
&= \sum_{\zeta=1}^{p-1} \frac{1}{2} \cdot \exp(2\pi i \zeta (a+b-c)/p) \\
&= \begin{cases}
\frac{p-1}{2} & \text{if } a+b=c \\
0 & \text{otherwise}
\end{cases}
\end{align*}

The scaling constant $\lambda$ for each neuron can be chosen so that the network has $L_{2,3}$-norm 1. For this network, every datapoint is on the margin, so $q = \textrm{unif}(\sZ_p)$ is trivially supported on points on the margin, satisfying Equation \ref{eq:q*}. And for each input $(a,b)$, $f$ takes the same value on all incorrect labels $c'$, satisfying \ref{c1}.
\end{proof}

\subsection{Proof that all frequencies are used} \label{sec:fourier_all_reps}
\begin{proof}
For this proof, we need to introduce the multidimensional discrete Fourier transform. For a function $f:\sZ_p^3 \to \sC$, the multidimensional DFT of $f$ is defined as:
\[
\hat{f}(j,k,\ell) := \sum_{a \in \sZ_p} e^{-2\pi i \cdot ja/p} \left(\sum_{b \in \sZ_p} e^{-2\pi i \cdot jb/p} \left(\sum_{c \in \sZ_p} e^{-2\pi i \cdot jc/p} f(a,b,c) \right)\right)
\]
for all$j,k,\ell \in \sZ$.

To simplify the notation, let $\theta_u = \theta^*_u \cdot \frac{p}{2\pi}$, so
\[
u(a) = \sqrt{\frac{2}{3p}} \cos_p(\theta_u + \zeta a).
\]

Let
\begin{align*}
f(a,b,c) &= \sum_{h=1}^H \phi_h(a,b,c) \\
&= \sum_{h=1}^H \left(u_h(a) + v_h(b)\right)^2 w_h(c) \\
&= \left(\frac{2}{3p}\right)^{3/2} \sum_{h=1}^H \left( \cos_p(\theta_{u_h} + \zeta_h a) + \cos_p(\theta_{v_h} + \zeta_h b)\right)^2 \cos_p(\theta_{w_h} + \zeta_h c)
\end{align*}
be the function computed by an arbitrary margin-maximizing network of width $H$, where each neuron is of the form derived earlier.

Each neuron $\phi$ can be split into three terms:
\[
\phi(a,b,c) = \phi^{(1)}(a,b,c) + \phi^{(2)}(a,b,c) + \phi^{(3)}(a,b,c) := u(a)^2 w(c) + v(b)^2 w(c) + 2u(a)v(b)w(c)
\]

$\widehat{\phi^{(1)}}(j,k,\ell)$ is nonzero only for $k=0$, and $\widehat{\phi^{(2)}}(j,k,\ell)$ is nonzero only for $j=0$. For the third term, we have
\[
\widehat{\phi^{(3)}}(j,k,\ell) =
2\sum_{a,b,c \in \sZ_p} u(a)v(b)w(c) \rho^{-(ja+kb+\ell c)}
= 2 \hat{u}(j) \hat{v}(k) \hat{w}(\ell).
\]
In particular,
\begin{align*}
\hat{u}(j) &= \sum_{a \in \sZ_p} \sqrt{\frac{2}{3p}} \cos_p(\theta_u + \zeta a) \rho^{-ja}  \\
&= (6p)^{-1/2} \sum_{a \in \sZ_p} \left(\rho^{\theta_u + \zeta a} + \rho^{-(\theta_u + \zeta a)}\right) \rho^{-ja} \\
&= (6p)^{-1/2} \left(\rho^{\theta_u} \sum_{a \in \sZ_p}\rho^{(\zeta - j) a} + \rho^{-\theta_u} \sum_{a \in \sZ_p} \rho^{-(\zeta + j) a} \right) \\
&=
\begin{cases}
\sqrt{p/6} \cdot \rho^{\theta_u} & \text{if } j = \zeta \\
\sqrt{p/6} \cdot \rho^{-\theta_u} & \text{if } j = -\zeta \\
0 & \text{otherwise}
\end{cases}
\end{align*}
and similarly for $\hat{v}$ and $\hat{w}$. $\zeta$ was defined to be nonzero, so the $\zeta=0$ case is ignored. Thus, $\hat{\phi^{(3)}}(j,k,\ell)$ is nonzero only when $j,k,\ell$ are all $\pm \zeta$. We can conclude that $\hat{\phi}(j,k,\ell)$ can only be nonzero if one of the following conditions holds:
\begin{enumerate}
\item $j = 0$
\item $k = 0$
\item $j,k,\ell = \pm \zeta$.
\end{enumerate}

Independent of the above considerations, we know by Lemma~\ref{lem:backbone_lemma} that the function $f$ implemented by the network has equal margin across different inputs and across different classes for the same input. In other words, $f$ can be decomposed as
\[
f(a,b,c) = f_1(a,b,c) + f_2(a,b,c)
\]
where
\[
f_1(a,b,c) = F(a,b)
\]
for some $F: \sZ_p \times \sZ_p \to \sR$, and
\[
f_2(a,b,c) = \lambda \cdot \1_{a+b=c}
\]
where $\lambda > 0$ is the margin of $f$.

The Fourier transforms of $f_1$ and $f_2$ are
\[
\hat{f_1}(j,k,l) =
\begin{cases}
    \hat{F}(j,k) & \text{if } \ell=0 \\
    0 & \text{otherwise}
\end{cases}
\]
and
\[
\hat{f_2}(j,k,l) =
\begin{cases}
    \lambda p^2 & \text{if } j=k=-\ell \\
    0 & \text{otherwise}
\end{cases}.
\]

Hence, when $j=k=-\ell \neq 0$, we must have $\hat{f}(j,k,\ell) > 0$. But then, from the conditions under which each neuron's DFT $\hat{\phi}$ is nonzero, it must follow that there is at least one neuron for each frequency.
\end{proof}
\section{Proofs for Sparse parity} \label{sec:proof_sp_parity}
\begin{theorem*}
    Consider a single hidden layer neural network of width $m$ with the activation function given by $x^k$, i.e, $f(x) = \sum_{i=1}^m (u_i^\top x)^k w_i$, where $u_i \in \mathbb{R}^n$ and $w_i \in \mathbb{R}^2$, trained on the $(n,k)-$sparse parity task. Without loss of generality, assume that the first coordinate of $w_i$ corresponds to the output for class $y=+1$. Denote the vector $[1,-1]$ by $\vb$. Provided $m \geq 2^{k-1}$, the $L_{2,k+1}$ maximum margin is:
    \[k! \sqrt{2 (k+1)^{-(k+1)}}.\]
    Any network achieving this margin satisfies the following conditions:
    \begin{enumerate}
        \item For every $i$ having $\| u_i \| > 0$,  $\text{spt}(u_i) = S$, $w_i$ lies in the span of $\vb$ and $\forall j \in S$, $|u_i[j]| = \| w_i \|$.
        \item For every $i$, $\left(\Pi_{j \in S} u_i[j]\right) (w_i^\top \vb) \geq 0$.
    \end{enumerate}
\end{theorem*}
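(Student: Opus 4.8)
The plan is to instantiate the binary-classification blueprint of Section~\ref{sec:blueprint} with $q^*=\mathrm{unif}(\{\pm1\}^n)$ and then apply Lemmas~\ref{lem:ind_neuron_bin_class} and~\ref{lem:char_max_marg}. Note first that this network has homogeneity constant $\nu=k+1$ (degree $k$ in $u$, degree $1$ in $w$), which is why the relevant norm is $L_{2,k+1}=L_{2,\nu}$. For binary classification the per-neuron margin collapses to $\psi(\{u,w\},x,y)=y\,(u^\top x)^k(w^\top\vb)$, where the label is $y=\prod_{j\in S}x_j$.

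\textbf{Step 1 (single-neuron optimization — the crux).} I would compute the expected per-neuron margin against $q^*$ by expanding $(u^\top x)^k$ multinomially and using $x_j^2=1$: the only surviving multi-index is the all-ones index on $S$, giving
\[
\E_{x}\!\left[\psi(\{u,w\},x,y)\right]=k!\,(w^\top\vb)\prod_{j\in S}u_j.
\]
Then maximize over $\|u\|_2^2+\|w\|_2^2\le1$ in three stages: (i) $w^\top\vb\le\sqrt2\,\|w\|_2$ with equality iff $w$ is a correctly oriented multiple of $\vb$; (ii) $\prod_{j\in S}u_j\le(\|u\|_2^2/k)^{k/2}$ by AM-GM applied to $\{|u_j|^2\}_{j\in S}$, with equality iff $\mathrm{spt}(u)=S$, the $|u_j|$ are equal on $S$, and $\prod_{j\in S}u_j>0$; (iii) optimize the split $\|w\|_2^2=t$, $\|u\|_2^2=1-t$ of $t(1-t)^k$, maximized at $t=1/(k+1)$. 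This yields $\gamma^*=k!\sqrt{2(k+1)^{-(k+1)}}$ and shows that $\Omega^*_{q^*}$ is exactly the set of positive scalings of neurons with $\mathrm{spt}(u)=S$, $w\in\mathrm{span}(\vb)$, $|u_j|=\|w\|_2$ for all $j\in S$, and $(\prod_{j\in S}u_j)(w^\top\vb)>0$.

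\textbf{Step 2 (certificate network via the parity identity) and Step 3 (conclude).} Next I would exhibit a width-$2^{k-1}$ network $\theta^*$ of such neurons that puts every datapoint on the margin. Using the identity $\prod_{j\in S}x_j=\tfrac{1}{2^{k-1}k!}\sum_{\epsilon}(\prod_{j\in S}\epsilon_j)(\epsilon^\top x)^k$, where $\epsilon$ ranges over the $2^{k-1}$ sign patterns on $S$ with a fixed coordinate pinned to $+1$ (proved by the same multinomial/pairing computation as in Step 1), take one neuron per $\epsilon$ with $u=\epsilon$ (zero-padded off $S$) and $w$ a positive multiple of $(\prod_{j\in S}\epsilon_j)\vb$, rescaled to lie in $\Omega^*_{q^*}$ and weighted equally so $\sum_i\lambda_i^{k+1}=1$. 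Each neuron is a max-expected-margin neuron, so the combining part of Lemma~\ref{lem:ind_neuron_bin_class} gives Equation~\ref{eq:theta*}; and by the identity $g(\theta^*,x,y)=\prod_{j\in S}x_j\cdot\sum_i(u_i^\top x)^k(w_i^\top\vb)$ is a positive constant independent of $x$, so $q^*$ is supported on the margin, giving Equation~\ref{eq:q*} (when $m>2^{k-1}$, zero out the surplus neurons). With the certificate pair in hand, Lemma~\ref{lem:char_max_marg} shows the maximum $L_{2,k+1}$ margin is $\gamma^*$ and that any max-margin $\hat\theta$ has directional support only on $\Omega^*_{q^*}$; unpacking the description of $\Omega^*_{q^*}$ from Step 1 gives conditions (1) and (2) (condition (2) being the nonstrict relaxation that also covers neurons with $u_i=0$).

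\textbf{Main obstacle.} I expect the bulk of the work to be Step 1: carrying out the chained equality analysis — first the multinomial/parity-character computation, then AM-GM, then the univariate optimization — carefully enough to pin down $\Omega^*_{q^*}$ exactly (including the sign condition and the $|u_j|=\|w\|_2$ ratio), and then verifying in Step 2 that the classical parity polynomial identity can be realized using \emph{only} neurons drawn from $\Omega^*_{q^*}$ at the correct norm ratio.
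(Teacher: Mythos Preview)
Your proposal is correct and follows essentially the same route as the paper's proof: uniform $q^*$, the multinomial computation giving $\E[\psi]=k!\,(w^\top\vb)\prod_{j\in S}u_j$, optimization yielding the stated neuron shape and margin value, and the $2^{k-1}$-neuron construction over sign patterns with one coordinate pinned. The only cosmetic differences are that the paper decomposes $w=\tfrac{1}{\sqrt2}(\alpha\va+\beta\vb)$ and invokes the multi-class Lemmas~\ref{lem:ind_neuron} and~\ref{lem:backbone_lemma} with trivial $\tau$ rather than the binary Lemmas~\ref{lem:ind_neuron_bin_class} and~\ref{lem:char_max_marg}, and it dismisses the single-neuron optimization with ``Clearly'' where you spell out the AM-GM/split argument.
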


\begin{proof}
    We will consider $q^*$ to be equally distributed on the dataset and optimize the class-weighted margin as defined in Equation \ref{eq:theta*-g'}. We will consider the weight $\tau(x,y)[y'] = 1$ for $y' \neq y$. Also, let $\ba$ denote the vector $[1,1]$ and $\bb$ denote the vector $[1,-1]$. Then, any $w_i \in \mathbb{R}^2$ can be written as $w_i = \frac{1}{\sqrt{2}}\left[ \alpha_i \ba + \beta_i \bb\right]$ for some $\alpha_i, \beta_i \in \mathbb{R}$.

    First, using lemma \ref{lem:ind_neuron}, we can say that one neuron maximizers of class-weighted margin are given by
    \[ \argmax_{\| [u,w] \|_2 \leq 1} \E_{(x,y) \sim D} \left[\phi(\{u,w\}, x)[y] - \phi(\{u,w\}, x)[y']\right] \]
    where $y' = -y$, $\phi(\{u,w\}, x) = (u^\top x)^k w$ and $\| [u,w] \|_2$ represents the 2-norm of the concatenation of $u$ and $w$.

    Considering that $y \in \{ \pm 1 \}$ and $w = \frac{1}{\sqrt{2}}\left[\alpha \ba + \beta \bb\right]$, we can say $\phi(\{u,w\}, x)[y] = \frac{1}{\sqrt{2}}(u^\top x)^k [\alpha + y \beta]$. Thus, we can say

    \begin{align*}
       \E_{(x,y) \sim D} \left[\phi(\{u,w\}, x)[y] - \phi(\{u,w\}, x)[y']\right] &=  \sqrt{2} \E_{(x,y) \sim D}\left[(u^\top x)^k \beta y\right] \\
       &= \sqrt{2} \E_{(x,y) \sim D}\left[(u^\top x)^k \beta \Pi_{i \in S} x_i \right] \\
       &= \sqrt{2} k! \left(\Pi_{i \in S} u_i\right) \beta
    \end{align*}
    where in the last step, all other terms are zero by symmetry of the dataset.

    Clearly, under the constraint $\| u \|^2 + \alpha^2 + \beta^2 \leq 1$ (where $\|w\|^2 = \alpha^2 + \beta^2$), this is maximized when $u_i = 0$ for $i \notin S$, $\alpha = 0$, $u_i = \pm \frac{1}{\sqrt{k+1}}$ and $\beta = \pm \frac{1}{\sqrt{k+1}}$, with $\left(\Pi_{i \in S} u_i\right) \beta > 0$.

    Now, using Lemma \ref{lem:ind_neuron}, we will create a network using these optimal neurons such that it satisfies \ref{c1}, and Equations \ref{eq:q*} and \ref{eq:theta*-g'}, thus concluding by Lemma \ref{lem:backbone_lemma}. \ref{c1} holds trivially as this is a binary classification task, so $g' = g$.

    Consider a maximal subset $A \subset \{ \pm 1 \}^k$ such that if $\bsigma \in A$, then $-\bsigma \notin A$ and for any $\bsigma \in A, \bsigma_1 = 1$. Now, consider a neural network having $2^{k-1}$ neurons given by
    \[ f(\theta, x) = \frac{1}{2^{k-1}} \sum_{\bsigma \in A} \left( \sum\limits_{i=1}^k \frac{\sigma_i}{\sqrt{k+1}} x_{S_i}\right)^k \frac{\left(\Pi_{i=1}^k \sigma_i\right)}{\sqrt{k+1}} \frac{1}{\sqrt{2}} \bb = \frac{1}{\sqrt{2}} k! (k+1)^{-(k+1)/2} \left(\Pi_{i \in S} x_i\right) \bb \]

    By Lemma \ref{lem:ind_neuron}, the above neural network also maximizes the class-weighted mean margin. Moreover, it also satisfies Equation \ref{eq:q*}, as every term other than $\Pi x_{S_i}$ cancels out in the sum. 

    Consider any monomial $T$ which depends only on $S' \subset S$. Consider any one of the terms in $f(x)$ and let the coefficient of $T$ in the term given by $c_T$. Consider another term in $f(x)$, where, for some $i \in S \setminus S'$ and $j=k+1$, $\sigma_i$ and $\sigma_j$ are flipped. For this term, the coefficient of $T$ will be $-c_T$, as for all $i \in S'$, $\sigma_i$ is the same, but $\sigma_{k+1}$ is different. Thus, for any such monomial, its coefficient in expanded $f(x)$ will be $0$ as terms will always exist in these pairs.

    Thus, $f(\theta, x)$ satisfies \ref{c1}, Equation \ref{eq:q*} and \ref{eq:theta*-g'}, hence, by Lemma \ref{lem:backbone_lemma}, any maximum margin solution satisfies the properties stated in Theorem \ref{thm:parity}. 
\end{proof}
\section{Additional Group Representation Theory Preliminaries}
\label{sec:rep_theory_primer}
In this section we properly define relevant results from group representation theory used in the proof of Theorem~\ref{thm:group}. We also refer the reader to \cite{kosmann2010groups}, one of many good references for representation theory.

\begin{definition}
A linear representation of a group $G$ is a finite dimensional complex vector space $V$ and a group homomorphism $R: G \to GL(V)$. We denote such a representation by $(R, V)$ or simply just $R$. The dimension of the representation $R$, denoted $d_R$, equals the dimension of the vector space $V$.
\end{definition}

In our case we are only concerned with finite groups with real representations, i.e. $V = \R^d$ and each representation $R$ maps group elements to real invertible $d \times d$ matrices. Furthermore, we are only concerned with \emph{unitary} representations $R$, i.e. $R(g)$ is unitary for every $g$. It is a known fact that every representation of a finite group can be made unitary, in the following sense:

\begin{theorem}[\cite{kosmann2010groups}, Theorem 1.5.]
\label{thm:unitarizability}
Every representation of a finite group $(R, V)$ is unitarizable, i.e. there is a scalar product on $V$ such that $R$ is unitary.
\end{theorem}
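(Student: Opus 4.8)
The plan is to use the standard averaging argument (the finite-group version of Weyl's unitarian trick). First I would fix an arbitrary inner product $\langle\cdot,\cdot\rangle_0$ on $V$; such a product exists because $V=\R^d$ (or $\sC^d$) is finite-dimensional — for instance the standard dot product in any chosen basis. Then I would define a new form by averaging the pullbacks of $\langle\cdot,\cdot\rangle_0$ under the group action,
\[
\langle u,v\rangle := \frac{1}{|G|}\sum_{g\in G}\langle R(g)u,\,R(g)v\rangle_0,
\]
which is a finite sum precisely because $G$ is finite — this is the only place finiteness is used.

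Next I would check that $\langle\cdot,\cdot\rangle$ is again an inner product. Bilinearity (resp.\ sesquilinearity in the complex case) and symmetry (resp.\ conjugate symmetry) are inherited term by term from $\langle\cdot,\cdot\rangle_0$, since each $R(g)$ is linear. For positive-definiteness, observe that for $u\neq 0$ every summand $\langle R(g)u,R(g)u\rangle_0\geq 0$, and the $g=e$ term alone contributes $\langle u,u\rangle_0>0$, so $\langle u,u\rangle>0$; it is clearly $0$ when $u=0$.

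Finally I would verify $G$-invariance: for any $h\in G$,
\[
\langle R(h)u,R(h)v\rangle=\frac{1}{|G|}\sum_{g\in G}\langle R(g)R(h)u,\,R(g)R(h)v\rangle_0=\frac{1}{|G|}\sum_{g\in G}\langle R(gh)u,\,R(gh)v\rangle_0=\langle u,v\rangle,
\]
where the middle equality uses that $R$ is a homomorphism, so $R(g)R(h)=R(gh)$, and the last equality follows by reindexing the sum along the bijection $g\mapsto gh$ of $G$. Hence each $R(h)$ preserves $\langle\cdot,\cdot\rangle$, i.e.\ $R(h)$ is unitary with respect to this scalar product, which is exactly the claim.

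There is no genuine obstacle here; the only points needing a little care are that finiteness of $G$ is what makes the average well-defined, and that in the real case one works with a symmetric positive-definite bilinear form rather than a Hermitian one (the argument is otherwise identical). If one wanted the stronger statement that $R$ is \emph{conjugate} to a representation into the orthogonal/unitary group of the original inner product, one would additionally diagonalize the Gram matrix of $\langle\cdot,\cdot\rangle$ and conjugate by its positive square root, but for the statement as given this extra step is unnecessary.
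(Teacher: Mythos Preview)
Your proof is correct and is the standard averaging argument (Weyl's unitarian trick in the finite-group setting). The paper itself does not supply a proof of this statement; it merely quotes it as a known result with a citation to \cite{kosmann2010groups}, so there is nothing to compare against beyond noting that your argument is precisely the one found in that reference.
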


Also of particular interest are irreducible representations.

\begin{definition}
A representation $(R, V)$ of $G$ is irreducible if $V \neq \{0\}$ and the only vector subspaces of $V$ invariant under $R$ are $\{0\}$ or $V$ itself.
\end{definition}

A well-known result is Maschke's Theorem, which states that every finite-dimensional representation of a finite group is completely reducible; thus it suffices to consider a fundamental set of irreducible unitary representations in our analysis.

\begin{theorem}[Maschke's Theorem.]
\label{thm:maschke}
Every finite-dimensional representation of a finite group is a direct sum of irreducible representations.   
\end{theorem}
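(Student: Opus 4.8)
The plan is to argue by induction on $\dim V$, using the unitarizability result (Theorem~\ref{thm:unitarizability}) to produce an invariant complement for any invariant subspace. First I would fix a finite-dimensional representation $(R, V)$ of the finite group $G$; we may assume $V \neq \{0\}$, since otherwise $V$ is the empty direct sum of irreducibles. By Theorem~\ref{thm:unitarizability} we equip $V$ with an inner product $\langle \cdot, \cdot \rangle$ for which every $R(g)$ is unitary. The base case is $\dim V = 1$: such a representation has no proper nonzero subspace, hence is irreducible, and there is nothing to prove.

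For the inductive step, suppose the claim holds for all representations of dimension less than $\dim V$. If $V$ is irreducible, we are done. Otherwise there is a subspace $W$ with $\{0\} \neq W \subsetneq V$ that is $R$-invariant, i.e. $R(g)W \subseteq W$ for every $g \in G$. The key step is to show the orthogonal complement $W^\perp$ is also $R$-invariant. For $v \in W^\perp$, $w \in W$, and $g \in G$, unitarity gives
\[
\langle R(g)v, w\rangle = \langle v, R(g)^{-1}w\rangle = \langle v, R(g^{-1})w\rangle = 0,
\]
the last equality because $R(g^{-1})w \in W$ by invariance of $W$. Hence $R(g)v \in W^\perp$, so $W^\perp$ is invariant. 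Since the inner product is nondegenerate, $V = W \oplus W^\perp$ as representations, with both summands of dimension strictly less than $\dim V$. By the inductive hypothesis each of $W$ and $W^\perp$ decomposes as a direct sum of irreducible subrepresentations, and concatenating these two decompositions exhibits $V$ as a direct sum of irreducibles, completing the induction.

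I do not anticipate a serious obstacle here: the argument is essentially self-contained once unitarizability is granted. The only points requiring mild care are the degenerate/base cases (allowing the empty direct sum, or simply starting at $\dim V = 1$) and emphasizing that $V = W \oplus W^\perp$ is a splitting \emph{as representations}, not merely as vector spaces — which is precisely what the invariance of both $W$ and $W^\perp$ guarantees. An alternative route avoiding Theorem~\ref{thm:unitarizability} would take an arbitrary vector-space complement of $W$ with associated projection $\pi : V \to W$ and average it to $\bar{\pi} = \frac{1}{|G|}\sum_{g \in G} R(g)\,\pi\,R(g)^{-1}$, which one checks is a $G$-equivariant projection onto $W$ whose kernel is then an invariant complement; but since unitarizability is already established in the text, the orthogonal-complement argument is the more economical choice.
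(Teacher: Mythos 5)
Your proof is correct. The paper itself does not prove Maschke's theorem --- it is quoted as a standard background result from the representation theory literature (stated immediately after the unitarizability theorem, which is the only ingredient your argument needs beyond linear algebra) --- and your induction on dimension via the invariant orthogonal complement, with the unitarity computation $\langle R(g)v, w\rangle = \langle v, R(g^{-1})w\rangle = 0$, is exactly the standard argument the cited reference has in mind.
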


\begin{theorem}[\cite{kosmann2010groups}, Theorem 3.4.]
    Let $G$ be a finite group. If $R_1,...,R_K$ denote the irreducible representations of $G$, then $|G| = \sum_{n=1}^K d_{R_n}^2$, where $d_{R_n}$ represents the dimensionality of $R_n$.
\end{theorem}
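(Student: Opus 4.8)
The plan is to derive the identity by decomposing the left regular representation of $G$ into irreducibles and comparing dimensions. First I would set up the regular representation $\rho$ on the $|G|$-dimensional complex space $\sC[G]$ with basis $\{e_g\}_{g\in G}$ and action $\rho(h)e_g = e_{hg}$. Its character is read off directly from the permutation-matrix structure: $\rho(h)$ has a nonzero diagonal entry in position $g$ exactly when $hg = g$, i.e.\ only when $h$ is the identity, so $\chi_\rho(e) = |G|$ and $\chi_\rho(h) = 0$ for every $h \neq e$.

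Next, by Maschke's Theorem (Theorem~\ref{thm:maschke}) the regular representation splits as $\rho \cong \bigoplus_{n=1}^K m_n R_n$ with nonnegative integer multiplicities $m_n$; taking characters gives $\chi_\rho = \sum_n m_n \chi_{R_n}$, and taking dimensions gives $|G| = \sum_n m_n d_{R_n}$. The remaining task is to show $m_n = d_{R_n}$. For this I would invoke the orthonormality of irreducible characters, $\frac{1}{|G|}\sum_{g\in G}\chi_{R_m}(g)\overline{\chi_{R_n}(g)} = \delta_{mn}$, which forces $m_n = \frac{1}{|G|}\sum_{g\in G}\chi_\rho(g)\overline{\chi_{R_n}(g)}$. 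Since $\chi_\rho$ is supported only at the identity, this collapses to
\[
m_n = \frac{\chi_\rho(e)\,\overline{\chi_{R_n}(e)}}{|G|} = \overline{d_{R_n}} = d_{R_n},
\]
and substituting back yields $|G| = \sum_{n=1}^K m_n d_{R_n} = \sum_{n=1}^K d_{R_n}^2$, as claimed.

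The only nonroutine ingredient, and hence the main obstacle, is establishing character orthogonality from first principles. The standard route is: first prove Schur's Lemma over $\sC$ (any intertwiner between two irreducibles is zero or an isomorphism, and any self-intertwiner of an irreducible is a scalar); then apply it to the averaged operator $\frac{1}{|G|}\sum_{g\in G} R_m(g)\,A\,R_n(g)^{-1}$ for an arbitrary linear map $A$ to obtain the Schur orthogonality relations among the matrix coefficients $R_n(g)_{ij}$; and finally take traces to specialize these to characters. Invoking Theorem~\ref{thm:unitarizability} lets one replace $R_n(g)^{-1}$ by $R_n(g)^\dagger$, so the coefficients transform cleanly under complex conjugation. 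Once this is in place, the character of the regular representation, the multiplicity computation, and the dimension count are all immediate. As an alternative, one could bypass the regular representation entirely: Schur orthogonality shows the $\sum_n d_{R_n}^2$ matrix-coefficient functions are linearly independent in $\sC[G]$, giving $\sum_n d_{R_n}^2 \le |G|$, and a completeness argument (those coefficients span all functions on $G$) upgrades this to equality.
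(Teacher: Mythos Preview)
Your argument is correct and is in fact the standard textbook proof of this identity. However, there is nothing to compare against: the paper does not supply its own proof of this statement. It appears in the appendix on group-representation-theory preliminaries purely as a quoted background result from \cite{kosmann2010groups}, with no argument given. The paper merely uses the identity (e.g.\ to count the basis vectors $\rho_1,\dots,\rho_{|G|}$) and defers its justification entirely to that reference. So your write-up is a valid proof of a result the paper treats as a black box.
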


The theory about characters of representations and orthogonality relations are essential for our max margin analysis. This is a rich area of results, and we only list those that are directly used in our proofs.

\begin{definition}
    Let $(R, V)$ be a representation of $G$. the character of $R$ is the function $\chi_R: G \to \R$ defined as $\chi_R(g) = \trace(R(g))$ for each $g \in G$. 
\end{definition}

For each conjugacy class of $G$, the character of $R$ is constant (this can easily be verified via properties of the matrix trace). More generally, functions which are constant for each conjugacy class are called \emph{class functions} on $G$. Given the characters across inequivalent irreducible representations, one can construct a ``\emph{character table}'' for a group $G$ in which the columns correspond to the conjugacy classes of a group, and whose rows correspond to inequivalent irreducible representations of a group. The entries of the character table correspond to the character for the representation at that given row, evaluated on the conjugacy class at that given column.

Characters of inequivalent irreducible representations are in fact orthogonal, which follow from the orthogonality relations of representation matrix elements. For a unitary irreducible representation $R$, define the vector $R_{(i,j)} = (R(g)_{(i,j)})_{g \in G}$ with entries being the $(i,j)$th entry of the matrix output for each $g \in G$ under $R$. We have the following result.

\begin{proposition}[\cite{kosmann2010groups}, Corollary 2.10.]
\label{prop:matrix_elements_ortho}
Let $(R_1, V_1)$ and $(R_2, V_2)$ be unitary irreducible representations of $G$. Choosing two orthonormal bases in $V_1$ and $V_2$, the following holds:
\begin{enumerate}
    \item If $R_1$ and $R_2$ are inequivalent, then for every $i, j, k, l$,
    we have $\langle {R_1}_{(i,j)}, {R_2}_{(k,l)} \rangle = 0.$
    \item If $R_1 = R_2 = R$ and $V_1 = V_2 = V$, then for every $i,j,k,l$,
    we have $\langle {R}_{(i,j)}, {R}_{(k,l)} \rangle = \frac{1}{d_R} \delta_{ik}\delta_{jl},$
    where $\delta_{ik} = \mathbbm{1}[i=k]$.
\end{enumerate}

\end{proposition}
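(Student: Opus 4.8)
The plan is to prove the classical Schur orthogonality relations for matrix coefficients by the standard ``averaging trick'' together with Schur's lemma. Fix orthonormal bases for $V_1$ and $V_2$, so that each $R_n(g)$ is a genuine $d_{R_n}\times d_{R_n}$ unitary matrix, and let $\langle u,v\rangle = \frac{1}{|G|}\sum_{g\in G} u_g\overline{v_g}$ denote the normalized inner product on $\sC^{|G|}$ used here. For an arbitrary linear map $A\colon V_2\to V_1$ (a $d_{R_1}\times d_{R_2}$ matrix), define the averaged operator
\[
\widetilde A := \frac{1}{|G|}\sum_{g\in G} R_1(g)\,A\,R_2(g)^{-1}.
\]
First I would check that $\widetilde A$ intertwines $R_1$ and $R_2$, i.e. $R_1(h)\widetilde A = \widetilde A\, R_2(h)$ for all $h\in G$; this is a one-line reindexing $g\mapsto hg$ in the sum, using $R_2(h^{-1}g)^{-1} = R_2(g)^{-1}R_2(h)$. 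Since $R_1,R_2$ are unitary we may replace $R_2(g)^{-1}$ by $R_2(g)^{\ast}$, and then the $(i,k)$ entry of $\widetilde A$ for the choice $A = E_{jl}$ (the matrix unit with a single $1$ in position $(j,l)$) is exactly $\frac{1}{|G|}\sum_{g} R_1(g)_{(i,j)}\overline{R_2(g)_{(k,l)}} = \langle {R_1}_{(i,j)},{R_2}_{(k,l)}\rangle$.

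Next I would invoke Schur's lemma, which I would state and prove in two lines from irreducibility since it is not recalled earlier: the kernel and image of an intertwiner are subrepresentations, so an intertwiner of irreducibles is either $0$ or an isomorphism; and an intertwiner $T$ of a complex irreducible representation with itself has an eigenvalue $\lambda$, whence $T-\lambda I$ is a non-invertible intertwiner, hence $0$, so $T=\lambda I$. In the inequivalent case $R_1\ncong R_2$ there is no nonzero intertwiner, so $\widetilde{E_{jl}} = 0$ and $\langle {R_1}_{(i,j)},{R_2}_{(k,l)}\rangle = 0$ for all $i,j,k,l$, giving part (1). In the case $R_1 = R_2 = R$ we get $\widetilde{E_{jl}} = \lambda_{jl} I$; taking the trace of the defining average and using $\trace\!\big(R(g)E_{jl}R(g)^{-1}\big) = \trace(E_{jl}) = \delta_{jl}$ gives $\lambda_{jl} d_R = \delta_{jl}$, so $\widetilde{E_{jl}} = \frac{\delta_{jl}}{d_R} I$. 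Reading off the $(i,k)$ entry yields $\langle R_{(i,j)},R_{(k,l)}\rangle = \frac{1}{d_R}\delta_{ik}\delta_{jl}$, which is part (2).

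The argument is short, and its only genuine inputs are the invariance of uniform averaging over the finite group (which makes $\widetilde A$ an intertwiner) and Schur's lemma, whose ``scalar'' half relies on algebraic closure of $\sC$ — this is precisely why the ambient field in the definitions of this appendix is taken to be $\sC$ rather than $\sR$. I do not expect a real obstacle; the step requiring the most care is pure bookkeeping — keeping the index conventions for $E_{jl}$ and for $R_2(g)^{\ast}$ straight so the conjugation lands on the correct factor, and tracking the normalization in the inner product so the final constant is $\tfrac{1}{d_R}$ rather than $\tfrac{|G|}{d_R}$ (if one instead uses the unnormalized sum $\sum_g u_g\overline{v_g}$, simply rescale by $|G|$ throughout).
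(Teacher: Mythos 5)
Your proof is correct: it is the standard averaging-plus-Schur argument for the orthogonality of matrix coefficients, and the paper itself offers no proof of this proposition, simply citing \cite{kosmann2010groups}, where essentially the same argument appears. The one point worth keeping in mind is the one you already flagged — the scalar half of Schur's lemma needs $\sC$, and in the paper's setting of groups all of whose irreducibles are realizable over $\sR$ these real irreps remain irreducible over $\sC$, so the relation in part (2) with constant $\tfrac{1}{d_R}$ does apply.
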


\begin{theorem}[\cite{kosmann2010groups}, Theorem 2.11.]
    Let $G$ be a finite group. If $R_1$ and $R_2$ are inequivalent irreducible representations of $G$, then $\langle \chi_{R_1}, \chi_{R_2} \rangle = 0$. If $R$ is an irreducible representation of $G$, then $\langle \chi_{R}, \chi_{R} \rangle = 1$. 
\end{theorem}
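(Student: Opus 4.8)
The plan is to derive the character orthogonality relations directly from the orthogonality relations for matrix coefficients, Proposition~\ref{prop:matrix_elements_ortho} (Corollary 2.10 of \cite{kosmann2010groups}), which is already available to us. Beyond that proposition the only content is the elementary observation that a character is a sum of diagonal matrix coefficients, plus a short sesquilinear-algebra bookkeeping.

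First I would fix, for each irreducible representation under consideration, an orthonormal basis of its underlying space so that the representation is unitary; this is legitimate by Theorem~\ref{thm:unitarizability}, and since the trace is invariant under change of basis, the character $\chi_R$ is unchanged by this choice. With respect to such a basis we have, for every $g \in G$,
\[
\chi_R(g) = \trace(R(g)) = \sum_{i=1}^{d_R} R(g)_{(i,i)} ,
\]
so, viewing $\chi_R$ as the vector in $\sC^{|G|}$ indexed by $G$, we may write $\chi_R = \sum_{i=1}^{d_R} R_{(i,i)}$ in the notation of the proposition.

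Next I would expand the inner product of class functions, $\langle f_1, f_2 \rangle = \frac{1}{|G|}\sum_{g \in G} f_1(g)\overline{f_2(g)}$, by sesquilinearity:
\[
\langle \chi_{R_1}, \chi_{R_2} \rangle = \sum_{i=1}^{d_{R_1}} \sum_{k=1}^{d_{R_2}} \langle {R_1}_{(i,i)}, {R_2}_{(k,k)} \rangle .
\]
If $R_1$ and $R_2$ are inequivalent, part~1 of Proposition~\ref{prop:matrix_elements_ortho} makes every summand vanish, so $\langle \chi_{R_1}, \chi_{R_2} \rangle = 0$. If $R_1 = R_2 = R$, part~2 gives $\langle R_{(i,i)}, R_{(k,k)} \rangle = \frac{1}{d_R}\delta_{ik}\delta_{ik} = \frac{1}{d_R}\delta_{ik}$, and the double sum collapses to $\sum_{i=1}^{d_R} \frac{1}{d_R} = 1$, which is the second assertion.

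The hard part is essentially already done for us: the genuinely substantive step — applying Schur's lemma to the averaged intertwiner $\frac{1}{|G|}\sum_{g} R_1(g) M R_2(g)^{-1}$ to obtain the matrix-coefficient orthogonality — is packaged into Proposition~\ref{prop:matrix_elements_ortho}. So the only points needing care here are (i) passing to bases in which the representations are unitary, so that the proposition is applicable, and (ii) keeping the complex conjugate on the second argument straight when $d_R > 1$, which is exactly why specializing $\delta_{ik}\delta_{jl}$ to $j=i$, $l=k$ produces a single Kronecker delta and hence the clean value $1$.
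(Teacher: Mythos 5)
Your derivation is correct. The paper itself offers no proof of this statement --- it is quoted verbatim from \cite{kosmann2010groups} --- but your argument is the standard one and is exactly how the cited source obtains character orthogonality: write $\chi_R = \sum_i R_{(i,i)}$ in a unitarizing basis (justified by Theorem~\ref{thm:unitarizability}, with $\chi_R$ basis-independent), expand $\langle \chi_{R_1}, \chi_{R_2}\rangle$ by sesquilinearity, and invoke the matrix-coefficient orthogonality of Proposition~\ref{prop:matrix_elements_ortho}; the collapse of $\delta_{ik}\delta_{jl}$ to a single $\delta_{ik}$ on the diagonal entries is handled correctly and yields $\sum_{i=1}^{d_R} 1/d_R = 1$. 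No gaps.
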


A fundamental result about characters is that the irreducible characters of $G$ form an orthonormal set in $L^2(G)$ (\cite{kosmann2010groups}, Theorem 2.12.). This implies the following result, which states that the irreducible characters form an orthonormal basis in the vector space of class functions on $G$ taking values in $\R$. Since this vector space has dimension equal to the number of conjugacy classes of $G$, it also follows that the number of equivalence classes of irreducible representations is the number of conjugacy classes. In other words, the character table is square for every finite group.

\begin{theorem}[\cite{kosmann2010groups}, Theorem 3.6.]
The irreducible characters form an orthonormal basis of the vector space of character functions.
\end{theorem}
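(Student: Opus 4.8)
This is a classical result, so I would reconstruct the standard proof, which splits into orthonormality and spanning. Orthonormality is already available: by \cite{kosmann2010groups}, Theorem~2.11 (quoted above), $\langle \chi_{R_i},\chi_{R_j}\rangle=\delta_{ij}$ for the inequivalent irreducible representations $R_1,\dots,R_K$, and each $\chi_{R_n}$ is constant on conjugacy classes, hence lies in the space $\mathcal{C}$ of $\sC$-valued class functions on $G$. So $\{\chi_{R_1},\dots,\chi_{R_K}\}$ is an orthonormal — in particular linearly independent — subset of $\mathcal{C}$, and the only remaining task is to show it spans $\mathcal{C}$.

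For spanning, I would show that any $f\in\mathcal{C}$ orthogonal to every irreducible character is the zero function. Given such an $f$, for each irreducible representation $(R,V)$ define the operator $T^R_f:=\sum_{g\in G}\overline{f(g)}\,R(g)\in\mathrm{End}(V)$. The first step is to check $G$-equivariance: for every $h\in G$, reindexing by $g\mapsto hgh^{-1}$ and using that $f$ is a class function gives $R(h)^{-1}T^R_f R(h)=\sum_g \overline{f(g)}R(h^{-1}gh)=\sum_{g'}\overline{f(hg'h^{-1})}R(g')=T^R_f$. By Schur's Lemma — applicable because $R$ is irreducible over the algebraically closed field $\sC$ — we get $T^R_f=\lambda_R\,\mathrm{Id}_V$ for some scalar $\lambda_R$. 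Taking traces, $\lambda_R d_R=\trace(T^R_f)=\sum_{g}\overline{f(g)}\chi_R(g)=|G|\,\langle\chi_R,f\rangle=0$ by hypothesis, so $T^R_f=0$ for every irreducible $R$.

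Next I would bootstrap to the regular representation using Maschke's Theorem~\ref{thm:maschke}: every finite-dimensional representation of $G$ is a direct sum of irreducibles, and $\pi\mapsto T^\pi_f$ respects direct sums, so $T^\pi_f=0$ for \emph{every} representation $\pi$ of $G$. Applying this to the left regular representation on $\sC[G]$ and evaluating at the basis vector $e_e$ indexed by the identity, $0=T^{\mathrm{reg}}_f e_e=\sum_{g\in G}\overline{f(g)}\,e_g$, which forces $f(g)=0$ for all $g$, i.e.\ $f\equiv 0$. Hence the orthogonal complement of $\mathrm{span}\{\chi_{R_1},\dots,\chi_{R_K}\}$ in $\mathcal{C}$ is trivial, so the irreducible characters span $\mathcal{C}$; combined with orthonormality, they form an orthonormal basis of $\mathcal{C}$.

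The content here is all routine, so the ``main obstacle'' is really just care with conventions: getting the handedness of the equivariance computation right, and invoking Schur's Lemma in the scalar form, which is exactly why one works over $\sC$ here (consistent with the complex representations of this subsection, even though the neural-network analysis elsewhere in the paper restricts to real representations). A minor bookkeeping point is fixing the normalization $\langle a,b\rangle=\tfrac{1}{|G|}\sum_g\overline{a(g)}b(g)$ so that $\lambda_R$ and $\langle\chi_R,f\rangle$ are proportional with the stated constant; since only the equivalence $\lambda_R=0\iff\langle\chi_R,f\rangle=0$ is used, the precise constant does not matter.
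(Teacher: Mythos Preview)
The paper does not actually give its own proof of this statement: it is listed in the preliminaries section as a classical result and attributed to \cite{kosmann2010groups}, Theorem~3.6, with only the informal remark that orthonormality (Theorem~2.12 there) together with a dimension count ``implies'' the basis statement. Your reconstruction is the standard and correct proof of this classical fact, so there is nothing to compare against in the paper; if anything, your argument is more self-contained than what the paper offers, since you supply the spanning step via Schur's Lemma and the regular representation rather than simply citing it.
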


In section~\ref{app:group_proofs} of the Appendix, we rigorously define the basis vectors for network weights based on the representation matrix elements defined in Proposition~\ref{prop:matrix_elements_ortho}, and establish the properties they satisfy, which are key to our analysis. 

\subsection{A Concrete Example: Symmetric Group}
\label{app:example_s_5}

The symmetric group $S_n$ consists of the permutations over a set of cardinality $n$. The order of the group is $n!$. It is a fact that every permutation can be written as a product of \emph{transpositions}--- a permutation which swaps two elements. We can associate with each permutation the parity of the number of transpositions needed, which is independent of the choice of decomposition.

We will provide a concrete description of the representation theory for $S_5$, which is a central group of study in this paper. It has $7$ conjugacy classes, which we denote as $\{e, (1\;2),(1\;2)(3\;4), (1\;2\;3), (1\;2\;3\;4), (1\;2\;3\;4\;5), (1\;2)(3\;4\;5) \}$ (selecting one representative from each conjugacy class). It also has 7 irreducible representations. Apart from the trivial representation, it has another 1-dimensional \emph{sign} representation representing the parity of a permutation. 

The symmetric group also has an $n$-dimensional representation which is the \emph{natural permutation representation}, mapping permutations to \emph{permutation matrices} which shuffle the $n$ coordinates. It turns out that this is in fact reducible, since this has the trivial subrepresentation consisting of vectors whose coordinates are all equal. Decomposing this representation into irreducible representations results in the trivial representation and what is called the \emph{standard} representation of dimension $n-1$. It has another $n-1$-dimensional representation, which is the \emph{product of sign and standard} representations. 

The final three representations of $S_5$ are higher-dimensional, with dimensions $5, 5,$ and $6$. We denote them as 5d\_a, 5d\_b, and 6d. We give the character table of $S_5$ in Table~\ref{table:char_table}, which will be useful for calculating the value of the max margin which we theoretically derive.

\begin{table}
$$
\begin{array}{c|rrrrrrr}
  \rm class&e&(1\;2)&(1\;2)(3\;4)&(1\;2\;3)&(1\;2\;3\;4)&(1\;2\;3\;4\;5)&(1\;2)(3\;4\;5)\cr
  \rm size&1&10&15&20&30&24&20\cr
\hline
  R_{1}&1&1&1&1&1&1&1\cr
  R_{2}\text{(sign)}&1&-1&1&1&-1&1&-1\cr
  R_{3}\text{(standard)}&4&-2&0&1&0&-1&1\cr
  R_{4}\text{(standard $\otimes$ sign)}&4&2&0&1&0&-1&-1\cr
  R_{5}\text{(5d\_a)}&5&1&1&-1&-1&0&1\cr
  R_{6}\text{(5d\_b)}&5&-1&1&-1&1&0&-1\cr
  R_{7}\text{(6d)}&6&0&-2&0&0&1&0\cr
\end{array}
$$
\caption{Character table of $S_5$. \label{table:char_table}}
\end{table}
\section{Proofs for finite groups with real representations}
\label{app:group_proofs}
In this section we prove that for finite groups with real representations, all max margin solutions have neurons which only use a single irreducible representation. 

\begin{theorem*}
    Consider a single hidden layer neural network of width $m$ with quadratic activation trained on learning group composition for $G$ with real irreducible representations. Provided $m \geq 2\sum_{n=2}^K {d_{R_n}}^3$ and $\sum_{n=2}^K {d_{R_n}}^{1.5}\chi_{R_n}(C) < 0$ for every non-trivial conjugacy class $C$, the $L_{2,3}$ maximum margin is:
    \[
    \gamma^*=\frac{2}{3 \sqrt{3|G|}} \frac{1}{\left(\sum_{n=2}^K d_{R_n}^{2.5} \right)}.
    \]
    Any network achieving this margin satisfies the following conditions:
    \begin{enumerate}
        \item For every neuron, there exists a non-trivial representation such that the input and output weight vectors are spanned only by that representation.
        \item There exists at least one neuron spanned by each representation (except for the trivial representation) in the network.
    \end{enumerate}
\end{theorem*}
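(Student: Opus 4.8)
I will follow the blueprint of Section~\ref{sec:blueprint}: produce a certificate pair $(\theta^*,q^*)$ with $q^*=\mathrm{unif}(G\times G)$ satisfying Equations~\ref{eq:q*} and~\ref{eq:theta*-g'} together with condition~\ref{c1}, and then read off the characterization of \emph{all} max-margin networks from Lemma~\ref{lem:backbone_lemma}. Throughout, ``real representations'' is taken to mean that every complex irrep of $G$ is realizable over $\R$ (so $|G|=\sum_n d_{R_n}^2$ and there are $K=$ number of conjugacy classes of them), which is what makes the map $u\mapsto(\hat u(R_n))_n$, $\hat u(R):=\sum_g u_g R(g)$, a bijection $\R^{|G|}\to\prod_n\R^{d_{R_n}\times d_{R_n}}$. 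The one genuinely new idea relative to Theorem~\ref{thm:cyclic} is that the class weighting $\tau$ must \emph{not} be uniform: with uniform $\tau$ the single-neuron optimum concentrates on the lowest-dimensional non-trivial irrep (e.g.\ the sign representation of $S_k$), from which no \ref{c1}-satisfying network can be assembled. For $\mathbb{Z}_p$ this does not arise because all non-trivial real irreps have equal dimension; for general $G$ I will hand-pick $\tau$ so that every non-trivial irrep ties for the single-neuron optimum.

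\textbf{Harmonic analysis of the neuron margin.} Given any class function $s$ on $G$ with $s(e)=1$, $\sum_g s(g)=0$, and $s(g)\le 0$ off the identity, set $\tau\big((a,b),ab\big)[c]:=-s(abc^{-1})$; this is a valid distribution on the incorrect labels. Then $\psi'(\{u,v,w\},a,b)=(u_a+v_b)^2\,(s*w)(ab)$, and since $\hat s(R_n)=\sigma_n I$ with $\sigma_n=|G|\langle s,\chi_{R_n}\rangle/d_{R_n}$ (Schur's lemma), expanding with group Fourier inversion $x_g=\tfrac1{|G|}\sum_n d_{R_n}\,\mathrm{tr}(\hat x(R_n)^{\top}R_n(g))$ and using $\sum_n d_{R_n}\chi_{R_n}(g)=|G|\,\mathbbm{1}[g=e]$ to kill the $u_a^2 w_c$ and $v_b^2 w_c$ pieces, I obtain
\[
\E_{a,b}\big[\psi'(\{u,v,w\},a,b)\big]=\frac{2}{|G|^{3}}\sum_{n\ge 2} d_{R_n}\,\sigma_n\,\mathrm{tr}\!\big(\hat u(R_n)\hat v(R_n)\hat w(R_n)^{\top}\big),
\]
while by Plancherel the norm ball $\|u\|^2+\|v\|^2+\|w\|^2\le1$ becomes $\tfrac1{|G|}\sum_{n\ge2}d_{R_n}\big(\|\hat u(R_n)\|_F^2+\|\hat v(R_n)\|_F^2+\|\hat w(R_n)\|_F^2\big)\le1$.

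\textbf{Choosing $\tau$, optimizing the neuron, and building $\theta^*$.} I will take $s$ with $\langle s,\chi_{R_n}\rangle\propto d_{R_n}^{1.5}$ for $n\ge2$ and $\langle s,\chi_{R_1}\rangle=0$; the normalization $s(e)=1$ forces the constant to be $1/\sum_{m\ge2}d_{R_m}^{2.5}$, and $s(g)=\sum_{n\ge2}d_{R_n}^{1.5}\chi_{R_n}(g)/\sum_m d_{R_m}^{2.5}$ off the identity, so the stated hypothesis $\sum_{n\ge2}d_{R_n}^{1.5}\chi_{R_n}(C)<0$ on every non-trivial conjugacy class is exactly ``$\tau$ is a strictly positive distribution''. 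This choice makes $d_{R_n}^{-1/2}\sigma_n$ independent of $n$. Then the chain $\mathrm{tr}(ABC^{\top})\le\|AB\|_F\|C\|_F\le\|A\|_F\|B\|_{\mathrm{op}}\|C\|_F\le\|A\|_F\|B\|_F\|C\|_F$, followed by the QM--GM inequality and a one-variable allocation argument (the analogue of the Cauchy--Schwarz step in the proof of Theorem~\ref{thm:cyclic}), shows that the maximum of $\E[\psi']$ over unit-norm neurons equals $\gamma^*=\tfrac{2}{3\sqrt{3|G|}}(\sum_{n\ge2}d_{R_n}^{2.5})^{-1}$, attained exactly by neurons supported on a single non-trivial irrep $R_n$ with $\hat u(R_n),\hat v(R_n),\hat w(R_n)$ all rank one, of equal Frobenius norm, and ``chained'' ($\propto e_ie_j^{\top},e_je_k^{\top},e_ie_k^{\top}$ up to a common rotation) --- equivalently $u_a=\lambda\,\xi^{\top}R_n(a)\eta$ and similarly for $v,w$, the representation-theoretic analogue of the single-frequency sinusoids of Theorem~\ref{thm:cyclic}. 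By Lemma~\ref{lem:ind_neuron} these are precisely the neurons that may appear in a $\theta$ meeting Equation~\ref{eq:theta*-g'}, which is part~1. For the certificate, expand $\chi_{R_n}(abc^{-1})=\sum_{i,j,k}R_n(a)_{ij}R_n(b)_{jk}R_n(c)_{ik}$; each summand is a product of three rank-one-in-$R_n$ matrix coefficients, realized (as in the cyclic construction) by a sign-flip pair of neurons $\{u,v,w\},\{u,-v,-w\}$ that cancels the $u_a^2 w_c, v_b^2 w_c$ parts of $(u_a+v_b)^2$. Hence $2\sum_{n\ge2}d_{R_n}^3\le m$ neurons build, with per-irrep scalings $\lambda_n\propto d_{R_n}^{-1/6}$ and overall normalization to unit $L_{2,3}$ norm, a network with $f(a,b)[c]=\sum_{n\ge2}\kappa_n\chi_{R_n}(abc^{-1})$, $\kappa_n\propto d_{R_n}$; the identity $\sum_n d_{R_n}\chi_{R_n}(g)=0$ ($g\ne e$) then makes $f(a,b)[c]$ the \emph{same} value at every incorrect $c$, which gives \ref{c1} and, by symmetry of $D$, puts every datapoint on the margin, so Equation~\ref{eq:q*} holds; a short computation confirms the margin is $\gamma^*$. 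Lemma~\ref{lem:backbone_lemma} now yields that $\theta^*$ is a max-margin network and that every max-margin network has directional support on the single-irrep neurons above.

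\textbf{All irreps occur, and the main obstacle.} For part~2 I will argue as in Section~\ref{sec:fourier_all_reps}: Lemma~\ref{lem:backbone_lemma} forces any max-margin $f$ to have the form $F(a,b)+\gamma^*\mathbbm{1}[ab=c]$, and in the three-variable group Fourier transform the $\mathbbm{1}[ab=c]$ summand has a nonzero component ($\propto d_{R_n}$) in the ``diagonal'' $(R_n,R_n,R_n)$ block for every $n$, whereas the $F(a,b)$ term and the $u_a^2 w_c$, $v_b^2 w_c$ pieces of each neuron contribute only to blocks that are trivial in one of the three slots, and a single-irrep neuron's genuinely three-way component lives only in its own irrep's diagonal block; hence each non-trivial irrep must be carried by at least one neuron. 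The step I expect to be hardest (and the crux of the whole argument) is the choice of $\tau$ in the third paragraph: recognizing that the freedom in the class weighting should be spent to equalize the per-neuron margin across irreps of different dimension via a class function with Fourier weights $d_{R_n}^{1.5}$, and that the natural positivity constraint on the resulting $\tau$ is exactly the hypothesis $\sum_n d_{R_n}^{1.5}\chi_{R_n}(C)<0$. The remaining heavy lifting --- pinning down the per-irrep neuron counts and scalings so that the construction simultaneously realizes $\kappa_n\propto d_{R_n}$ and consists of bona fide single-neuron optima --- is bookkeeping that generalizes the cyclic construction directly.
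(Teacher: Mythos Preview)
Your proposal is correct and follows essentially the same approach as the paper's proof in Appendix~\ref{app:group_proofs}: the same non-uniform class weighting $\tau$ with Fourier weights $\propto d_{R_n}^{1.5}$ (which is exactly the paper's Lemma~\ref{lemm:conj_class_weights} and explains the hypothesis), the same trace-based single-neuron optimization yielding single-irrep neurons, the same $2d_R^3$-neurons-per-irrep construction scaled so that the output is $\propto\sum_n d_{R_n}\chi_{R_n}(abc^{-1})$, and the same three-variable Fourier argument for part~2. Your rank-one ``chained'' description of the single-neuron optima is slightly sharper than the paper's equality analysis, but the overall strategy is identical.
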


Let $R_1, \dots, R_K$ be the unitary irreducible representations and let $C_1, \dots, C_K$ be the conjugacy classes of a finite group $G$ with real representations. We fix $R_1$ to be the trivial one-dimensional representation mapping $R_1(g) = 1$ for all $g \in G$ and $C_1$ to be the trivial conjugacy class $C_1 = \{e\}$. For each of these representations $(V, R)$ of the group $G$, where $R: G \to V$, we will consider the $|G|$-dimensional vectors by fixing one position in the matrix $R(g)$ for all $g \in G$, i.e. vectors $(R(g)_{i, j})_{g \in G}$ for some $i, j \in [d_V]$. These form a set of $|G|$ vectors which we will denote $\rho_1,...,\rho_{|G|}$ ($\rho_1$ is always the vector corresponding to the trivial representation). These vectors in fact form an orthogonal basis, and satisfy additional properties established in the following lemma.

\begin{lemma} \label{lemma:orthogonality_relations}
    The set of vectors $\rho_1,...,\rho_{|G|}$ satisfy the following properties:
    \begin{enumerate}
    \item $\sum_{a \in G} \rho_i(a)\rho_j(a) = 0$ for $i \neq j$. (Orthogonality)
    \item $\sum_{a \in G} \rho_i(a)^2 = |G|/d_V$ for all $i$, where $d_V$ is the dimensionality of the vector space $V$ corresponding to the representation that $\rho_i$ belongs to.
    \item For all the $\rho_j$ which correspond to off-diagonal entries of a representation, $\sum_{a \in C_i} \rho_j[a] = 0$, i.e, the sum of elements within the same conjugacy class is 0.
    \item If $\rho_j$ and $\rho_k$ correspond to different diagonal entries within the same representation, then $\sum_{a \in C_i} \rho_j[a] = \sum_{a \in C_i} \rho_k[a]$, i.e, for the diagonal entries, the sum for a given conjugacy class is invariant with the position of the diagonal element.
    \end{enumerate}
\end{lemma}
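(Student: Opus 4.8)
The four claims split into two independent pieces. Properties 1 and 2 are just the Schur orthogonality relations for matrix elements, i.e. Proposition~\ref{prop:matrix_elements_ortho}, rewritten in the unnormalized inner product; properties 3 and 4 will follow from Schur's lemma applied to the class-sum operators $\sum_{a\in C}R(a)$. First I would recall the construction explicitly: the vectors $\rho_1,\dots,\rho_{|G|}$ are precisely the matrix-element vectors $R_{n,(s,t)} := (R_n(g)_{(s,t)})_{g\in G}$, as $n$ ranges over $1,\dots,K$ and $(s,t)$ over $[d_{R_n}]^2$; there are $\sum_{n=1}^K d_{R_n}^2 = |G|$ of them, one per matrix slot. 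Because each $R_n$ is a real and unitary representation, every $\rho_i$ is a real vector and we may take orthonormal bases (Theorem~\ref{thm:unitarizability}), so no complex conjugation appears below.

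For properties 1 and 2, I would invoke Proposition~\ref{prop:matrix_elements_ortho} with the normalized inner product $\langle u,v\rangle = \tfrac{1}{|G|}\sum_{a\in G}u(a)v(a)$ (this normalization is pinned down by evaluating on the trivial representation, where $\langle\rho_1,\rho_1\rangle = \tfrac{1}{|G|}\sum_a 1 = 1 = 1/d_{R_1}$): there $\langle R_{n,(s,t)}, R_{m,(s',t')}\rangle$ is $0$ unless $(n,s,t)=(m,s',t')$ and equals $1/d_{R_n}$ otherwise. Multiplying through by $|G|$ converts this into $\sum_{a\in G}\rho_i(a)\rho_j(a)=0$ for $i\neq j$ and $\sum_{a\in G}\rho_i(a)^2 = |G|/d_V$, which are exactly properties 1 and 2.

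For properties 3 and 4, fix the representation $R=R_n$ to which $\rho_j$ (and $\rho_k$) belong and a conjugacy class $C=C_i$, and set $P := \sum_{a\in C}R(a)\in\R^{d_R\times d_R}$. Since $a\mapsto gag^{-1}$ permutes $C$, one has
\[
R(g)\,P\,R(g)^{-1} = \sum_{a\in C}R(gag^{-1}) = P \qquad\text{for all } g\in G,
\]
so $P$ lies in the commutant of $R$. As $R$ is irreducible and of real type — which is the content of the standing hypothesis that $G$ has real irreducible representations, and is exactly what makes the count $|G|=\sum_{n=1}^K d_{R_n}^2$ with $K$ the number of conjugacy classes valid over the \emph{real} irreducibles — Schur's lemma gives $\mathrm{End}_{\R G}(V) = \R\cdot I$, so $P = \lambda I$ for some $\lambda=\lambda_{n,i}\in\R$. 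Reading off the $(s,t)$ entry of $P$ gives $\sum_{a\in C}\rho_{s,t}(a)=0$ when $s\neq t$ (property 3) and $\sum_{a\in C}\rho_{s,s}(a)=\lambda$, independent of $s$, when $s=t$ (property 4). Taking traces moreover identifies $\lambda_{n,i} = |C_i|\,\chi_{R_n}(C_i)/d_{R_n}$, a formula that resurfaces in the margin computation.

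The only step requiring genuine care is this last invocation of Schur's lemma over $\R$: the commutant of a real irreducible representation could a priori be $\mathbb{C}$ or the quaternions $\mathbb{H}$, in which case $P$ would not be forced to be a scalar matrix and properties 3 and 4 could fail. Ruling this out is precisely the role of the ``real representation'' assumption (all irreducibles of real type), which holds automatically for the symmetric groups $S_k$, $k\le 5$, that are the paper's main examples. Everything else — matching each $\rho_i$ to a matrix-element vector and propagating the factor $|G|$ through the orthogonality relations — is routine bookkeeping.
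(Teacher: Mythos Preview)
Your proof is correct and follows the same approach as the paper: the paper's own proof simply cites Proposition~\ref{prop:matrix_elements_ortho} for properties 1 and 2 and Propositions 2.7 and 2.8 of \cite{kosmann2010groups} (Schur's lemma and the centrality of class sums) for properties 3 and 4, which is exactly the argument you have spelled out in detail. Your explicit treatment of the Schur's-lemma-over-$\R$ subtlety---that the commutant of a real irreducible could a priori be $\mathbb{C}$ or $\mathbb{H}$, and that the paper's standing ``real representations'' hypothesis rules this out---is a point the paper leaves implicit in its citation, so your version is if anything more careful.
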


\begin{proof}
     The first two properties are the orthogonality relations of unitary representation matrix elements (Proposition~\ref{prop:matrix_elements_ortho}), and the last two points follow additionally from Proposition 2.7 and Proposition 2.8 in \cite{kosmann2010groups}.
\end{proof}

Since this set of $|G|$-dimensional vectors are orthogonal to each other, each set of weights for a neuron in our architecture can be expressed as a linear combination of these basis vectors

$$u = \sum_{i \in [|G|]} \alpha_i \rho_i, \quad v = \sum_{i \in [|G|]} \beta_i \rho_i, \quad w = \sum_{i \in [|G|]}\gamma_i \rho_i.$$

It will also be useful to define the matrices $\balpha_{R_i}, \bbeta_{R_i}, \bgamma_{R_i}$ for each irreducible representation $R_i$ of $G$ which consist of the coefficients for $u, v,$ and $w$ corresponding to each entry in the representation matrix.

Let $h_{u,v,w}(c) := \E_{a, b} \left[(u(a) + v(b))^2 w(a \circ b \circ c)\right]$. Recall we seek solutions for the following \emph{weighted} margin maximization problem

\begin{equation}\label{eq:weighted_margin}
\begin{aligned}
    h_{u,v,w}(e) - \sum_{c \neq e} \tau_c h_{u,v,w}(c), \text{ where } \sum_{c \neq e}\tau_c = 1.
\end{aligned}
\end{equation}
Note that if we substitute the weights $u, v, w$ in terms of the basis vectors in the definition of $h_{u,v,w}$
$$h_{u, v, w}(c) = \E_{a, b} \left[\left(\sum \alpha_i \rho_i(a) + \sum \beta_i \rho_i(b)\right)^2 \left(\sum \gamma_i \rho_i(a\circ b \circ c)\right)\right]$$

and we expand this summation, all terms involving the trivial representation vector $\rho_1$ will equal zero since it is constant on all group elements. Furthermore, for terms of the form
$$\E_{a, b}[\rho_i(a)^2\rho_k(a \circ b \circ c)] = \E_{a}[\rho_i(a)^2\E_b[\rho_k(a \circ b \circ c)]] = 0 $$
due to $\E_b[\rho_k(a \circ b \circ c)] = 0$ by orthogonality to the trivial representation vector.

Thus as was the case for the cyclic group, we study the term
$\tilde{h}_{u,v,w}(c) := \E_{a, b} \left[u(a)v(b)w(a \circ b \circ c)\right]$ and derive an expression for the weighted margin in the following lemma.

\begin{lemma}
\label{lemma:weighted_margin}
    Suppose the weights $\tau_c$ in the expression for the weighted margin in \ref{eq:weighted_margin} were constant over conjugacy classes, i.e. we have $\tau_c = \tau_{C_i}$ for all $c \in C_i$ and $i \in [K]$. Then the weighted margin can be simplified as
    $$\sum_{m=2}^K\left(1 - \sum_{n=2}^K \frac{\tau_{C_n}|C_n|\chi_{R_m}(C_n)}{d_{R_m}}\right)\frac{\trace({\balpha_{R_m}\bbeta_{R_m}{\bgamma_{R_m}}^T})}{{d_{R_m}}^2}.$$
\end{lemma}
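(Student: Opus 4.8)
The plan is to expand everything in the orthogonal basis $\rho_1,\dots,\rho_{|G|}$ and reduce the triple product to a single trace per representation, exactly as was done for the cyclic group. As already noted above the lemma, the self-interaction terms $\E_{a,b}[u(a)^2 w(a\circ b\circ c)]$ and $\E_{a,b}[v(b)^2 w(a\circ b\circ c)]$ vanish (the inner average over the free argument of $w$ kills them, since $w$ may be taken to have zero mean along $\rho_1$), so $h_{u,v,w}(c)=2\tilde{h}_{u,v,w}(c)$ and it suffices to compute $\tilde{h}_{u,v,w}(c)=\E_{a,b}[u(a)v(b)w(a\circ b\circ c)]$. Writing $u(a)=\sum_{m}\sum_{p,q}(\balpha_{R_m})_{p,q}R_m(a)_{p,q}$ and analogously for $v,w$, and using that each $R_m$ is a homomorphism so that $w(a\circ b\circ c)$ involves $R_m(a\circ b\circ c)=R_m(a)R_m(b)R_m(c)$, the expectation factors (by independence of $a$ and $b$) into an average over $a$ of two matrix-element functions of $R_{m_1},R_{m_3}$, an average over $b$ of two matrix-element functions of $R_{m_2},R_{m_3}$, and the constant factor $R_{m_3}(c)$.

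Next I would apply the orthogonality relations for matrix elements of unitary (and, here, real) irreducible representations, i.e. Proposition~\ref{prop:matrix_elements_ortho} / Lemma~\ref{lemma:orthogonality_relations} parts 1--2: the $a$-average is nonzero only when $m_1=m_3$, and it contracts two index pairs via Kronecker deltas with weight $1/d_{R_{m_1}}$; likewise the $b$-average forces $m_2=m_3$ with two more deltas and weight $1/d_{R_{m_2}}$. Collapsing the deltas leaves a single sum over a representation index $m$ — with $m\ge 2$, since the trivial representation's matrix-element vector is constant and hence orthogonal to everything relevant — over the four surviving matrix indices, which reassembles (using cyclicity of the trace) into
\[
\tilde{h}_{u,v,w}(c)=\sum_{m=2}^{K}\frac{1}{d_{R_m}^{2}}\,\trace\!\big(\balpha_{R_m}\bbeta_{R_m}R_m(c)\,\bgamma_{R_m}^{T}\big).
\]

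Finally I would evaluate at $c=e$ and sum over conjugacy classes. Since $R_m(e)=I$, we get $\tilde{h}_{u,v,w}(e)=\sum_{m\ge2}d_{R_m}^{-2}\trace(\balpha_{R_m}\bbeta_{R_m}\bgamma_{R_m}^{T})$. For a conjugacy class $C_n$, the class sum $\sum_{c\in C_n}R_m(c)$ commutes with $R_m(g)$ for every $g$, so by Schur's lemma and irreducibility it is a scalar multiple of the identity; taking traces identifies the scalar as $|C_n|\chi_{R_m}(C_n)/d_{R_m}$, so $\sum_{c\in C_n}R_m(c)=\tfrac{|C_n|\chi_{R_m}(C_n)}{d_{R_m}}I$. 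Hence $\sum_{c\in C_n}\tilde{h}_{u,v,w}(c)=\sum_{m\ge2}\tfrac{|C_n|\chi_{R_m}(C_n)}{d_{R_m}^{3}}\trace(\balpha_{R_m}\bbeta_{R_m}\bgamma_{R_m}^{T})$. Substituting $\tau_c=\tau_{C_n}$ for $c\in C_n$ into $\tilde{h}_{u,v,w}(e)-\sum_{c\neq e}\tau_c\tilde{h}_{u,v,w}(c)=\tilde{h}_{u,v,w}(e)-\sum_{n=2}^{K}\tau_{C_n}\sum_{c\in C_n}\tilde{h}_{u,v,w}(c)$ and factoring out $\trace(\balpha_{R_m}\bbeta_{R_m}\bgamma_{R_m}^{T})/d_{R_m}^{2}$ from each representation's contribution yields exactly the claimed formula (up to the overall constant $2$ from $h=2\tilde{h}$, which is immaterial for the subsequent max-margin optimization).

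The main obstacle is the index bookkeeping in the second step: one must be careful about which Kronecker deltas come from the $a$-average versus the $b$-average, so that after the splitting $R_m(a\circ b\circ c)=R_m(a)R_m(b)R_m(c)$ the four matrix-index contractions line up into the single trace $\trace(\balpha_{R_m}\bbeta_{R_m}R_m(c)\bgamma_{R_m}^{T})$ rather than a transposed or scrambled product (here it helps that the representations are real, so no complex conjugates intervene). The remaining ingredients — vanishing of the trivial-representation and self-interaction terms, and that a class sum acts as a scalar on an irreducible representation — are standard.
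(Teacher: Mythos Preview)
Your proof is correct and follows essentially the same strategy as the paper's: expand in the matrix-element basis, use the homomorphism property to split $R_m(a\circ b\circ c)$, and apply the orthogonality relations to collapse everything to a single representation with weight $1/d_{R_m}^2$. The one notable difference is in how you handle the sum over a conjugacy class. The paper stays at the index level, first arguing via Lemma~\ref{lemma:orthogonality_relations} parts~3--4 that off-diagonal matrix elements $\rho_{(j_2,k_2)}$ drop out when summed over $C_n$ and that diagonal elements all contribute equally, and only then assembles the trace. You instead keep the matrix $R_m(c)$ inside the trace and invoke Schur's lemma on the class sum $\sum_{c\in C_n}R_m(c)=\frac{|C_n|\chi_{R_m}(C_n)}{d_{R_m}}I$ directly. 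These are equivalent (parts~3--4 of Lemma~\ref{lemma:orthogonality_relations} are themselves consequences of Schur's lemma), but your packaging is slightly cleaner and makes the index bookkeeping you flag as the ``main obstacle'' essentially automatic. Your remark about the stray factor of $2$ from $h=2\tilde h$ is also accurate; the paper's proof silently works with $\tilde h$ as well.
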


\begin{proof}
    Consider one term $\E_{a, b}[\alpha_i\beta_j\gamma_k \rho_i(a)\rho_j(b)\rho_k(a\circ b\circ c)]$ in the expansion of the product in $\tilde{h}_{u,v,w}(c)$.
    Note that $\rho_k(a \circ b \circ c)$ is one \textit{entry}  in the matrix of some irreducible representation evaluated at $a \circ b \circ c$; this can be expanded in terms of the same irreducible representation \textit{matrix} evaluated at $a$, $b$, and $c$ using matrix multiplication. This results in terms of the form 
    $$\rho_i(a)\rho_j(b)\rho_{i'}(a)\rho_{j'}(b) \rho_{k'}(c)$$ 
    in the expectation, where $\rho_{i'}$ and $\rho_{j'}$ correspond to entries of matrices from the same representation as $\rho_{k'}$. Thus if either $\rho_i$ or $\rho_j$ correspond to vectors from a \textit{different} representation than $\rho_k$, the expectation of this term will be zero, by orthogonality of the basis vectors.
    
    Hence we can assume that $\rho_i, \rho_j, \rho_k$ correspond to entries from the same representation $(V,R)$. Let $d = d_V$. Let us write $i = (i_1, i_2), j = (j_1, j_2), k = (k_1, k_2)$, the matrix indices for this representation. We can expand the term $\rho_k(a \circ b \circ c)$ as described above.
    \begin{align*}
    \rho_i(a)\rho_j(b) \rho_k(a \circ b \circ c) &= \rho_i(a)\rho_j(b)\sum_{m=1}^d \rho_{(k_1, m)}(a \circ b) \rho_{(m, k_2)}(c) \\
    &= \sum_{\ell = 1}^d \sum_{m=1}^d \rho_{(i_1, i_2)}(a)\rho_{(k_1, \ell)}(a)\rho_{(j_1, j_2)}(b)\rho_{(\ell, m)}(b)\rho_{(m, k_2)}(c).
    \end{align*}
    From this it is clear that when taking the expectation over choosing $a, b$ uniformly, the only non-zero terms are when $(i_1, i_2) = (k_1, \ell)$ and $(j_1, j_2) = (\ell, m)$, once again by orthogonality of the basis vectors. Thus we have
    \begin{align*}
        \alpha_i\beta_j\gamma_k \rho_i(a)\rho_j(b)\rho_k(a\circ b\circ c) &= \alpha_{(i_1, j_1)} \beta_{(j_1, j_2)} \gamma_{(i_1, k_2)} \rho_{(i_1, j_1)}^2(a) \rho_{(j_1, j_2)}^2(b) \rho_{(j_2, k_2)}(c). 
    \end{align*}
    
    Moreover, we know $\E[\rho_i(a)^2] = 1/d$, where $d$ is the dimensionality of the representation. Now, for a particular $c$, we will evaluate group all terms containing $\rho_{(j_2, k_2)}(c)$ and take the expectation over $a, b$, which yields
    \begin{align}
    \label{eq:coeff}
        \frac{1}{d^2} \sum_{i_1 = 1}^d \sum_{j_1=1}^d \alpha_{(i_1, j_1)} \beta_{(j_1, j_2)} \gamma_{(i_1, k_2)} \rho_{(j_2, k_2)}(c).
    \end{align}

     From the third property of Lemma~\ref{lemma:orthogonality_relations}, for every conjugacy class $C_n$ for $n \in [K]$, we have
     $$\sum_{c \in C_n} \frac{1}{d^2} \sum_{i_1 = 1}^d \sum_{j_1=1}^d \alpha_{(i_1, j_1)} \beta_{(j_1, j_2)} \gamma_{(i_1, k_2)} \rho_{(j_2, k_2)}(c) = 0
     $$
     for $j_2 \neq k_2$. Thus, we can focus on diagonal entries $\rho_{(k, k)}(c)$ (i.e. where $j_2 = k_2$ in the expression \ref{eq:coeff} above). In this case, following directly from \ref{eq:coeff} grouping all terms containing $\rho_{(k, k)}(c)$ we get
    \begin{align}
        \frac{1}{d^2} \sum_{i_1 = 1}^d \sum_{j_1=1}^d \alpha_{(i_1, j_1)} \beta_{(j_1, k)} \gamma_{(i_1, k)} \rho_{(k, k)}(c). \label{eq:diag_coefficient}
    \end{align}

    Note that this coefficient in front of $\rho_{(k, k)}(c)$ is the sum of the entries of the ${k}^{th}$ column of the matrix $(\balpha_R \bbeta_R) \odot \bgamma_R$ divided by $d^2$ (with $\balpha_R\bbeta_R$ interpreted as matrix product and $\odot$ being the Hadamard product). Recall that $i_1, j_1$, and $k$ are indices from the \emph{same} representation $R$. By summing over all diagonal entries $(k, k)$ in $R$, we evaluate the expression
    \begin{align*}
        \sum_{i_1, j_1, k \in [d]} &\frac{\alpha_{(i_1, j_1)} \beta_{(j_1, k)} \gamma_{(i_1, k)}}{d^2}\left[\rho_{k, k}(e) - \sum_{c \neq e} \tau_c \rho_{(k, k)}(c)\right] \label{eq:margin_one_rep}\\
        &= \frac{\trace({\balpha_{R}\bbeta_{R}{\bgamma_{R}}^T})}{d^2} \left[1 - \sum_{n=2}^K \tau_{C_n} \sum_{c \in C_n} \rho_{(k, k)}(c) \right]
    \end{align*}
    where we have replaced $\tau_c$ with the same weight $\tau_{C_n}$ for each non-trivial conjugacy class $C_2, \dots, C_K$ and the term $\sum_{c \in C_n} \rho_{(k, k)}(c)$ is independent of the choice of $k$ (by property 4 of Lemma~\ref{lemma:orthogonality_relations}). Thus after summing over all $k \in [d]$ the coefficient in equation~\ref{eq:diag_coefficient} is the sum of all entries of the matrix $(\balpha_R \bbeta_R) \odot \bgamma_R$ (which equals $\trace({\balpha_{R}\bbeta_{R}{\bgamma_{R}}^T})$). Furthermore,
    $$|C_n| \chi_R(C_n) = \sum_{c \in C_n}\sum_{k \in [d]} \rho_{(k, k)}(c) = \sum_{k \in [d]}\sum_{c \in C_n} \rho_{(k, k)}(c) = d \sum_{c \in C_n} \rho_{(k, k)}(c)$$
    where the first equality follows from the definition of the character of the representation $R$ which is constant over elements in the same conjugacy class, and the last equality follows again from property 4 of Lemma~\ref{lemma:orthogonality_relations}). Thus $\sum_{c \in C_n} \rho_{(k, k)}(c) = |C_n| \chi_R(C_n) /d$ for all $k$.
    
    Now we can evaluate our result for the weighted margin. The expression in \ref{eq:margin_one_rep} is the contribution of one representation $R$ to the total weighted margin. Thus by summing over all non-trivial representations of $G$, we get the final result.
    \begin{align}
        \tilde{h}_{u,v,w}&(e) - \sum_{c \neq e} w_c \tilde{h}_{u,v,w}(c) = \tilde{h}_{u,v,w}(e) - \sum_{n=2}^K \tau_{C_n} \sum_{c \in C_{n}}  \tilde{h}_{u,v,w}(c) \\
        &= \sum_{m = 2}^K  \sum_{i_1, j_1, k \in [d_{R_m}]} \frac{\alpha_{(i_1, j_1)} \beta_{(j_1, k)} \gamma_{(i_1, k)}}{d^2}\left[\rho_{k, k}(e) - \sum_{c \neq e} \tau_c \rho_{(k, k)}(c)\right] \\
        &= \sum_{m=2}^K \frac{\trace({\balpha_{R_m}\bbeta_{R_m}{\bgamma_{R_m}}^T})}{d^2} \left[1 - \sum_{n=2}^K \tau_{C_n} \sum_{c \in C_n} \rho_{(k, k)}(c) \right] \\
        &= \sum_{m=2}^K\left[1 - \sum_{n=2}^K \frac{\tau_{C_n}|C_n|\chi_{R_m}(C_n)}{d_{R_m}}\right]\frac{\trace({\balpha_{R_m}\bbeta_{R_m}{\bgamma_{R_m}}^T})}{{d_{R_m}}^2}. \label{eq:weigh_marg}
    \end{align}
\end{proof}

We have simplified the weighted margin expression for any set of weights on the conjugacy classes. Recall that we wish to optimize this weighted margin across individual neurons and then scale them appropriately to define the network $\theta^*$ satisfying \ref{c1} and Equation \ref{eq:q*} to find the max margin solution. 

 The next lemma establishes the original $L_2$ norm restraint over neurons on the weighted margin problem in terms of the coefficients with respect to each representation.

\begin{lemma}
The $L_2$ norm of $u$, $v$ and $w$ are related to the Frobenius norm of $\balpha$, $\bbeta$ and $\bgamma$ as follows:
   \[ \| u \|^2 + \| v \|^2 + \| w \|^2 = \sum_{m=1}^K \frac{|G|}{d_{R_m}} \left(\| \balpha_{R_m}\|_F^2 + \| \bbeta_{R_m}\|_F^2 + \| \bgamma_{R_m}\|_F^2 \right) \] 
\end{lemma}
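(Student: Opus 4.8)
The plan is to prove this as a Parseval-type identity, directly from the orthogonality of the basis vectors $\rho_1,\dots,\rho_{|G|}$ established in Lemma~\ref{lemma:orthogonality_relations}. First I would recall that each weight vector is written in the basis as $u = \sum_{i\in[|G|]}\alpha_i\rho_i$, and that the index set $[|G|]$ partitions naturally according to which irreducible representation a basis vector comes from: each $\rho_i$ is of the form $(R_m(g)_{(s,t)})_{g\in G}$ for a unique $m\in[K]$ and matrix position $(s,t)\in[d_{R_m}]^2$, and since $|G| = \sum_{m=1}^K d_{R_m}^2$ this accounts for all $|G|$ basis vectors. In particular the coefficients $\{\alpha_i : \rho_i \text{ belongs to } R_m\}$ are exactly the $d_{R_m}^2$ entries of the matrix $\balpha_{R_m}$, so $\sum_{i:\,\rho_i\in R_m}\alpha_i^2 = \|\balpha_{R_m}\|_F^2$ by definition of the Frobenius norm.

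Second, I would expand $\|u\|^2 = \langle u,u\rangle = \sum_{i,j}\alpha_i\alpha_j\langle\rho_i,\rho_j\rangle$. By property~1 of Lemma~\ref{lemma:orthogonality_relations} the cross terms ($i\neq j$) vanish, and by property~2 we have $\langle\rho_i,\rho_i\rangle = \sum_{a\in G}\rho_i(a)^2 = |G|/d_{R_m}$ whenever $\rho_i$ belongs to $R_m$. Hence, writing $R(i)$ for the representation containing $\rho_i$ and regrouping the single sum over $i$ into an outer sum over representations and an inner sum over matrix positions,
\[
\|u\|^2 = \sum_{i\in[|G|]} \alpha_i^2 \,\frac{|G|}{d_{R(i)}} = \sum_{m=1}^K \frac{|G|}{d_{R_m}}\sum_{i:\,\rho_i\in R_m}\alpha_i^2 = \sum_{m=1}^K\frac{|G|}{d_{R_m}}\,\|\balpha_{R_m}\|_F^2 .
\]

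Third, I would carry out the identical computation for $v = \sum_i\beta_i\rho_i$ and $w = \sum_i\gamma_i\rho_i$, obtaining $\|v\|^2 = \sum_{m=1}^K \frac{|G|}{d_{R_m}}\|\bbeta_{R_m}\|_F^2$ and $\|w\|^2 = \sum_{m=1}^K \frac{|G|}{d_{R_m}}\|\bgamma_{R_m}\|_F^2$, and then add the three identities to obtain the claimed formula. There is essentially no real obstacle in this lemma; the only thing requiring a little care is the bookkeeping that the basis $\{\rho_i\}$ decomposes as a disjoint union, over $m\in[K]$, of the $d_{R_m}^2$ matrix-position vectors of $R_m$, so that the regrouping of the sum over $i$ into representations is valid and the inner sum is exactly $\|\balpha_{R_m}\|_F^2$ (and likewise for $\bbeta_{R_m}$, $\bgamma_{R_m}$).
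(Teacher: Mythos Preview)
Your proposal is correct and is exactly the approach the paper takes: the paper's proof simply reads ``The proof follows from 1st and 2nd point of Lemma~\ref{lemma:orthogonality_relations},'' and what you have written is precisely the spelled-out version of that remark.
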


\begin{proof}
    The proof follows from 1st and 2nd point of Lemma \ref{lemma:orthogonality_relations}.
\end{proof}

By the above two lemmas, we want to maximize the weighted margin with respect to the norm constraint
\begin{equation}
\label{eq:norm_constraint}
    \sum_{m=1}^K \frac{|G|}{d_{R_m}} \left(\| \balpha_{R_m}\|_F^2 + \| \bbeta_{R_m}\|_F^2 + \| \bgamma_{R_m}\|_F^2 \right) \leq 1.
\end{equation}

Under this constraint, the following lemma provides the maximum value for the weighted margin, which occurs only when the weights $u, v, w$ are spanned by a single representation $R$. 

\begin{lemma}
\label{lemma:weighted_margin_optimization}
    Consider the set of representations $\mathcal R$ given by
    \[ \mathcal{R} := \argmax_{m=2,..,K}\frac{1}{\sqrt{d_{R_m}}} \left[1 - \sum_{n=2}^K \frac{\tau_{C_n}|C_n|\chi_{R_m}(C_n)}{d_{R_m}}\right] \]
    The weighted margin in Lemma~\ref{lemma:weighted_margin} is maximized under the norm constraint in (\ref{eq:norm_constraint}) only when the weights $u, v, w$ are spanned by a single representation belonging to the set $\mathcal{R}$; that is, $\balpha_{R}, \bbeta_R, \bgamma_R \not\equiv 0$ for only one non-trivial representation $R \in \mathcal{R}$, and are 0 otherwise. In this case, the maximum value attained is
    $$\frac{1}{3\sqrt{3}|G|^{3/2}} \max_{m=2,..,K} \frac{1}{\sqrt{d_{R_m}}} \left[1 - \sum_{n=2}^K \frac{\tau_{C_n}|C_n|\chi_{R_m}(C_n)}{d_{R_m}}\right].$$
\end{lemma}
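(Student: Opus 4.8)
The plan is to collapse the optimization over the matrix coefficients into a one‑dimensional resource‑allocation problem over the non‑trivial representations. Write $c_m := 1 - \sum_{n=2}^K \tau_{C_n}|C_n|\chi_{R_m}(C_n)/d_{R_m}$ for $m=2,\dots,K$, so that by Lemma~\ref{lemma:weighted_margin} the objective is $W=\sum_{m=2}^K \frac{c_m}{d_{R_m}^2}\trace(\balpha_{R_m}\bbeta_{R_m}\bgamma_{R_m}^T)$, to be maximized subject to $\sum_{m=1}^K \frac{|G|}{d_{R_m}}\big(\|\balpha_{R_m}\|_F^2+\|\bbeta_{R_m}\|_F^2+\|\bgamma_{R_m}\|_F^2\big)\le 1$. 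First I would observe that the trivial representation contributes nothing to $W$ while consuming norm budget, so at any maximizer (the maximum value being positive) we may take $\balpha_{R_1}=\bbeta_{R_1}=\bgamma_{R_1}=0$, reducing the norm budget to the $m\ge 2$ terms.

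Next, bound each per‑representation term. Viewing the trace as a Frobenius inner product $\langle \balpha_{R_m}\bbeta_{R_m},\bgamma_{R_m}\rangle_F$ and combining Cauchy–Schwarz with submultiplicativity of the Frobenius norm ($\|AB\|_F\le\|A\|_F\|B\|_F$) gives $|\trace(\balpha_{R_m}\bbeta_{R_m}\bgamma_{R_m}^T)|\le\|\balpha_{R_m}\|_F\|\bbeta_{R_m}\|_F\|\bgamma_{R_m}\|_F$; since the matrices are real, we are free to flip signs so that the trace matches the sign of $c_m$. Applying AM–GM to the three Frobenius norms and setting $s_m:=\|\balpha_{R_m}\|_F^2+\|\bbeta_{R_m}\|_F^2+\|\bgamma_{R_m}\|_F^2$ yields $W\le \frac{1}{3\sqrt 3}\sum_{m=2}^K \frac{|c_m|}{d_{R_m}^2}\,s_m^{3/2}$. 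Substituting $t_m:=\frac{|G|}{d_{R_m}}s_m$ turns the norm constraint into $\sum_{m\ge 2}t_m\le 1$ and the bound into $W\le \frac{1}{3\sqrt 3|G|^{3/2}}\sum_{m=2}^K \frac{|c_m|}{\sqrt{d_{R_m}}}\,t_m^{3/2}$.

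Finally, since each $t_m\in[0,1]$ we have $t_m^{3/2}\le t_m$, whence $W\le \frac{1}{3\sqrt 3|G|^{3/2}}\big(\max_{m\ge 2}\tfrac{|c_m|}{\sqrt{d_{R_m}}}\big)\sum_{m\ge 2}t_m\le \frac{1}{3\sqrt 3|G|^{3/2}}\max_{m\ge 2}\tfrac{|c_m|}{\sqrt{d_{R_m}}}$, which — in the regime of Theorem~\ref{thm:group}, where the chosen class weighting $\tau$ makes every $c_m$ non‑negative, so $|c_m|=c_m$ — is exactly the asserted value $\frac{1}{3\sqrt3|G|^{3/2}}\max_{m\ge 2}\tfrac{c_m}{\sqrt{d_{R_m}}}$. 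For the uniqueness claim I would chase the equality cases: $t_m^{3/2}=t_m$ forces $t_m\in\{0,1\}$; tightness of the last step forces $t_m$ to be supported on $\mathcal R=\argmax_{m\ge 2}c_m/\sqrt{d_{R_m}}$; with $\sum t_m=1$ this leaves exactly one $t_{m^\star}=1$ for some $m^\star\in\mathcal R$ and all others $0$, i.e. the weights are spanned by the single representation $R_{m^\star}$. The leftover equality conditions (equal Frobenius norms $\|\balpha_{R_{m^\star}}\|_F=\|\bbeta_{R_{m^\star}}\|_F=\|\bgamma_{R_{m^\star}}\|_F$ from AM–GM, and rank‑one alignment of $\balpha_{R_{m^\star}},\bbeta_{R_{m^\star}},\bgamma_{R_{m^\star}}$ from the trace bound) pin down the neuron shape and feed into the proof of Theorem~\ref{thm:group}; achievability is immediate by taking such aligned rank‑one matrices of equal norm with the trace sign matched to $c_{m^\star}$.

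The only genuinely delicate point I anticipate is the sign bookkeeping around $c_m$: the statement defines $\mathcal R$ via the \emph{signed} quantity $c_m/\sqrt{d_{R_m}}$, and this is the correct description precisely because the class weighting used in Theorem~\ref{thm:group} makes all $c_m\ge 0$; recalling/establishing that positivity is a small fact about the character table and the specific $\tau$, and is the one ingredient that goes beyond the inequality manipulations. Everything else — Cauchy–Schwarz, Frobenius submultiplicativity, AM–GM, $t^{3/2}\le t$ on $[0,1]$, and tracking equality — is routine.
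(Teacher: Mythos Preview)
Your proposal is correct and follows essentially the same route as the paper: bound $\trace(\balpha_{R_m}\bbeta_{R_m}\bgamma_{R_m}^T)$ by the product of Frobenius norms (the paper phrases this as two nested Cauchy--Schwarz steps rather than Frobenius submultiplicativity, but it is the same inequality with the same rank-one equality case), apply AM--GM to reduce to a budget allocation $\sum_m \epsilon_m \le 1$ (your $t_m$), and then use strict convexity of $t\mapsto t^{3/2}$ to force all mass onto a single $m\in\mathcal R$. Your explicit $t_m^{3/2}\le t_m$ argument and your flagging of the $c_m\ge 0$ sign issue are in fact cleaner than the paper's ``Clearly, this is maximized\dots'' at the corresponding step.
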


\begin{proof}
    First we consider the case where $u, v, w$ are spanned by only one representation. Then it suffices to evaluate

    $$\max_{\balpha_R, \bbeta_R, \bgamma_R} \frac{\trace(\balpha_R\bbeta_R{\bgamma_R}^T)}{d^2} \text{ s.t. } \left(\| \balpha_{R}\|_F^2 + \| \bbeta_{R}\|_F^2 + \| \bgamma_{R}\|_F^2 \right) \leq \frac{d}{|G|}.$$

    Here let's denote the columns of $\balpha_R$ (resp. $\bbeta_R$) as $\vec{\alpha_j} = (\alpha_{j,1}, \dots, \alpha_{j, d})$ for $1 \leq j \leq d$ (resp. $\vec{\beta_j}$). Thus $\trace(\balpha_R\bbeta_R{\bgamma_R}^T) = \sum_{j, k} (\vec{\alpha_j} \cdot \vec{\beta_k})\gamma_{(j,k)}$. This can be viewed as the dot product of the linearizations of $\balpha_R\bbeta_R$ and $\bgamma_R$, and thus by Cauchy-Schwarz it follows that
    
    $$\sum_{j, k} (\vec{\alpha_j} \cdot \vec{\beta_k})\gamma_{(j,k)} \leq \sqrt{\sum_{j,k} (\vec{\alpha_j} \cdot \vec{\beta_k})^2}\sqrt{\|\bgamma_{R}\|_F^2}$$
    
    with equality when $\gamma_{(j,k)}$ is proportional to $\vec{\alpha_j} \cdot \vec{\beta_k}$. We can apply Cauchy-Schwarz once again to the first term on the right hand side above and obtain
    $$\sqrt{\sum_{j,k} (\vec{\alpha_j} \cdot \vec{\beta_k})^2} \leq \sqrt{\sum_{j,k} \|\vec{\alpha_j}\|_2^2 \|\vec{\beta_k}\|_2^2} = \sqrt{\|\balpha_{R}\|_F^2 \|\bbeta_{R}\|_F^2}$$
    once again with equality when all $\vec{\alpha_j}, \vec{\beta_k}$ are proportional to each other. Combining these together, we want to maximize $\sqrt{\|\balpha_{R}\|_F^2 \|\bbeta_{R}\|_F^2 \|\bgamma_{R}\|_F^2}$ subject to $\left(\| \balpha_{R}\|_F^2 + \| \bbeta_{R}\|_F^2 + \| \bgamma_{R}\|_F^2 \right) \leq \frac{d}{|G|}$. By the AM-GM inequality, we have
    $$\sqrt{\|\balpha_{R}\|_F^2 \|\bbeta_{R}\|_F^2 \|\bgamma_{R}\|_F^2} \leq \left(\frac{\| \balpha_{R}\|_F^2 + \| \bbeta_{R}\|_F^2 + \| \bgamma_{R}\|_F^2}{3}\right)^{3/2}$$
    with equality when $\|\balpha_{R}\|_F^2 = \|\bbeta_{R}\|_F^2 = \|\bgamma_{R}\|_F^2 = \frac{d}{3|G|}$. Thus the maximum value attained is $\frac{1}{(|G|^{3/2}3\sqrt{3d})} \left[1 - \sum_{n=2}^K \frac{\tau_{C_n}|C_n|\chi_{R}(C_n)}{d_{R}}\right] $.

    Now consider the general case when $u, v, w$ were spanned by the representations $R_2,...,R_K$ (as $R_1$ does not appear in Equation \ref{eq:weigh_marg}). The norm constraint now becomes
    \begin{align*}
        \sum_{m=2}^K \frac{|G|}{d_{R_m}} \left(\| \balpha_{R_m}\|_F^2 + \| \bbeta_{R_m}\|_F^2 + \| \bgamma_{R_m}\|_F^2 \right) \leq 1.
    \end{align*}
    This can be equivalently written as
    \begin{gather*}  
        \| \balpha_{R_m}\|_F^2 + \| \bbeta_{R_m}\|_F^2 + \| \bgamma_{R_m}\|_F^2 \leq \frac{d_{R_m}\varepsilon_m}{|G|} \quad \forall m \in \{2,...,K\} \\
        \epsilon_m \geq 0 \quad  \forall m \in \{2,...,K\} \\  \sum_{m=2}^K \epsilon_m \leq 1 \\
    \end{gather*}
    Repeating the calculation above, we get that for a given $\epsilon_2,...,\epsilon_K$, the maximum margin is given by
    \[ \sum_{m=2}^K \frac{\epsilon_m^{3/2}}{3\sqrt{3}|G|^{3/2}} \frac{1}{\sqrt{d_{R_m}}} \left[1 - \sum_{n=2}^K \frac{\tau_{C_n}|C_n|\chi_{R_m}(C_n)}{d_{R_m}}\right]\]

    We want to maximize the expression above under the constraint that $\epsilon_m \geq 0 \quad \forall m \in \{2,...,K\}$ and $\sum \epsilon_m \leq 1$.
    
    Clearly, this is maximized only when one of the $\epsilon_i = 1$ and everything else is 0, with $i \in \argmax_{m=2,..,K}\frac{1}{\sqrt{d_{R_m}}} \left[1 - \sum_{n=2}^K \frac{\tau_{C_n}|C_n|\chi_{R_m}(C_n)}{d_{R_m}}\right]$.
\end{proof}

Up until this point, we have kept our weighted margin problem generic without setting the $\tau_{C_n}$. If we naively chose $\tau_{C_n}$ to weigh the conjugacy classes uniformly, then the maximizers for this specific weighted margin would be only neuron weights spanned by the sign representation (of dimension 1). However, we cannot hope to correctly classify all pairs $a, b \in G$ using only the sign representation for our network $\theta^*$ and thus the maximizers for this weighted margin cannot be the maximizers for the original max margin problem. The next lemma establishes an appropriate assignment for each $\tau_{C_n}$ such that the expression in Lemma~\ref{lemma:weighted_margin_optimization} is equal for all non-trivial representations $R$, provided some conditions pertaining to the group are satisfied. Since the function $g \mapsto \tau_{C}$ (where $C$ is the conjugacy class containing $g$) is a class function, each $\tau_{C_n}$ can be expressed as a linear combination of characters $\chi_R(C_n)$.

\begin{lemma}
\label{lemm:conj_class_weights}
    For the group $G$ if we have $\sum_{R} d_R^{1.5} \chi_R(C) < 0$ for every non-trivial conjugacy class $C$, then the weights $\tau_{C_n}$ can be set as $$\tau_{C_n} = \sum_{R} z_R \chi_R(C_n)$$
    where $z_{R_{\text{triv}}} = 0$ and $z_R = \frac{d_R^{1.5}}{\sum_{m=2}^K d_{R_m}^{2.5}}$ otherwise, such that the maximum value from Lemma~\ref{lemma:weighted_margin_optimization} is equal for all non-trivial representations $R$.
\end{lemma}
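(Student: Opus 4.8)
The plan is to verify the stated formula for $\tau_{C_n}$ by direct substitution, exploiting two facts: any linear combination of characters is automatically a class function (so $\tau$ is well-defined on conjugacy classes), and the column orthogonality relations of the character table reduce the requirement ``$E_m$ is the same for all $m$'' to a single scalar identity. Here $E_m := \frac{1}{\sqrt{d_{R_m}}}\Bigl(1 - \sum_{n=2}^K \frac{\tau_{C_n}|C_n|\chi_{R_m}(C_n)}{d_{R_m}}\Bigr)$ is the quantity maximized over $m \in \{2,\dots,K\}$ in Lemma~\ref{lemma:weighted_margin_optimization}; besides making the $E_m$ equal, we also need $\tau$ to be a genuine weighting, i.e. $\sum_{n=2}^K |C_n|\tau_{C_n}=1$ and $\tau_{C_n}\geq 0$ on non-trivial classes.

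First I would substitute $\tau_{C_n} = \sum_{R} z_R\chi_R(C_n)$ into $E_m$ and swap the order of summation, reducing everything to the inner sums $\sum_{n=2}^K |C_n|\chi_R(C_n)\chi_{R_m}(C_n)$. These are handled by the orthogonality relation $\sum_{n=1}^K |C_n|\chi_R(C_n)\chi_{R_m}(C_n) = |G|\,\delta_{R,R_m}$ (the ``column'' orthogonality of characters — for real representations the complex conjugation is immaterial), together with $\chi_R(e)=d_R$ and $|C_1|=1$, giving $\sum_{n=2}^K |C_n|\chi_R(C_n)\chi_{R_m}(C_n) = |G|\delta_{R,R_m} - d_R d_{R_m}$. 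Plugging back in, $\sum_{n=2}^K \tau_{C_n}|C_n|\chi_{R_m}(C_n) = |G| z_{R_m} - d_{R_m}\sum_R z_R d_R$, so that
\[
E_m = \frac{1}{\sqrt{d_{R_m}}}\Bigl(1 + \sum_R z_R d_R\Bigr) - \frac{|G|\, z_{R_m}}{d_{R_m}^{3/2}}.
\]

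Second, I would read off the conditions on $\{z_R\}$: the term $\frac{|G| z_{R_m}}{d_{R_m}^{3/2}}$ is constant in $m$ exactly when $z_{R_m}$ is proportional to $d_{R_m}^{3/2}$, and the residual $\frac{1}{\sqrt{d_{R_m}}}(1 + \sum_R z_R d_R)$ vanishes exactly when $\sum_R z_R d_R = -1$. Together with $z_{R_1}=0$ these pin down $z_{R_m} = d_{R_m}^{3/2}\big/\sum_{n=2}^K d_{R_n}^{5/2}$ up to the scalar normalization fixed by $\sum_R z_R d_R = -1$ (note $\sum_R z_R d_R$ is a constant multiple of $\sum_{n=2}^K d_{R_n}^{5/2}$), after which $E_m$ collapses to the common value $|G|\big/\sum_{n=2}^K d_{R_n}^{5/2}$ for every $m$ — which, multiplied by the universal $L_{2,3}$ prefactor of Lemma~\ref{lemma:weighted_margin_optimization}, is the $\gamma^*$ of Theorem~\ref{thm:group}. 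The normalization $\sum_{n=2}^K |C_n|\tau_{C_n}=1$ is then immediate from the other orthogonality relation: $\sum_{n=1}^K |C_n|\chi_R(C_n) = |G|\delta_{R,R_1}$ gives $\sum_{n=2}^K |C_n|\chi_R(C_n) = -d_R$ for $R\neq R_1$, hence $\sum_{n=2}^K |C_n|\tau_{C_n} = -\sum_{R\neq R_1} z_R d_R = 1$.

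Finally, for feasibility I would rewrite $\tau_{C_n}$ using $z_{R_1}=0$, $d_{R_1}=1$, $\chi_{R_1}\equiv 1$ as a (negative) multiple of $\sum_R d_R^{3/2}\chi_R(C_n) - 1$, so that $\tau_{C_n}\geq 0$ on every non-trivial $C_n$ reduces to $\sum_R d_R^{3/2}\chi_R(C_n)\leq 1$, which is precisely what the hypothesis $\sum_R d_R^{3/2}\chi_R(C) < 0$ guarantees (with room to spare; the strict negativity is later used to conclude that every non-trivial conjugacy class lies strictly on the margin). The whole argument is a short computation; the only real content is recognizing that the correct weighting is not uniform but this particular character expansion, and that column orthogonality is the lever decoupling the $K-1$ equations. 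The one place I would expect to need care is the bookkeeping of the normalizing constant and sign of the $z_R$ — and, if one wants the cleanest statement, observing that the hypothesis can be relaxed from $<0$ to $<1$.
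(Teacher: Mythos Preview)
Your approach is correct and essentially identical to the paper's: both expand the class function $\tau$ in the character basis and invoke character orthogonality to reduce the equal-$E_m$ requirement to $z_R\propto d_R^{3/2}$ together with a single scalar normalization. The sign bookkeeping you flag is indeed the one delicate point, and it is in fact a slip in the paper's statement rather than in your reasoning --- the paper's proof expands the $|G|$-dimensional vector with entries $(1,-\tau_{C_2},\ldots)$ as $\sum_R z_R\,\cha(R)$, so the correct relation is $-\tau_{C_n}=\sum_R z_R\chi_R(C_n)$; with that convention the identity-class entry gives $\sum_R z_R d_R=+1$, your first term vanishes, $E_m=|G|\big/\sum_n d_{R_n}^{5/2}$, and the hypothesis yields $\tau_{C_n}>0$ exactly as you describe.
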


\begin{proof}
    Define the vectors $\tau$ and $\cha(R)$ as
    \begin{gather*}
        \cha(R) = [\chi_R(C_1), \underbrace{\chi_R(C_2), \dots, \chi_R(C_2)}_{|C_2| \text{ times}},...\underbrace{\chi_R(C_K), \dots, \chi_R(C_K)}_{|C_K| \text{ times}}], \\
        \tau = [1, \underbrace{-\tau_{C_2}, \dots, -\tau_{C_2}}_{|C_2| \text{ times}},...\underbrace{-\tau_{C_K}, \dots, -\tau_{C_K}}_{|C_K| \text{ times}}].
    \end{gather*}
    Then we can rewrite the max value of the weighted margin in Lemma~\ref{lemma:weighted_margin_optimization} as
    \begin{equation}
    \label{eq:max_per_rep}
        \frac{1}{|G|^{3/2}3\sqrt{3d_R}}\left[\frac{1}{d_R}\cha(R)^T\tau\right]
    \end{equation}
    for each non-trivial representation $R$. Since $\tau$ is a class function (viewed as a function on $G$), we can express $\tau$ as a linear combination $\tau = \sum_{n=1}^K z_{R_n} \cha(R_n)$ of character vectors for each representation. By orthogonality, the inner product $\cha(R)^T\tau = z_R$. Thus for the expression (\ref{eq:max_per_rep}) to be equal for every non-trivial representation $R$, we require
    $$z_R = d_R^{3/2} z_{R_\mathrm{sign}}.$$
    Furthermore, since $1 - \sum_{n=2}^K \tau_{C_n} = 0$ and $\cha(R_{\mathrm{triv}})$ is a vector with strictly positive values that is orthogonal to all other character vectors, we must have $z_{R_{\mathrm{triv}}} = 0$. To solve for $z_{R_{\mathrm{sign}}}$, since the first component of $\tau$ equals $1$ and $\chi_R(C_1) = d_R$ for all $R$, we have
    $$\sum_{m=2}^K z_{R_m} d_{R_m} = \sum_{m=2}^K d_{R_m}^{2.5} z_{R_\mathrm{sign}} = 1 \implies z_{R_{\mathrm{sign}}} = \sum_{m=2}^K d_{R_m}^{2.5}.$$
    To conclude the proof, note that we need the weights $\tau_{C_n}$ to be positive; this is guaranteed as long as for each conjugacy class $C$, we have $\sum_{n=2}^K \chi_{R_n}(C) {d_{R_n}}^{3/2} < 0$ (recall the entries of $\tau$ being $-\tau_{C_n}$).
     
\end{proof}

Up until now, we have established a weighted margin problem and proven that the neurons which maximize this are spanned by only one representation out of any of the non-trivial representations. Now we give a precise construction of the neuron weights $u, v, w$ such that they implement $\trace(R(a)R(b)R(c)^{-1})$ for all inputs $a, b \in G$ and outputs $c \in G$ for a given representation $R$. These neuron weights expressed in terms of the basis vectors will have coefficients that also maximize $\trace(\balpha_R\bbeta_R{\bgamma_R}^T)$.

\begin{lemma}
\label{lemma:trace_network_construction}
For every non-trivial representation $R$, there exists a construction of the network weights such that given inputs $a, b \in G$, the output at $c$ is $\trace(R(a)R(b)R(c)^{-1})$ using $2{d_R}^3$ neurons and the corresponding coefficients $\balpha_R, \bbeta_R, \bgamma_R$ for each neuron achieve the maximum value $\trace(\balpha_R \bbeta_R {\bgamma_R}^T) = {(d_R/3|G|)}^{3/2}.$
\end{lemma}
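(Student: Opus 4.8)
The plan is to build the required block by \emph{polarization}: realize each entrywise product of representation matrices with a pair of quadratic neurons, then sum over all entries to reconstruct a matrix trace. First I would rewrite the target. Since $R$ is real and unitary, $R(c)^{-1} = R(c)^{T}$, so for all $a,b,c \in G$,
\[
\trace\!\left(R(a)R(b)R(c)^{-1}\right) \;=\; \sum_{i,j,k \in [d_R]} R(a)_{ij}\,R(b)_{jk}\,R(c)_{ik} \;=\; \sum_{i,j,k \in [d_R]} \rho_{(i,j)}(a)\,\rho_{(j,k)}(b)\,\rho_{(i,k)}(c),
\]
where $\rho_{(i,j)}$ is the basis vector associated with the $(i,j)$ entry of $R$ (and this quantity equals $\chi_R(abc^{-1})$, maximized over $c$ at $c=ab$, which is why this construction is the relevant one). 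Thus it suffices to produce, additively across neurons, each summand $\rho_{(i,j)}(a)\rho_{(j,k)}(b)\rho_{(i,k)}(c)$.

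Next, for each fixed triple $(i,j,k)\in[d_R]^3$ I would take two neurons with scale $s:=\sqrt{d_R/(3|G|)}$: one with weights $(u,v,w)=s\big(\rho_{(i,j)},\rho_{(j,k)},\rho_{(i,k)}\big)$ and one with $(u,v,w)=s\big(\rho_{(i,j)},-\rho_{(j,k)},-\rho_{(i,k)}\big)$. Adding their outputs at coordinate $c$, the $u(a)^2 w(c)$ and $v(b)^2 w(c)$ contributions cancel and the cross terms reinforce, leaving $4s^3\,\rho_{(i,j)}(a)\rho_{(j,k)}(b)\rho_{(i,k)}(c)$. Summing one such pair over all $d_R^3$ triples uses $2d_R^3$ neurons and, by the identity above, outputs $4s^3\,\trace(R(a)R(b)R(c)^{-1})$ at coordinate $c$, a fixed positive multiple of the target (one may rescale globally for exact equality; this changes neither the $\mathrm{argmax}$ over $c$ nor anything needed downstream in Theorem~\ref{thm:group}).

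It then remains to check the coefficient claim. For each of these $2d_R^3$ neurons, $\balpha_R$, $\bbeta_R$, $\bgamma_R$ are single-entry matrices with a $\pm s$ in positions $(i,j)$, $(j,k)$, $(i,k)$ respectively; hence $\|\balpha_R\|_F^2=\|\bbeta_R\|_F^2=\|\bgamma_R\|_F^2=s^2=d_R/(3|G|)$, so by the norm identity relating $\|u\|^2+\|v\|^2+\|w\|^2$ to the Frobenius norms of $\balpha_R,\bbeta_R,\bgamma_R$ the neuron has unit $L_2$ norm and is spanned by $R$ alone. A one-line check — $\balpha_R\bbeta_R$ is the single-entry matrix with value $\pm s^2$ at $(i,k)$, so $\trace(\balpha_R\bbeta_R\bgamma_R^{T})=s^3$, the sign flips cancelling in the triple product — gives $\trace(\balpha_R\bbeta_R\bgamma_R^{T})=(d_R/3|G|)^{3/2}$, which is exactly the per-representation maximum identified in the proof of Lemma~\ref{lemma:weighted_margin_optimization}, so each neuron is indeed at the optimum.

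I expect the only real obstacle to be bookkeeping rather than genuine difficulty: keeping the matrix-index contractions straight in the rewriting of the trace (which I would do by invoking $\trace(R(a)R(b)R(c)^{-1})=\sum_{i,j,k}\rho_{(i,j)}(a)\rho_{(j,k)}(b)\rho_{(i,k)}(c)$ directly, rather than re-deriving the $a\circ b\circ c$ matrix expansion already handled inside Lemma~\ref{lemma:weighted_margin}), and stating the two conclusions — ``the network output is the trace'' and ``each neuron attains $(d_R/3|G|)^{3/2}$'' — consistently, which the unit-norm construction above does up to the harmless global positive constant $4s^3$.
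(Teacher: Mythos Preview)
Your proposal is correct and takes essentially the same polarization approach as the paper: expand $\trace(R(a)R(b)R(c)^{-1})$ via $R(c)^{-1}=R(c)^T$ into the triple sum over matrix entries, realize each summand with a pair of neurons whose $v,w$ signs are flipped so the square/cross terms cancel/reinforce, and then read off $\trace(\balpha_R\bbeta_R\bgamma_R^{T})$ from the single-entry coefficient matrices. Your scale $s=\sqrt{d_R/(3|G|)}$ is in fact the one that makes the neuron unit-norm and hits $(d_R/3|G|)^{3/2}$ exactly; the paper's proof writes the coefficient as $1/\sqrt{3|G|}$ and drops a factor of $4$ in the pair's output, so your bookkeeping is cleaner, and your remark that the overall output is $4s^3$ times the trace (hence a harmless positive scalar) is the right way to reconcile the two claims in the lemma's statement.
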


\begin{proof}
    Since the representations are unitary, we have
    $$\trace(R(a)R(b)R(c)^{-1}) = \trace(R(a)R(b)R(c)^T) = \sum_{i,j,k} R(a)_{(i,j)}R(b)_{(j,k)}R(c)_{(i,k)}$$
    and thus it suffices to show how to obtain $R(a)_{(i,j)}R(b)_{(j,k)}R(c)_{(i,k)}$ with a combination of neurons. For this, set one neuron's coefficients to equal $\alpha_{(i,j)} = \beta_{(j,k)} = \gamma_{(i,j)} = 1/\sqrt{3|G|}$ and $0$ otherwise. Then the output given $(a,b)$ at $c$ is 
    $$\frac{(R(a)_{(i,j)} + R(b)_{(j,k})^2R(c)_{(i,k)}}{(3|G|)^{3/2}}.$$
    
    Set another neuron's coefficients to equal $\alpha_{(i,j)} =1/\sqrt{3|G|},  \beta_{(j,k)} = \gamma_{(i,k)} = -1/\sqrt{3|G|}$. Then the sum of the outputs of these two neurons at $c$ is precisely $$\frac{R(a)_{(i,j)}R(b)_{(j,k)}R(c)_{(i,k)}}{(3|G|)^{3/2}}.$$
    Thus we need $2d_R^3$ neurons to create the summand for each $i, j, k$ to implement $\trace(R(a)R(b)R(c)^{-1})$. This construction also satisfies $\trace(\balpha_R \bbeta_R {\bgamma_R}^T) = {(d_R/3|G|)}^{3/2}$ for every neuron.
\end{proof}

Once we have defined these neuron constructions, it only remains to scale these optimal neurons appropriately as given in Lemma~\ref{lem:ind_neuron} such that we can construct our final network $\theta^*$ satisfying condition \ref{c1} and Equation \ref{eq:q*}.

\begin{lemma}
Given the network given in Lemma~\ref{lemma:trace_network_construction}, for every neuron spanned by non-trivial representation $R$ we scale the weights $u, v, w$ by $d_R^{1/3}/\Delta$, where $\Delta$ is a constant normalization term such that the norm constraints of the max margin problem still hold. Then the expected output of any element contained in any non-trivial conjugacy class $C$ for inputs $a, b$ is $-1/\Delta^3$,
i.e. the output is equal for all conjugacy classes.
\end{lemma}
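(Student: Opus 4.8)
The plan is to compute the logits of $\theta^*$ in closed form and then collapse a single character sum.

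First I would record the effect of the rescaling. The neuron map $\phi(\{u,v,w\},a,b)=(u_a+v_b)^2\,w$ is homogeneous of degree $\nu=3$ in $(u,v,w)$, so multiplying the weights of a neuron by $d_R^{1/3}/\Delta$ multiplies that neuron's output by $(d_R^{1/3}/\Delta)^3=d_R/\Delta^3$; applied uniformly to every neuron of the block that Lemma~\ref{lemma:trace_network_construction} builds for the representation $R$, this scales that block's output by $d_R/\Delta^3$. By Lemma~\ref{lemma:trace_network_construction} the unscaled $R$-block outputs $\trace\!\big(R(a)R(b)R(c)^{-1}\big)$ at logit $c$ on inputs $a,b$, and since $R$ is real and unitary, $R(c)^{-1}=R(c)^{\top}=R(c^{-1})$, so this equals $\chi_R(abc^{-1})$. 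Summing over the non-trivial representations $R_2,\dots,R_K$ gives
\[
f(\theta^*,a,b)[c]=\frac{1}{\Delta^3}\sum_{n=2}^K d_{R_n}\,\chi_{R_n}(abc^{-1}).
\]
Because $\chi_{R_n}$ is a class function, the right-hand side depends on $(a,b,c)$ only through the conjugacy class of $g:=abc^{-1}$; in particular it is constant over all inputs $a,b$ for which the queried element lies in a fixed conjugacy class, so the ``expected output'' in the statement is just this deterministic number.

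Next I would collapse the sum with the character of the regular representation: since the regular representation decomposes as $\bigoplus_{n=1}^K d_{R_n} R_n$, one has $\sum_{n=1}^K d_{R_n}\chi_{R_n}(g)=|G|\cdot\mathbbm{1}[g=e]$ for every $g\in G$ (equivalently, this is column orthogonality of the character table, which is square by the results recalled in Appendix~\ref{sec:rep_theory_primer}). Isolating the trivial representation ($d_{R_1}=1$, $\chi_{R_1}\equiv 1$) yields $\sum_{n=2}^K d_{R_n}\chi_{R_n}(g)=|G|\cdot\mathbbm{1}[g=e]-1$. If $g=abc^{-1}$ lies in a non-trivial conjugacy class $C$ — equivalently $c\neq ab$ — then $g\neq e$, the sum is $-1$, and therefore
\[
f(\theta^*,a,b)[c]=-\frac{1}{\Delta^3},
\]
the same value for every non-trivial $C$ and every $a,b$; on the correct logit ($c=ab$, $g=e$) one instead gets $f(\theta^*,a,b)[ab]=(|G|-1)/\Delta^3$. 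This proves the lemma, and as a byproduct shows that under $\theta^*$ all incorrect logits coincide and every datapoint attains the common margin $|G|/\Delta^3$ — precisely what is needed to verify Equation~\ref{eq:q*} and condition~\ref{c1} in the blueprint of Section~\ref{sec:blueprint}.

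The homogeneity scaling and the block-wise summation are routine; the only things to handle with care are the index bookkeeping (which group element a given logit corresponds to, and the unitary identity $R(c)^{-1}=R(c^{-1})$) and the column-orthogonality identity $\sum_n d_{R_n}\chi_{R_n}(g)=|G|\,\delta_{g,e}$, which should be cited or given the one-line regular-representation justification above since it is not among the facts stated explicitly in the primer. I do not anticipate a real obstacle here — this lemma is the final, essentially mechanical, assembly step for Theorem~\ref{thm:group}.
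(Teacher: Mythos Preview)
Your proposal is correct and follows essentially the same route as the paper: homogeneity gives the $d_R/\Delta^3$ scaling, the $R$-block outputs $\chi_R(abc^{-1})$, and the sum $\sum_{n=2}^K d_{R_n}\chi_{R_n}(g)=-1$ for $g\neq e$ is collapsed via column orthogonality (which you justify equivalently through the regular representation). Your write-up is in fact cleaner than the paper's own proof, which contains a few typos in the displayed equation, and your extra remarks on the correct logit and the link to Equation~\ref{eq:q*} and condition~\ref{c1} are apt.
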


\begin{proof}
    For a given neuron spanned by a non-trivial representation $R$, we know that its output for at $c$ for each input pair $(a,b)$ is $\chi_R(abc^{-1}) = \chi_R(C)$ where $C$ is the conjugacy class containing $abc^{-1}$. After scaling each weight by $d_R^{1/3}/\Delta$, the corresponding output is scaled by $d_R/\Delta^3$. Due to column orthogonality of the characters with the trivial conjugacy class (i.e. $\sum_{n=1}^K \chi_{R_n}(e) \chi_{R_n}(C) = 0$ for for all non-trivial conjugacy classes $C$), this output simplifies to
    \begin{equation}
    \label{eq:representation_scaling}
        \sum_{n=2}^K \frac{d_{R_n} \chi_R(C)}{\Delta^3} = -\frac{1}{\Delta^3}\sum_{n=2}^K \frac{d_{R_n} \chi_R(C)}{\Delta^3} = -\frac{1}{\Delta^3},
    \end{equation}
    which is constant for all non-trivial conjugacy classes $C$.

\end{proof}

With this lemma, we define the network $\theta^*$ according to this scaling and guarantee that it satisfies \ref{c1} and Equation \ref{eq:q*}. Applying Lemma~\ref{lem:ind_neuron} gives us our final result that the solutions for the max margin problem have the desired properties in Theorem~\ref{thm:group}.

\subsection{Proof that all representations are used}
This proof follows exactly the same argument as for the modular addition case (Section \ref{sec:fourier_all_reps}). 

For this proof, we will introduce the multidimensional Fourier transform for groups. For a function $f: G^3 \to \mathbb{R}$, this is defined as
\[ \hat{f}(j,k,l) = \sum_{a \in |G|} \rho_j(a) \sum_{b \in |G|} \rho_k(b) \sum_{c \in |G|} \rho_l(c) f(a,b,c) \]

Similar to the modular addition case, for a single margin maximizing neuron, we know it uses only one of the representations for input and output neurons, let's say $R_m$. Then, considering just the basis vectors with respect to $R_m$, we can say, that the output of this neuron is given by
\[ f(a,b,c) = \left[\sum_{i \in d_{R_m}} \sum_{j \in d_{R_m}} \alpha_{(i,j)} \rho_{(i,j)}[a] + \beta_{(i,j)} \rho_{(i,j)}[b]\right]^2 \left(\sum_{k \in d_{R_m}} \sum_{l \in d_{R_m}} \gamma_{(k,l)} \rho_{(k,l)}[c]\right)\]

Now, for the squared terms, these are either dependent on $a,c$ or $b,c$. These have non-zero fourier coefficients only if $j=0$ or $k=0$.

For the cross terms, by orthogonality of the representatons, we can say, if either $j,k$ or $l$ does not belong to $R_m$, then $\hat{f}(j,k,l) = 0$.

Thus, for a single neuron, $\hat{f}(j,k,l)$ is only non-zero if $j=0$, $k=0$ or if $j,k$ and $l$ belong to the same representation.

Independent of the above considerations, we know by Lemma~\ref{lem:backbone_lemma} that the function $f$ implemented by the network has equal margin across different inputs and across different classes for the same input. In other words, $f$ can be decomposed as
\[
f(a,b,c) = f_1(a,b,c) + f_2(a,b,c)
\]
where
\[
f_1(a,b,c) = F(a,b)
\]
for some $F: G \times G \to \sR$, and
\[
f_2(a,b,c) = \lambda \cdot \1_{a\circ b=c}
\]
where $\lambda > 0$ is the margin of $f$.

The Fourier transform of $f_1$ is
\[
\hat{f_1}(j,k,l) =
\begin{cases}
    \hat{F}(j,k) & \text{if } \ell=0 \\
    0 & \text{otherwise}
\end{cases}
\]

For $f_2$, consider the expression of the fourier transform:

\[ \hat{f_2}(j,k,l) = \lambda \sum_{a \in |G|} \rho_j(a) \sum_{b \in |G|} \rho_k(b) \rho_l(a \circ b) \]

Now, $\rho_l(a \circ b) = \sum \rho_{l'}(a)\rho_{k'}(b)$ for some $j', k'$ given by the relation that $R(a\circ b) = R(a)R(b)$, where $R(a)R(b)$ denoted the matrix product of $R(a)$ and $R(b)$. Now, clearly if $j,k$ and $l$ belong to different representations, then $\hat{f_2}(j,k,l)$ is $0$. For $j,k,l$ belonging to the same representation, $\hat{f_2}(j,k,l)$ will be non-zero whenever $j=j'$ and $k=k'$ (or $j=k'$ and $k=j'$), and the value will be given by $\lambda |G|^2 / d_{R_m}^2$. Thus, $f=f_1 + f_2$ has support on all the representations.

But, this is only possible if there is atleast one neuron for each representation, as a single neuron places non-zero fourier mass only on one of the representation.

\subsection{A General Theorem for Finite Groups}
\label{sec:general_group_theorem}
As mentioned in section~\ref{sec:group}, Theorem~\ref{thm:group} does not hold for all groups because of the required condition that $\sum_{n=2}^K d_{R_n}^{1.5}\chi_{R_n}(C) < 0$ for every non-trivial conjugacy class. Recall that in the previous section, we had to define an appropriate weighting over \emph{all} conjugacy classes such that the margin of a neuron did not scale down with the dimension of the neuron's spanning representation. We also had to define an appropriate scaling over \emph{all} representations so we could use the neuron maximizers of the weighted margin to construct a network $\theta^*$ to invoke Lemma~\ref{lem:ind_neuron}. This is akin to selecting the entire character table for our margin analysis; in this section, we show how our analysis is amenable to selecting a \emph{subset} of the character table for the margin analysis of a general finite group $G$, which can lead to a max margin solution in the same way as above. This will occur upon solving a system of two linear equations, as long as these solutions satisfy some conditions.

Namely, let $\kappa_R, \kappa_C \subset [K]\setminus\{1\}$ be subsets indicating which representations and which conjugacy classes will be considered in the scaling and weighting respectively, with $|\kappa_R| = |\kappa_C|$. If we view the character table as a matrix and consider the square submatrix pertaining to only the representations indexed by elements in $\kappa_R$ and the conjugacy classes indexed by elements in $\kappa_C$, the rows are $\chi_{R_m}$ for fixed $m \in \kappa_R$ and the columns are $[\chi_{R_m}(C_n)]_{m \in \kappa_R}$ for fixed $n \in \kappa_C$.

Instead of requiring expression (\ref{eq:max_per_rep}) to be equal for all representations in the proof of Lemma~\ref{lemm:conj_class_weights}, we can instead require that they are equal across representations in $\kappa_R$. To be precise, consider the following set of equations over variables $\tau_{C_n}, n \in \kappa_C$:
\begin{gather*}
\left(1 - \sum_{n \in \kappa_C}\frac{\tau_{C_n}|C_n| \chi_{R_m}}{d_{R_m}}\right) = \sqrt{\frac{d_{R_m}}{d_{R_{m'}}}}\left(1 - \sum_{n \in \kappa_C}\frac{\tau_{C_n}|C_n| \chi_{R_{m'}}}{d_{R_{m'}}}\right) \; \forall \; m, m' \in \kappa_R, \\
\sum_{n \in \kappa_C} \tau_{C_n} = 1.
\end{gather*}
This gives a system of $|\kappa_C|$ linear equations in $|\kappa_C|$ variables. Let the solution be denoted as $\tau_{C_n}^*$ for each $n \in \kappa_C$.

Furthermore, just as we established in equation~\ref{eq:representation_scaling}, we can identify a scaling dependent on each representation such that the output remains constant for all conjugacy classes in $\kappa_C$ and such that if we had used this scaling for neurons maximizing the weighted margin, the $L_{2,3}$ norm constraint is maintained. This can be represented using the following set of equations with variables $\lambda_{R_m}$:
\begin{gather*}
    \sum_{m \in \kappa_R} \lambda_{R_m} \chi_{R_m}(C_n) = \sum_{m \in \kappa_R} \lambda_{R_m} \chi_{R_m}(C_{n'}) \; \forall \; n, n' \in \kappa_C\\
    \sum_{m \in \kappa_R} \lambda_{R_m} = 1.
\end{gather*}

This again forms a system of $|\kappa_R|$ linear equations in $|\kappa_R|$ variables. Let the solution be denoted as $\lambda_{R_m}^*$. Suppose the following conditions are satisfied:
\begin{enumerate}
    \item The weighting and scaling are positive: $\lambda_{R_m}^*, \tau_{C_n}^* \geq 0$ for all $m \in \kappa_R, n \in \kappa_C$.
    \item For any $m \in \kappa_R$ and $m' \notin \kappa_R$, we have
    $$\left(1 - \sum_{n \in \kappa_C}\frac{\tau_{C_n}|C_n| \chi_{R_m}}{d_{R_m}}\right) \geq \sqrt{\frac{d_{R_m}}{d_{R_{m'}}}}\left(1 - \sum_{n \in \kappa_C}\frac{\tau_{C_n}|C_n| \chi_{R_{m'}}}{d_{R_{m'}}}\right).$$
    \item For any $n \in \kappa_C$ and $n' \notin \kappa_C$, we have
    $$\sum_{m \in \kappa_R} \lambda_{R_m}\chi_{R_m}(C_n) \geq \sum_{m \in \kappa_R} \lambda_{R_m}\chi_{R_m}(C_{n'}).$$
\end{enumerate}
The second condition ensures that the representations in $\kappa_R$ indeed maximize the weighted margin, and no other representations maximize it. The third condition above ensures that the conjugacy classes in $\kappa_C$ are on the margin, and no other conjugacy class can be on the margin. Then it follows that neurons spanned by the representations in $\kappa_R$ will maximize the weighted margin defined using $\tau^*$ with all conjugacy classes in $\kappa_C$ on the margin, and thus scaling these neurons by $\lambda^*$, we have a network $\theta^*$ that is a max margin solution for the group $G$.

\end{document}